\newcommand{\restatehack}[1]{}   
\newcolumntype{R}{>{$}r<{$}} 
\newcolumntype{C}{>{$}c<{$}}
\definecolor{darkblue}{rgb}{0.0,0.0,0.2}
\definecolor{darkgreen}{rgb}{0.0,0.3,0.0}
\newcommand{\Comments}{1}
\newcommand{\mynote}[2]{\ifnum\Comments=1\textcolor{#1}{#2}\fi}
\newcommand{\mytodo}[2]{\ifnum\Comments=1
  \todo[linecolor=#1!80!black,backgroundcolor=#1,bordercolor=#1!80!black]{#2}\fi}
\newcommand{\reals}{\mathbb{R}}
\newcommand{\dom}{\mathrm{dom}}
\newcommand{\prop}[1]{\mathrm{prop}[#1]}
\newcommand{\affhull}{\mathrm{affhull}}
\newcommand{\cell}{\mathrm{cell}}
\newcommand{\mode}{\mathrm{mode}}
\newcommand{\simplex}{\Delta_\Y}
\newcommand{\F}{\mathcal{F}}
\renewcommand{\H}{\mathcal{H}}
\newcommand{\R}{\mathcal{R}}
\newcommand{\Sc}{\mathcal{S}}
\newcommand{\U}{\mathcal{U}}
\newcommand{\V}{\mathcal{V}}
\newcommand{\X}{\mathcal{X}}
\newcommand{\Y}{\mathcal{Y}}
\newcommand{\risk}[1]{\underline{#1}}
\newcommand{\inprod}[2]{\langle #1, #2 \rangle}
\newcommand{\inter}{\mathrm{inter}}
\newcommand{\toto}{\rightrightarrows}
\newcommand{\trim}{\mathrm{trim}}
\newcommand{\red}{\mathrm{red}}
\newcommand{\trimred}{\mathrm{trim}}
\newcommand{\trimcover}{\mathrm{trim}}
\newcommand{\hyp}{\mathrm{hypo}}
\newcommand{\ones}{\mathbbm{1}}
\DeclarePairedDelimiter\ceil{\lceil}{\rceil}
\newcommand{\regret}[3]{R_{#1}(#2,#3)}
\newcommand{\Ind}[1]{\ones\{#1\}}
\newcommand{\hinge}{L_{\mathrm{hinge}}}
\newcommand{\ellzo}{\ell_{\text{0-1}}}
\newcommand{\ellabs}[1]{\ell_{#1}}
\newcommand{\emb}{{\tt e}}
\DeclareMathOperator*{\argmax}{arg\,max}
\DeclareMathOperator*{\argmin}{arg\,min}
\DeclareMathOperator*{\sgn}{sgn}
\newtheorem{theorem}{Theorem}
\newtheorem{lemma}{Lemma}
\newtheorem{proposition}{Proposition}
\newtheorem{corollary}{Corollary}
\newtheorem{claim}{Claim}
\newtheorem{definition}{Definition}
\newtheorem{construction}{Construction}
\newtheorem{assumption}{Assumption}
\title{An Embedding Framework for Consistent Polyhedral Surrogates}
\author{
 Jessie Finocchiaro \\
 \texttt{jefi8453@colorado.edu}
 \and
 Rafael Frongillo\\
 \texttt{raf@colorado.edu}
 \and
 Bo Waggoner\\
 \texttt{bwag@colorado.edu}
}
\date{
  University of Colorado Boulder
  \\[15pt]
  \today
}
\begin{document}

\maketitle

\begin{abstract}
We formalize and study the natural approach of designing convex surrogate loss functions via embeddings, for problems such as classification, ranking, or structured prediction. 
In this approach, one embeds each of the finitely many predictions (e.g.\ rankings) as a point in $\reals^d$, assigns the original loss values to these points, and ``convexifies'' the loss in some way to obtain a surrogate.
We establish a strong connection between this approach and polyhedral (piecewise-linear convex) surrogate losses.
Given any polyhedral loss $L$, we give a construction of a link function through which $L$ is a consistent surrogate for the loss it embeds.
Conversely, we show how to construct a consistent polyhedral surrogate for any given discrete loss.
Our framework yields succinct proofs of consistency or inconsistency of various polyhedral surrogates in the literature, and for inconsistent surrogates, it further reveals the discrete losses for which these surrogates are consistent.
We show some additional structure of embeddings, such as the equivalence of embedding and matching Bayes risks, and the equivalence of various notions of non-redudancy.
Using these results, we establish that indirect elicitation, a necessary condition for consistency, is also sufficient when working with polyhedral surrogates.
\end{abstract}

\section{Introduction}\label{sec:intro}

In supervised learning, one tries to learn a hypothesis which fits some labeled data, as judged by a target loss function.
Unfortunately, minimizing the target loss directly is typically computationally intractable, especially for discrete prediction tasks like classification, ranking, and structured prediction.
Instead, one typically minimizes a surrogate loss which is convex and therefore efficiently minimized.
Given a surrogate hypothesis, a link function then translates back to the target problem.
This general approach, called surrogate risk minimization, is ubiquitous in supervised machine learning algorithms.

A growing body of work seeks to design and analyze convex surrogates for particular target loss functions, and more broadly, understand the best empirical risk minimization bounds that can be found for a surrogate, for which consistency is a necessary condition.
For example, recent work has developed tools to bound the prediction dimension of the surrogate, meaning the dimension of the range of the surrogate hypothesis~\cite{frongillo2015elicitation,  ramaswamy2016convex}.
Yet in some cases these bounds are far from tight, such as for \emph{abstain loss} (classification with an abstain option)~\citep{bartlett2008classification,yuan2010classification,ramaswamy2016convex,ramaswamy2018consistent,zhang2018reject}.
Furthermore, the kinds of strategies available for constructing surrogates, and their relative power, are not well understood.

We augment this literature by studying a particularly natural approach for finding convex surrogates, wherein one ``embeds'' a discrete loss.
Specifically, we say a convex surrogate $L$ embeds a discrete loss $\ell$ if there is an injective embedding from the discrete reports (predictions) to a vector space such that (i) the original loss values are recovered, and (ii) a report is $\ell$-optimal if and only if the embedded report is $L$-optimal.
If this embedding can be extended to a calibrated link function, which roughly maps approximately $L$-optimal reports to $\ell$-optimal reports, then consistency follows~\citep{agarwal2015consistent}.
Common examples of this general construction include hinge loss as a surrogate for 0-1 loss and the abstain surrogate mentioned above~\citep{ramaswamy2018consistent}.

We prove that such an embedding scheme is intimately related to the class of polyhedral (piecewise-linear and convex) loss functions.
In particular, every discrete loss is embedded by a polyhedral surrogate.
Moreover, such an embedding gives rise to calibrated link function, and is therefore consistent with respect to the target loss.
Our proofs give explicit constructions for the surrogate (\S~\ref{sec:poly-loss-embed}) and link (\S~\ref{sec:calibration}) embedding a given discrete loss.

\restatehack{
  \begin{theorem}
    \label{thm:embed-poly-main}
    \label{thm:link-main}
  \end{theorem}}

\begin{restatable}{theorem}{embedpolyinformal}\label{thm:embed-poly-main}
  Every discrete loss $\ell$ is embedded by some polyhedral loss $L$, and every polyhedral loss $L$ embeds some discrete loss $\ell$.
\end{restatable}
\begin{restatable}{theorem}{linkinformal}\label{thm:link-main}
  Given any polyhedral loss $L$, let $\ell$ be a discrete loss it embeds. There exists a link function $\psi$ such that $(L,\psi)$ is calibrated with respect to $\ell$.
\end{restatable}

To better understand existing polyhedral surrogates, we provide tools to find the discrete losses they embed (Proposition~\ref{prop:representative-embeds-restriction}).
In short, if one can identify a finite \emph{representative set} $\Sc$ of reports for a surrogate $L$, meaning $\Sc$ always contains an $L$-optimal report for any label distribution, then $L$ embeds $L|_\Sc$, the loss given by $L$ restricting to $\Sc$.

Underpinning our results are several observations which formalize the idea that polyhedral losses ``behave like'' discrete losses.
For example, discrete losses have polyhedral Bayes risks (as the minimum of finitely many linear functions), as do polyhedral losses (Lemma~\ref{lem:polyhedral-range-gamma}).
As a consequence, polyhedral losses always have finite representative sets, and restricting the loss to any such set is an embedding.

We also provide several observations beyond what is needed to prove our main results, which we view as conceptual contributions (\S~\ref{sec:min-rep-sets},~\ref{sec:poly-ie-consistency}).
Using tools from property elicitation, we show an equivalence between minumum reprosentative sets and ``non-redundancy'', wherein no report is dominated by another.
We further show that, while the minimum representative set is not always unique, the loss values associated with it are unique, giving rise to a natural ``trim'' operation on losses.
Finally, using our main results, we show the following result: when restricting to the class of polyhedral surrogates, indirect elicitation is both necessary and sufficient for consistency (Theorem~\ref{thm:poly-ie-implies-consistent}).

Taken together, we view our contribution as both conceptual and practical.
We uncover the remarkable structure of polyhedral surrogates, deepening our understanding of the relationship between surrogate and discrete target losses.
This structure leads to a powerful new framework to design and analyze surrogate losses, which we apply to several examples.
We hope our framework will inspire new research, and we conclude with several exciting directions for future work.

\paragraph{Related works.}
The literature on convex surrogates focuses mainly on smooth surrogate losses~\citep{crammer2001algorithmic,bartlett2006convexity,bartlett2008classification, duchi2018multiclass, williamson2016composite, reid2010composite,menon2019multilabel,zhang2020convex,bao2020calibrated}.
Nevertheless, nonsmooth losses, such as the polyhedral losses we consider, have been proposed and studied for a variety of classification-like problems~\citep{yang2018consistency,yu2018lovasz,lapin2015top}.
Moreover,~\citet{zhang2020bayes} describe the impact of the hypothesis class has on consistency, and when consistency relative to the hypothesis class differs from Bayes consistency; the latter is what we describe in this paper when we say ``consistency.''

\citet{ramaswamy2018consistent} offer a notable addition to this literature is, arguing that nonsmooth losses may enable dimension reduction of the prediction space (range of the surrogate hypothesis) relative to smooth losses (cf.~\citep[Section 1.2]{ramaswamy2018consistent}).
They illustrate this phenomenon with a surrogate for \emph{abstain loss} needing only $\log(n)$ dimensions for $n$ labels, whereas the best known smooth loss needs $n-1$ dimensions.
Their surrogate is a natural example of an embedding (cf.~\S~\ref{sec:abstain}), and serves as inspiration for our work.

While property elicitation has by now an extensive literature~\citep{savage1971elicitation,osband1985information-eliciting,lambert2008eliciting,gneiting2011making,steinwart2014elicitation,frongillo2015vector-valued,fissler2016higher,lambert2018elicitation}, these works are mostly concerned with point estimation problems.
Literature directly connecting property elicitation to consistency is sparse.
However,~\citet{agarwal2015consistent} consider single-valued properties in finite outcome settings, whereas finite properties elicited by general convex losses are necessarily set-valued.
\citet{finocchiaro2021unifying} additionally relates indirect property elicitation to consistency when one is given either a target loss or property in both discrete and continuous prediction settings, assuming surrogates are minimizable, or attain their infimum in expectation over all distributions over the outcomes.

\section{Setting}
\label{sec:setting}

For discrete prediction problems like classification, the given discrete loss is often hard to optimize directly.
Therefore, many machine learning algorithms instead minimize a surrogate loss function with better optimization qualities, such as convexity.
To ensure that this surrogate loss successfully addresses the original problem, one needs to establish statistical consistency, a minimal requirement that is a prerequisite for generalization bounds.
Consistency depends crucially on the choice of link function that maps surrogate reports (predictions) to original reports.
The notion of \emph{calibration} (Definition~\ref{def:calibrated}) is equivalent to consistency in finite outcome settings~\citep{bartlett2006convexity,tewari2007consistency,ramaswamy2016convex} and depends solely on the conditional distribution over $\Y$.

\subsection{Notation and Losses}
\label{sec:notation-losses}

Let $\Y$ be a finite label space, and throughout let $n=|\Y|$.
Define $\reals^\Y_+$ to be the nonnegative orthant in $\reals^\Y$, i.e., $\reals^\Y_+ = \{x \in \reals^\Y \mid \forall y\in\Y\; x_y \geq 0 \}$.
Let $\simplex = \{p\in\reals^{\Y}_+ \mid \|p\|_1 = 1\}$ be the set of probability distributions on $\Y$, represented as vectors.
We will primarily focus on conditional distributions $p\in\simplex$ over labels, abstracting away the feature space $\X$; see \S~\ref{subsec:calibration-links} for a discussion of the joint distribution over $\X\times\Y$.

A generic loss function, denoted $L:\R\to\reals^\Y_+$, maps a report (prediction) $r$ from a set $\R$ to the vector of loss values $L(r) = (L(r)_y)_{y\in\Y}$ for each possible outcome $y\in\Y$.
We write the corresponding expected loss when $Y \sim p$ as $\inprod{p}{L(r)}$.
The \emph{Bayes risk} of a loss $L:\R\to\reals^\Y_+$ is the function $\risk{L}:\simplex\to\reals_+$ given by $\risk{L}(p) := \inf_{r\in\R} \inprod{p}{L(r)}$.
When restricting the domain of a loss $L$ from $\R$ to $\R'$, we write $L|_{\R'}$.

We assume that a given discrete prediction problem, such as classification, is given in the form of a discrete \emph{target loss} where $\R$ is a finite set.
We will denote target losses by $\ell:\R\to\reals^\Y_+$; when $\ell$ is written we assume $\R$ is a finite set.
Surrogate losses will take $\R = \reals^d$ and be written $L:\reals^d\to\reals^\Y_+$, typically with reports written $u\in\reals^d$.

For example, 0-1 loss is a discrete loss with $\R = \Y = \{-1,1\}$
given by $\ellzo(r)_y = \Ind{r \neq y}$, with Bayes risk $\risk{\ellzo}(p) = 1-\max_{y\in\Y} p_y$.
Two important surrogates for $\ellzo$ are hinge loss $\hinge(u)_y = (1-yu)_+$, where $(x)_+ = \max(x,0)$, and logistic loss $L(u)_y = \log(1+\exp(-yu))$ for $u\in\reals$.
See Figure~\ref{fig:bayes-risks-01} for a visualization of the Bayes risks of 0-1, Hinge, and Logistic losses, respectively.

Most of the surrogate losses we consider will be \emph{polyhedral}, meaning piecewise linear and convex; we therefore briefly recall the relevant definitions.
In $\reals^d$, a \emph{polyhedral set} or \emph{polyhedron} is the intersection of a finite number of closed halfspaces.
A \emph{polytope} is a bounded polyhedral set.
A convex function $f:\reals^d\to\reals$ is \emph{polyhedral} if its epigraph is polyhedral, or equivalently, if it can be written as a pointwise maximum of a finite set of affine functions~\citep{rockafellar1997convex}.
\begin{definition}[Polyhedral loss]
  A loss $L: \reals^d \to \reals^{\Y}_+$ is \emph{polyhedral} if $L(u)_y$ is a polyhedral convex function of $u$ for each $y\in\Y$.
\end{definition}
For example, hinge loss is polyhedral, whereas logistic loss is not.

\subsection{Property Elicitation}
\label{sec:property-elicitation}

To make headway, we will appeal to concepts and results from property elicitation.
This literature elevates the \emph{property}, or map from distributions to optimal reports, as a central object to study in its own right.
In our case, this map will often be set-valued, meaning a single distribution could yield multiple optimal reports.
(For example, when $p=(1/2,1/2)$, both $r=1$ and $r=-1$ optimize 0-1 loss.)
We will use double arrow notation to denote a (non-empty) set-valued map, so that $\Gamma: \simplex \toto \R$ is shorthand for $\Gamma: \simplex \to 2^{\R} \setminus \{\emptyset\}$.

\begin{definition}[Property, level set]\label{def:property}
  A \emph{property} is a function $\Gamma:\simplex\toto\R$.
  The \emph{level set} of $\Gamma$ for report $r$ is the set $\Gamma_r := \{p \in \simplex : r \in \Gamma(p)\}$.
\end{definition}

Intuitively, $\Gamma(p)$ is the set of reports which should be optimal for a given distribution $p$, and $\Gamma_r$ is the set of distributions for which the report $r$ should be optimal.
By optimal, we mean minimizing an associated loss function in expectation over $p$, which we formalize shortly.
Note that our definitions align such that discrete losses elicit finite properties (those with finite range). 
For example, the \emph{mode} is the 
property $\mode(p) = \argmax_{y\in\Y} p_y$, and captures the set of optimal reports for 0-1 loss: for each distribution over the labels, one should report the most likely label.
In this case we say 0-1 loss \emph{elicits} the mode, as we formalize below.

\begin{definition}[Elicits]
  \label{def:elicits}
  A loss $L:\R\to\reals^\Y_+$, \emph{elicits} a property $\Gamma:\simplex \toto \R$ if
  \begin{equation}
    \forall p\in\simplex,\;\;\;\Gamma(p) = \argmin_{r \in \R} \inprod{p}{L(r)}~.
  \end{equation}
  If $L$ elicits a property, it is unique and we denote it $\prop{L}$.
\end{definition}
Since we have defined a property $\Gamma$ to be nonempty, if the minimum of expected loss $\inprod{p}{L(\cdot)}$ is not attained for some $p \in \simplex$, then $L$ does not elicit a property.
We say that a loss $L$ is \emph{minimizable} if the infimum of $\inprod{p}{L(\cdot)}$ is attained for all $p \in \simplex$.

We will typically denote general properties and losses with $\Gamma$ and $L$, respectively.
For surrogate losses and properties, we will typically take $\R = \reals^d$.
For discrete target losses and properties, we will take $\R$ to be any finite set, and use lowercase notation $\gamma$ and $\ell$, respectively.
Any property $\gamma:\simplex\toto\R$ for a finite set $\R$ is called a \emph{finite property}.

\subsection{Calibration and Links}
\label{subsec:calibration-links}

To assess whether a surrogate and link function align with the original loss, we turn to the common condition of \emph{calibration}.
Roughly, a surrogate and link are calibrated if the best possible expected loss achieved by linking to an incorrect report is strictly suboptimal, which requires that the excess loss of some report is bounded by (a constant times) the excess loss of the linked report.

\begin{definition}
  \label{def:calibrated}
  Let discrete loss $\ell:\R\to\reals^\Y_+$, proposed surrogate $L:\reals^d\to\reals^\Y_+$, and link function $\psi:\reals^d\to\R$ be given.
  We say $(L,\psi)$ is \emph{calibrated} with respect to $\ell$ if
for all $p \in \simplex$,
  \begin{equation}
    \label{eq:calibrated}
  \inf_{u \in \reals^d : \psi(u) \not\in \gamma(p)} \inprod{p}{L(u)} > \inf_{u \in \reals^d} \inprod{p}{L(u)}~.
  \end{equation}
  If $(L, \psi)$ is calibrated with respect to $\ell$, we call $\psi$ a \emph{calibrated link.}
\end{definition}
It is well-known in finite-outcome settings that calibration is equivalent to \emph{consistency}, in the following sense (cf.~\citep{bartlett2006convexity,zhang2004statistical,agarwal2015consistent}).
Suppose we have the feature space $\X$ and label space $\Y$.
For any data distribution $D \in \Delta(\X \times \Y)$, let $L^*$ be the best possible expected $L$-loss achieved by any hypothesis $H:\X\to\reals^d$, and $\ell^*$ the best expected $\ell$-loss for any hypothesis $h:\X\to\R$, respectively.
We say $(L,\psi)$ is consistent with respect to $\ell$ if, for all data distributions $D \in \Delta(\X \times \Y)$, and all sequences of surrogate hypotheses $H_1,H_2,\ldots$ whose $L$-loss limits to $L^*$, the $\ell$-loss of the sequence $\psi\circ H_1,\psi \circ H_2, \ldots$ limits to $\ell^*$.

As Definition~\ref{def:calibrated} does not involve the feature space $\X$, we will drop it for the remainder of the paper.
Note that in the finite-outcome setting, calibration is necessary and sufficient for consistency from a generalization of~\citet{tewari2007consistency} given by~\citet{ramaswamy2016convex}.

\subsection{Embedding}

We now formalize the sense in which a convex surrogate can \emph{embed} a target loss $\ell$.
Here one maps each report (prediction) of $\ell$ to a point in $\reals^d$, then constructs a convex loss on $\reals^d$ that agrees with $\ell$ at these points.
This approach captures several consistent surrogates in the literature (e.g.,~\citep{ramaswamy2015hierarchical,ramaswamy2016convex,lapin2015top,wang2020weston}).

An important subtlety is that it is not always necessary to map \emph{all} target reports to $\reals^d$.
It is often convenient to allow $\ell$ to have reports that are ``redundant'' in some sense. (We explore redundancy further in \S~\ref{sec:min-rep-sets}; see also \citet{wang2020weston}.)
Because of this redundancy, we will only require an embedding map to be defined on a \emph{representative set}: a set of reports $\Sc$ such that, for all label distributions, at least one report $r\in\Sc$ minimizes expected loss.
\begin{definition}[Representative set]
  Let $\Gamma:\simplex\toto\R$.
  We say $\Sc \subseteq \R$ is \emph{representative for $\Gamma$} if we have $\Gamma(p) \cap \Sc \neq \emptyset$ for all $p\in \simplex$.
  We further say $\Sc$ is a \emph{minimum representative set} if it has the smallest cardinality among all representative sets.
  Given a minimizable loss $L:\R\to\reals^\Y_+$, we say $\Sc$ is a (minimum) representative set for $L$ if it is a (minimum) representative set for $\prop L$.
\end{definition}

\citet{wang2020weston} first studies the notion of minimum representative sets under the name \emph{embedding cardinality}.

We now define an embedding.
In addition to matching loss values, as described above, we require the original reports to be optimal exactly when the corresponding embedded points are optimal.
\begin{definition}[Embedding]\label{def:loss-embed}
  A minimizable loss $L:\reals^d\to\reals^\Y_+$ \emph{embeds} a loss $\ell:\R\to\reals^\Y_+$ if there exists a representative set $\Sc$ for $\ell$ and an injective embedding $\varphi:\Sc\to\reals^d$ such that
  (i) for all $r\in\Sc$ we have $L(\varphi(r)) = \ell(r)$, and (ii) for all $p\in\simplex,r\in\Sc$ we have
  \begin{equation}\label{eq:embed-loss}
    r \in \prop{\ell}(p) \iff \varphi(r) \in \prop{L}(p)~.
  \end{equation}
  If $\Sc$ is a minimal representative set, we say $L$ \emph{tightly embeds} $\ell$.
\end{definition}

To illustrate the idea of embedding, let us examine hinge loss in detail as a surrogate for 0-1 loss for binary classification.
Recall that we have $\R = \Y = \{-1, +1\}$, with $\hinge(u)_y = (1 - uy)_+$ and $\ellzo(r)_y := \Ind{r\neq y}$, typically with link function $\psi(u) = \sgn(u)$.
We will see that hinge loss embeds (2 times) 0-1 loss, via the embedding $\varphi(r) = r$.
For condition (i), it is straightforward to check that $\hinge(r)_y = 2\ellzo(r)_y$ for all $r,y\in\{-1,1\}$.
For condition (ii), let us compute the property each loss elicits, i.e., the set of optimal reports for each $p\in\simplex$:
\[
\prop{\ellzo}(p) = \begin{cases}
1 & p_1 > 1/2 \\
\{-1,1\} & p_1 = 1/2\\
-1 & p_1 < 1/2
\end{cases}
\qquad
\prop{L_{hinge}}(p) = \begin{cases}
[1,\infty) & p_1 = 1\\
1 & p_1 \in (1/2,1) \\
[-1,1] & p_1 = 1/2\\
-1& p_1 \in (0, 1/2)\\
(-\infty, -1]& p_1 = 0
\end{cases}~.
\]
In particular, we see that $-1 \in \prop{\ellzo}(p) \iff p_1 \in [0, 1/2] \iff -1 \in \prop{\hinge}(p)$, and $1 \in \prop{\ellzo}(p) \iff p_1 \in [1/2,1] \iff 1 \in \prop{\hinge}(p)$.
With both conditions of Definition~\ref{def:loss-embed} satisfied, we can conclude that $\hinge$ embeds $2\ellzo$.
By results in \S~\ref{subsec:match-BR}, one could also show that $\hinge$ embeds $2\ellzo$ by the fact that their Bayes risks match (Figure~\ref{fig:bayes-risks-01}).

\begin{figure}
	\begin{minipage}{0.3\linewidth}
	\centering
	\includegraphics[width=0.95\linewidth]{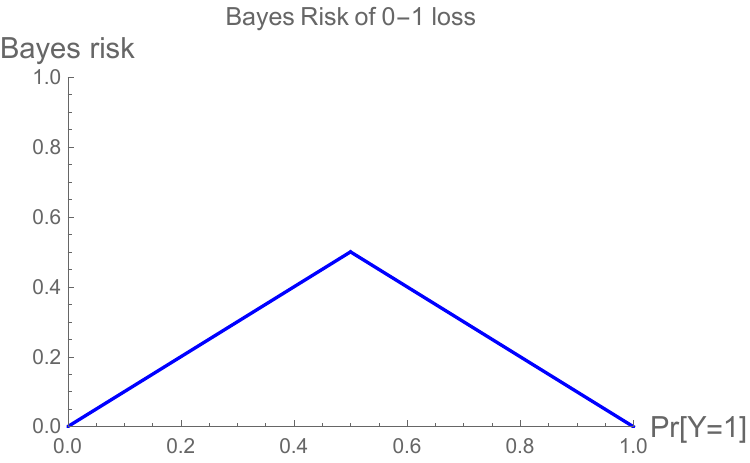}
	\end{minipage}
	\hfill
	\begin{minipage}{0.3\linewidth}
	\centering		\includegraphics[width=0.95\linewidth]{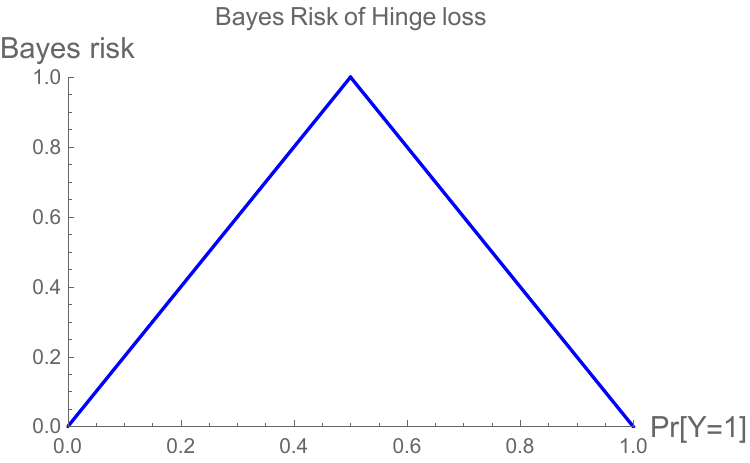}
	\end{minipage}
	\hfill
	\begin{minipage}{0.3\linewidth}
	\centering
	\includegraphics[width=0.95\linewidth]{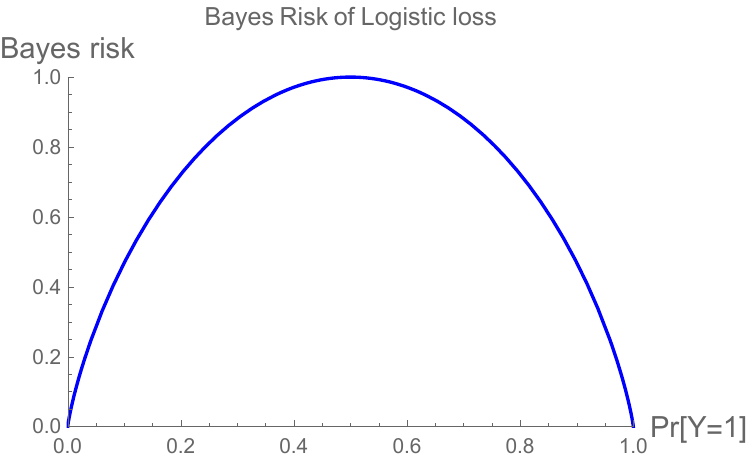}
\end{minipage}
\caption{Bayes risks $L : p \mapsto \inf_u \inprod{p}{L(u)}$ of 0-1, hinge, and logistic losses, respectively, plotted as a function of $p_1 = Pr[Y=1]$.
	Observe that the Bayes risks of 0-1 and hinge loss are both piecewise lienar and concave, while the Bayes risk of logistic loss is also concave, but not piecewise linear.  
	In particular, the Bayes risk of Hinge loss is twice that of 0-1.  We show in Proposition~\ref{prop:embed-bayes-risks} that embedding is equivalent to matching Bayes risks. 
}
\label{fig:bayes-risks-01}
\end{figure}

In this particular example, it is known $(\hinge,\psi)$ is calibrated for $\psi(u) = \sgn(u)$.
More generally, however, it is not clear whether an arbitrary embedding yields a calibrated link.
Indeed, apart from mapping the embedded points back to their original reports, via $\psi(\varphi(r)) = r$, how to map the remaining values is far from obvious.
When the surrogate is polyhedral, we give a construction to map the remaining values in \S~\ref{sec:calibration}, showing that embeddings always yield calibration.
We first explore in \S~\ref{sec:poly-loss-embed} the connection between embeddings and polyhedral surrogates.

While our notion of embedding is sufficient for calibration (and therefore consistency), it is worth noting that it is not \emph{necessary} for these conditions.  
For example, while logistic loss does not embed 0-1 loss, the surrogate and link for logistic loss are consistent.

\section{Embeddings and Polyhedral Losses}
\label{sec:poly-loss-embed}

In this section, we establish a tight relationship between the technique of embedding and the use of polyhedral (piecewise-linear convex) surrogate losses, showing Theorem~\ref{thm:embed-poly-main}.
We defer the question of when such surrogates are consistent to \S~\ref{sec:calibration}. 

A first observation is that if a loss $L$ elicits a property $\Gamma$, then $L$ restricted to some representative set $\Sc$, denoted $L|_\Sc$, elicits $\Gamma$ restricted to $\Sc$.
As a consequence, restricting to representative sets preserves the Bayes risk.
We will use these observations throughout.
\begin{lemma}\label{lem:loss-restrict}
  Let $L:\R\to\reals^\Y_+$ elicit $\Gamma$, and let $\Sc\subseteq\R$ be representative for $L$.
  Then $L|_\Sc$ elicits $\gamma:\simplex\toto\Sc$ defined by $\gamma(p) = \Gamma(p)\cap \Sc$.
  Moreover, $\risk{L}=\risk{L|_\Sc}$.
\end{lemma}
\begin{proof}
  Let $p\in\simplex$ be fixed throughout.
  First let $r \in \gamma(p) = \Gamma(p) \cap \Sc$.
  Then $r \in \Gamma(p) = \argmin_{u\in\R} \inprod{p}{L(u)}$, so as $r\in\Sc$ we have in particular $r \in \argmin_{u\in\Sc} \inprod{p}{L(u)}$.
  For the other direction, suppose $r \in \argmin_{u\in\Sc} \inprod{p}{L(u)}$.
  As $\Sc$ is representative for $L$, we must have some $s \in \Gamma(p) \cap \Sc$.
  On the one hand, $s\in\Gamma(p) = \argmin_{u\in\R} \inprod{p}{L(u)}$.
  On the other, as $s \in \Sc$, we certainly have $s \in \argmin_{u\in\Sc} \inprod{p}{L(u)}$.
  But now we must have $\inprod{p}{L(r)} = \inprod{p}{L(s)}$, and thus $r \in \argmin_{u\in\R} \inprod{p}{L(u)} = \Gamma(p)$ as well.
  We now see $r \in \Gamma(p) \cap \Sc$.
  Finally, the equality of the Bayes risks $\min_{u\in\R} \inprod{p}{L(u)} = \min_{u\in\Sc} \inprod{p}{L(u)}$ follows immediately by the above, as $\emptyset \neq \Gamma(p)\cap\Sc \subseteq \Gamma(p)$ for all $p\in\simplex$.
\end{proof}

Lemma~\ref{lem:loss-restrict} leads to the following useful tool for finding embeddings.
\begin{proposition}\label{prop:representative-embeds-restriction}
  Let a minimizable surrogate loss $L:\reals^d \to \reals^\Y_+$ be given.
  If $L$ has a finite representative set $\U \subseteq \reals^d$, then $L$ embeds the discrete loss $L|_\U$.
\end{proposition}
\begin{proof}
  Let $\Gamma = \prop{L}$ and $\gamma = \prop{L|_\U}$.
  Define $\varphi : \U \to \U$ to be the identity embedding.
  Condition (i) of an embedding is trivially satisfied, as $L|_\U(u) = L(u)$ for all $u\in\U$.
  Now let $u\in\U$.
  From Lemma~\ref{lem:loss-restrict}, for all $p\in\simplex$ we have $u \in \gamma(p) \iff u \in \Gamma(p) \cap \U \iff u \in \Gamma(p)$.
  We conclude condition (ii) of an embedding.
\end{proof}

We now shift our focus to \emph{polyhedral} (piecewise-linear and convex) surrogates.
Our first observation is that while polyhedral surrogates cannot elicit finite properties, in the sense that they have infinitely many possible reports, they do elicit properties with a finite range, meaning a finite set of possible optimal sets.
This observation lets us apply results about finite representative sets to understand the structure of polyhedral surrogates and the losses they embed.
See \S~\ref{app:power-diagrams} for the full proof.

\begin{restatable}{lemma}{polyhedralrangegamma}
	\label{lem:polyhedral-range-gamma}
	Let $L:\reals^d\to\reals_+^\Y$ be a polyhedral loss; then $L$ is minimizable and elicits a property $\Gamma := \prop{L}$.
	Then the range of $\Gamma$, given by $\Gamma(\simplex) = \{\Gamma(p) \subseteq \reals^d : p\in\simplex\}$, is a finite set of closed polyhedra.
\end{restatable}
\begin{proof}[Sketch]
	We know that $L$ is minimizable from~\citet[Corollary 19.3.1]{rockafellar1997convex} as $L$ is bounded from below.
	With $\Y$ finite, there are only finitely many supporting sets over $\simplex$.
	For $p \in \simplex$, the power diagram induced by projecting the epigraph of expected loss onto $\reals^d$ is the same for any $p$ of the same support (Lemma~\ref{lem:polyhedral-pd-same}).
	Moreover, we have $\Gamma(p)$ being exactly one of the faces of the projected epigraph since the hyperplane $u \mapsto (u, \inprod{p}{L(u)})$ supports the epigraph of the expected loss at exactly the property value; moreover, since the loss is polyhedral the supporting hyperplane must support on a face of the epigraph.
	Since this epigraph has finitely many faces (as it is polyhedral), the range of $\Gamma$ is then (a subset) of elements of a finitely generated (finite supports) set of finite elements (finite faces).
	Moreover, each element of $\Gamma(\simplex)$ is a closed polyhedron since it corresponds exactly to a closed face of a polyhedral set.
\end{proof}

\begin{theorem}\label{thm:poly-embeds-discrete}
  Every polyhedral loss $L$ embeds a discrete loss.
\end{theorem}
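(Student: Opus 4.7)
The plan is to chain together two ingredients that the excerpt has already put in place, so the theorem follows as a one-line corollary. The first ingredient is Lemma~\ref{lem:poly-loss-poly-risk}, which states that every polyhedral loss $L$ has a polyhedral Bayes risk $\risk{L}$. The second is Proposition~\ref{prop:embed-risk-poly}, whose backward direction says that whenever $\risk{L}$ is polyhedral, $L$ embeds some discrete loss; indeed, the proof of that proposition gives an explicit construction of such an $\ell$ from the cells of the power diagram associated to $\risk{L}$.

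Concretely, starting from a polyhedral $L : \reals^d \to \reals^\Y_+$, I would first apply Lemma~\ref{lem:poly-loss-poly-risk} to conclude that $\risk{L} : \simplex \to \reals_+$ is polyhedral (as a concave function). I would then apply Proposition~\ref{prop:embed-risk-poly} to extract a discrete loss $\ell$ such that $L$ embeds $\ell$. That completes the argument; nothing beyond citing the two earlier results is needed.

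The only step that would take real work, if one had not already proved Lemma~\ref{lem:poly-loss-poly-risk}, is checking polyhedrality of $\risk{L}$ directly from the definition. The natural route is to write each coordinate $L(u)_y$ as a pointwise maximum of finitely many affine functions of $u$, lift the inner optimization to a parametric linear program whose objective depends linearly on $p \in \simplex$, and invoke standard LP sensitivity analysis to conclude that the optimal value $p \mapsto \inf_{u} \inprod{p}{L(u)}$ is piecewise linear concave in $p$. But since Lemma~\ref{lem:poly-loss-poly-risk} is already available, no such effort is required here, and the main conceptual obstacle, namely reconstructing a discrete loss from a polyhedral Bayes risk via its power-diagram cells, has already been absorbed into Proposition~\ref{prop:embed-risk-poly}.
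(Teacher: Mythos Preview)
Your proposal is correct and matches the paper's own argument exactly: the paper derives Theorem~\ref{thm:poly-embeds-discrete} by combining Lemma~\ref{lem:poly-loss-poly-risk} (polyhedral losses have polyhedral Bayes risks) with the backward direction of Proposition~\ref{prop:embed-risk-poly}.
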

\begin{proof}
  Let $L:\reals^d\to\reals_+^\Y$ be a polyhedral loss, and $\Gamma = \prop{L}$.
  By Lemma~\ref{lem:polyhedral-range-gamma}, $\Gamma(\simplex)$ is finite set. 
  For each $U\in \Gamma(\simplex)$, select $u_U \in U$, and let $\Sc = \{u_U : U \in\Gamma(\simplex)\}$, which is again finite.
  For any $p\in\simplex$ then, let $U = \Gamma(p)$.
  We have $U \in \Gamma(\simplex)$ by definition, and thus some $u_U \in \Sc$; in particular, $u_U \in U = \Gamma(p)$.
  We conclude that $\Sc$ is representative for $L$.
  Proposition~\ref{prop:representative-embeds-restriction} now states that $L$ embeds $L|_\Sc$.
\end{proof}

We now turn to the reverse direction: which discrete losses are embedded by some polyhedral loss?
Perhaps surprisingly, we show in Theorem~\ref{thm:discrete-loss-poly-embeddable} that \emph{every} discrete loss is embeddable.
Combining this result with Theorem~\ref{thm:poly-embeds-discrete} establishes Theorem~\ref{thm:embed-poly-main}.
Further combining with Theorem~\ref{thm:link-main}, proved in the following section, this construction gives a consistent polyhedral surrogate for every discrete target loss.

The proof of Theorem~\ref{thm:discrete-loss-poly-embeddable} uses a construction via convex conjugate duality which has appeared in several different forms in the literature (e.g.\ \cite{duchi2018multiclass,abernethy2013efficient,frongillo2014general}).
We then apply a result we will prove in \S~\ref{sec:min-rep-sets}: a minimizable surrogate embeds a discrete loss if and only if their Bayes risks match (Proposition~\ref{prop:embed-bayes-risks}).

\begin{theorem}\label{thm:discrete-loss-poly-embeddable}
  Every discrete loss $\ell:\R \to \reals^\Y_+$ is embedded by a polyhedral loss.
\end{theorem}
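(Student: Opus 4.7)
My plan is to exploit Proposition~\ref{prop:embed-bayes-risks} and reduce the problem to constructing a polyhedral loss $L$ whose Bayes risk matches $\risk{\ell}$ pointwise on $\simplex$. Since $\risk{\ell}(p) = \min_{r\in\R}\inprod{p}{\ell(r)}$ is the minimum of finitely many linear functions of $p$, it is a polyhedral concave function of $p$, and this polyhedral structure is what I intend to push through convex duality to produce the candidate surrogate.

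The construction I would propose is the following conjugate-duality surrogate on $d=|\Y|$ dimensions. Define $f:\reals^\Y\to\reals\cup\{+\infty\}$ by $f(p)=-\risk{\ell}(p)$ on $\simplex$ and $+\infty$ elsewhere; this $f$ is polyhedral convex with effective domain $\simplex$. Let $f^*$ denote its convex conjugate, and set
\[
L(u)_y \;:=\; f^*(u) - u_y \quad\text{for all } u\in\reals^\Y,\ y\in\Y.
\]
Then $L$ is polyhedral because $f^*$ is polyhedral convex (the conjugate of a polyhedral convex function is polyhedral, and here finite-valued because $\dom f=\simplex$ is bounded), and adding the linear term $-u_y$ preserves polyhedrality.

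Next I would verify the three conditions needed. (a) Non-negativity: evaluating the defining supremum at $p=\delta_y$ gives $f^*(u)\geq u_y+\risk{\ell}(\delta_y)$, and $\risk{\ell}(\delta_y)=\min_r\ell(r)_y\geq 0$ since $\ell$ is nonnegative, so $L(u)_y\geq 0$. (b) Bayes risk: using $\sum_y p_y=1$ I compute $\inprod{p}{L(u)}=f^*(u)-\inprod{p}{u}$, hence
\[
\risk{L}(p) \;=\; \inf_{u\in\reals^\Y}\bigl(f^*(u)-\inprod{u}{p}\bigr) \;=\; -f^{**}(p) \;=\; -f(p) \;=\; \risk{\ell}(p),
\]
where the biconjugate identity $f^{**}=f$ holds because $f$ is a proper closed polyhedral convex function. (c) Invoking Proposition~\ref{prop:embed-bayes-risks} immediately yields that $L$ embeds $\ell$.

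The only real obstacle is sorting out the technicalities around polyhedrality and the domain of $f^*$: one must cite (or briefly recall from Rockafellar) that the conjugate of a polyhedral convex function with bounded domain is a finite-valued polyhedral convex function, and that biconjugation recovers the original on its domain. Everything else is essentially bookkeeping, and the non-negativity check is the one place where the assumption $\ell(r)\in\reals^\Y_+$ is actively used. As noted in the theorem's remark, the dimension $d=|\Y|$ appearing in this construction is the expected one and need not be tight.
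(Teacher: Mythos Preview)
Your proposal is correct and is essentially the same construction and argument as the paper's own proof: both take the convex conjugate of $-\risk{\ell}$ (your $f^*$ is the paper's $C=(-\risk{\ell})^*$), define $L(u)_y = f^*(u)-u_y$, compute $\risk{L}=\risk{\ell}$ via biconjugation, and invoke Proposition~\ref{prop:embed-bayes-risks}. The only cosmetic differences are that you explicitly extend $f$ by $+\infty$ off $\simplex$ before conjugating, and you verify nonnegativity directly from the conjugate's defining supremum rather than via the Bayes risk at $\delta_y$.
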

\begin{proof}
  Let $n = |\Y|$, and let $C:\reals^n \to \reals$ be given by $(-\risk{\ell})^*$, the convex conjugate of $-\risk{\ell}$.
  From standard results in convex analysis, $C$ is polyhedral as $-\risk{\ell}$ is, and $C$ is finite on all of $\reals^\Y$ as the domain of $-\risk{\ell}$ is bounded~\cite[Corollary 13.3.1]{rockafellar1997convex}.
  Note that $-\risk{\ell}$ is a closed convex function, as the infimum of affine functions, and thus $(-\risk{\ell})^{**} = -\risk{\ell}$.
  Define $L:\reals^n\to\reals^\Y$ by $L(u) = C(u)\ones - u$, where $\ones\in\reals^\Y$ is the all-ones vector.
  As $C$ is polyhedral, so is $L$.
  We first show that $L$ embeds $\ell$, and then establish that the range of $L$ is in fact $\reals^\Y_+$, as desired.

  We compute Bayes risks and apply Proposition~\ref{prop:embed-bayes-risks} to see that $L$ embeds $\ell$.
  Observe that $\risk{\ell}$ is polyhedral as $\ell$ is discrete.
  For any $p\in\simplex$, we have
  \begin{align*}
    \risk{L}(p)
    &= \inf_{u\in\reals^n} \inprod{p}{C(u)\ones - u}\\
    &= \inf_{u\in\reals^n} C(u) - \inprod{p}{u}\\
    &= -\sup_{u\in\reals^n} \inprod{p}{u} - C(u)\\
    &= -C^*(p) = - (-\risk{\ell}(p))^{**} = \risk{\ell}(p)~.
  \end{align*}
  It remains to show $L(u)_y \geq 0$ for all $u\in\reals^n$, $y\in\Y$.
  Letting $\delta_y\in\simplex$ be the point distribution on outcome $y\in\Y$, we have for all $u\in\reals^n$, $L(u)_y \geq \inf_{u'\in\reals^n} L(u')_y = \risk{L}(\delta_y) = \risk{\ell}(\delta_y) \geq 0$, where the final inequality follows from the nonnegativity of $\ell$.
\end{proof}

While Theorem~\ref{thm:discrete-loss-poly-embeddable} constructs a consistent surrogate for any discrete loss, in some settings, such as structured prediction and information retrieval, the prediction dimension $d = n := |\Y|$ can be prohibitively large.
\footnote{One can always reduce to $d=n-1$ in Theorem~\ref{thm:discrete-loss-poly-embeddable} via a linear transformation from $\reals^n$ to $\reals^{n-1}$ which is injective on $\simplex$; redefining the surrogate appropriately, the Bayes risks will still match.}
Recent work~\citep{ramaswamy2016convex,finocchiaro2020embedding,finocchiaro2021unifying} yield characterizations for bounding the prediction dimension $d$ for consistent convex surrogates and embeddings.

\section{Consistency via Calibrated Links}
\label{sec:calibration}

We have now seen the tight relationship between polyhedral losses and embeddings; in particular, every polyhedral loss embeds some discrete loss.
The embedding itself tells us how to link the embedded points back to the discrete reports (map $\varphi(r)$ to $r$).
But it is not clear how to extend this to yield a full link function $\psi: \reals^d \to \R$, and whether such a $\psi$ can lead to consistency.
In this section, we prove Theorem~\ref{thm:link-main}, restated below, which gives a construction to generate calibrated links for \emph{any} polyhedral surrogate.

\linkinformal*

Theorem \ref{thm:link-main} will follow immediately from Theorems \ref{thm:calibrated-separated} and \ref{thm:thickened-separated}, as discussed below.
Their full proofs appear in Appendices \ref{sec:equiv-sep-calib} and \ref{app:sep-link-exists} respectively.

Theorem \ref{thm:calibrated-separated} shows that calibration is equivalent to a geometric condition, which we call \emph{separation}, of a link function $\psi$.
Recall that for indirect elicitation, any point $u \in \Gamma(p)$ must link to a report $\psi(u) \in \gamma(p)$.
(In terms of losses, $u$ minimizing expected $L$-loss implies that $\psi(u)$ minimizes expected $\ell$-loss, with respect to $p$.)
The idea of separation is that points in the neighborhood of $u$ must also link to to a report in $\gamma(p)$.
Furthermore, there must be a uniform lower bound $\epsilon$ on the size of any such neighborhood.

\begin{definition}[Separated Link]\label{def:sep-link}
  Let properties $\Gamma:\simplex\toto\reals^d$ and $\gamma:\simplex\toto\R$ be given.
  We say a link $\psi:\reals^d\to\R$
  is \emph{$\epsilon$-separated with respect to $\Gamma$ and $\gamma$} if for all $u\in\reals^d$ with $\psi(u)\notin\gamma(p)$, we have $d_\infty(u,\Gamma(p)) > \epsilon$, where $d_\infty(u,A) \doteq \inf_{a\in A} \|u-a\|_\infty$.
  Similarly, we say $\psi$ is $\epsilon$-separated with respect to $L$ and $\ell$ if it is $\epsilon$-separated with respect to $\prop{L}$ and $\prop{\ell}$.
\end{definition}

\begin{restatable}{theorem}{calibratedseparated} \label{thm:calibrated-separated}
  Let polyhedral surrogate $L:\reals^d \to \reals^\Y_+$, discrete loss $\ell:\R\to\reals^\Y_+$, and link $\psi:\reals^d\to\R$ be given.
  Then $(L,\psi)$ is calibrated with respect to $\ell$ if and only if
  $\psi$ is $\epsilon$-separated with respect to $L$ and $\ell$ for some
  $\epsilon>0$.
\end{restatable}

To prove Theorem \ref{thm:link-main}, it now suffices to show that for any polyhedral $L$ embedding some $\ell$, there exists a \emph{separated} link $\psi$ with respect to $L$ and $\ell$.
This is given by Construction ~\ref{const:eps-thick-link} below.

\begin{restatable}{theorem}{thickenedseparated} \label{thm:thickened-separated}
  Let polyhedral surrogate $L:\reals^d \to \reals^\Y_+$ embed the discrete loss $\ell:\R\to\reals^\Y_+$.
  Then there exists $\epsilon_0 > 0$ such that, for all $0 < \epsilon \leq \epsilon_0$, Construction~\ref{const:eps-thick-link} yields an $\epsilon$-separated link with respect to $L$ and $\ell$.
\end{restatable}

To set the stage for Construction \ref{const:eps-thick-link}, we sketch the two main steps in proving Theorem \ref{thm:thickened-separated}: \emph{(a)} showing that one can produce a link $\psi$ such that $(L,\psi)$ indirectly elicits $\ell$; \emph{(b)} ``thickening'' $\psi$ such that it is separated.

For \emph{(a)}, begin by linking each embedding point back to its original report.
Now we must determine $\psi(u)$ for non-embedding points.
The challenge is that we may have $u \in \Gamma(p) \cap \Gamma(p')$.
Because $u$ minimizes expected surrogate loss for both $p$ and $p'$, the link must satisfy $\psi(u) \in \gamma(p) \cap \gamma(p')$.
It is not even clear \emph{a priori} that these sets intersect.
We use the definition of embedding and elicitation results, discussed in \S~\ref{sec:min-rep-sets}, to show that for each such $u$ there exists $r \in \R$ such that $\Gamma_u \subseteq \gamma_r$, i.e. any $p$ satisfying $u \in \Gamma(p)$ also satisfies $r \in \gamma(p)$.
This implies that if $u \in \Gamma(p) \cap \Gamma(p')$, then there exists $r \in \gamma(p) \cap \gamma(p')$, so we may safely choose $\psi(u) = r$.

For \emph{(b)}, we show that this link can be ``thickened'' by some positive $\epsilon$, as described next.
Consider an optimal surrogate report set, i.e. set of the form $U = \Gamma(p) = \argmin_u \inprod{p}{L(u)}$.
By indirect elicitation, $\psi$ is already correct on $U$.
Now, we ``thicken'' $U$ to obtain $U_{\epsilon} = \{u : \|u - U\| \leq \epsilon\}$.
Then we require that all points in $U_{\epsilon}$ are linked to some element of $\gamma(p) = \argmin_r \inprod{p}{\ell(r)}$.
For $\epsilon > 0$, this directly implies separation.

However, it is not clear that this linking is possible because a point $u$ may be in multiple thickened sets $U_{\epsilon}, U'_{\epsilon}$, etc.
Therefore, we need to take each possible collection $U,U'$, etc. and thicken their intersection in an analogous way.

Given $u \in U \cap U' \cap \dots$, we use $\Psi(u)$ to denote the remaining legal choices for $\psi(u)$ after imposing the requirements for each such set $U,U'$, etc.
The key claim is that, for small enough $\epsilon > 0$, $\Psi(u)$ is nonempty: at least one legal value for $\psi(u)$ remains.
This claim follows from a geometric result (Lemma \ref{lemma:thick-empty}) that, for all small enough $\epsilon$, a subset of thickenings $U_{\epsilon}$ intersect if and only if the $U$ sets themselves intersect.
When they do intersect, indirect elicitation implies that there exists a legal choice of link for the intersection of the thickenings.
It is also important that, by Lemma~\ref{lem:polyhedral-range-gamma}, for polyhedral surrogates there are only finitely many sets of the form $U = \Gamma(p)$.
This yields a single uniform smallest $\epsilon$ such that the key claim is true for all $u \in \reals^d$.

Given the above proof sketch, the following construction is relatively straightforward.
We initialize the link using the embedding points and optimal report sets, then use $\Psi$ to narrow down to only legal choices; we then pick from $\psi(u)$ from $\Psi(u)$ arbitrarily.
Theorem \ref{thm:thickened-separated} implies that, for all small enough $\epsilon$, the resulting link $\psi$ is well-defined at all points.
\begin{construction}[$\epsilon$-thickened link] \label{const:eps-thick-link}
  Given a polyhedral $L$ that embeds some $\ell$, an $\epsilon > 0$, and a norm $\|\cdot\|$, the \emph{$\epsilon$-thickened link} $\psi$ is constructed as follows.
  First, define $\U = \{\Gamma(p) : p \in \simplex\}$.
  For each $U \in \U$, let $R_U = \{r \in \R : \varphi(r) \in U\}$, the reports whose embedding points are in $U$.
  First, initialize $\Psi: \reals^d \toto \R$ by setting $\Psi(u) = \R$ for all $u$.
  Then for each $U \in \U$, for all points $u$ such that $\inf_{u^* \in U} \|u^*-u\| < \epsilon$, update $\Psi(u) = \Psi(u) \cap R_U$.
  Finally, define $\psi(u) \in \Psi(u)$, breaking ties arbitrarily.
  If $\Psi(u)$ became empty, then leave $\psi(u)$ undefined.
\end{construction}

\paragraph{Remarks.}
Construction~\ref{const:eps-thick-link} is not necessarily computationally efficient as the number of labels $n$ grows.
In practice this potential inefficiency is not typically a concern, as the family of losses typically has some closed form expression in terms of $n$, and thus the construction can proceed at the symbolic level.
We illustrate this formulaic approach in \S~\ref{sec:abstain}.

Applying the $\epsilon$-thickened link construction additionally enables one to verify the consistency of a proposed link $\psi^*$.
For a given $\epsilon$ and norm $\|\cdot\|$, suppose one follows the routine of Construction~\ref{const:eps-thick-link} until the last step in which values for the link $\psi$ are selected.
Instead, we can simply test whether the proposed link values are contained in the valid choices, i.e., if $\psi^*(u) \in \Psi(u)$ for all $u\in\reals^d$.
If so, then the proposed link $\psi^*$ is calibrated.

\paragraph{Regret transfer rates of calibrated polyhedral surrogates.}
Recall that the goal of surrogate regret minimization is to learn a hypothesis $h$ that minimizes expected surrogate loss, then output hypothesis $\psi \circ h$, which hopefully minimizes expected target loss.
Consistency is a minimal requirement: when surrogate regret\footnote{Regret in this context is the difference between the expected loss of a hypothesis and the expected loss of the Bayes optimal hypothesis that minimizes expected loss. We refer the reader to \cite{frongillo2021surrogate} for a formal definition.} of $h$ converges to zero, i.e. $\text{Regret}_L(h) \to 0$, so does target regret of $\psi \circ h$, i.e. $\text{Regret}_{\ell}(\psi \circ h) \to 0$.
A natural question is whether \emph{fast} convergence in surrogate regret implies fast convergence in target regret.
A recent paper \cite{frongillo2021surrogate} shows that, for polyhedral surrogates, this is always the case.
\begin{theorem}[\cite{frongillo2021surrogate}, Theorem 1]
  Let $(L,\psi)$ be a polyhedral surrogate that is consistent for a discrete loss $\ell$.
  Then there exists $c > 0$ such that, for all hypotheses $h$, $\text{Regret}_{\ell}(\psi \circ h) \leq c \cdot \text{Regret}_L(h)$.
\end{theorem}

\section{Consistency of abstain surrogate and link construction}
\label{sec:abstain}

Several authors consider a variant of classification, with the addition of a reject or \emph{abstain} option~\citep{bartlett2008classification,ramaswamy2018consistent,madras2018predict,elyaniv2010foundations,cortes2016learning}.
In particular, \citet{ramaswamy2018consistent} study the loss $\ellabs{\alpha} : [n] \cup \{\bot\} \to \reals^\Y_+$ defined by $\ellabs{\alpha}(r)_y = 0$ if $r=y$, $\alpha$ if $r = \bot$, and 1 otherwise.
The report $\bot$ corresponds to ``abstaining'' if no label is sufficiently likely, specifically, if no $y\in\Y$ has $p_y \geq 1-\alpha$.
\citeauthor{ramaswamy2018consistent} provide a polyhedral surrogate $L_\alpha$ for $\ellabs{\alpha}$, which we present here for $\alpha=1/2$.
Letting $d = \ceil{\log_2(n)}$, their surrogate is $L_{1/2} : \reals^d \to \reals^\Y_+$ given by
\begin{equation}\label{eq:abstain-surrogate}
L_{1/2}(u)_y = \left(\max\nolimits_{j \in [d]}\varphi(y)_j u_j + 1\right)_+~,
\end{equation}
where $\varphi$ embeds outcomes to corners of the $\pm 1$ hypercube, and the abstain report $\bot$ to the origin.
Consistency is proven for the following link function,
\begin{equation}\label{eq:abstain-link}
  \psi(u) = \begin{cases}
	\bot & \min_{i \in [d]} |u_i| \leq 1/2\\
	\varphi^{-1}(\sgn(-u)) &\text{otherwise}
  \end{cases}~.
\end{equation}

As we illustrate in Figure~\ref{fig:abstain-links}(L), the link function $\psi$ proposed by \citeauthor{ramaswamy2018consistent} can be recovered from Theorem~\ref{thm:link-main} by choosing the norm $\|\cdot\|_\infty$ and setting $\epsilon=1/2$. 
Hence, our framework would have simplified the process of finding such a link, and the corresponding proof of consistency.
To illustrate this point further, we give an alternate link $\psi_1$ corresponding to $\|\cdot\|_1$ and $\epsilon=1$, shown in Figure~\ref{fig:abstain-links}(R):
\begin{equation}\label{eq:abstain-link-1}
  \psi_1(u) = \begin{cases}
	\bot & \|u\|_1 \leq 1\\
	\varphi^{-1}(\sgn(-u)) &\text{otherwise}
  \end{cases}~.
\end{equation}
Theorem~\ref{thm:link-main} immediately gives calibration of $(L_{1/2},\psi_1)$ with respect to $\ellabs{1/2}$.
Aside from its simplicity, one possible advantage of $\psi_1$ is that it assigns $\bot$ to much less of the surrogate space $\reals^d$.
It would be interesting to compare the two links in practice.

\begin{figure}
\begin{center}
\begin{minipage}{0.32\linewidth}
\includegraphics[width=\linewidth]{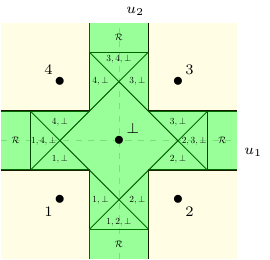}
\end{minipage}\hfill
\begin{minipage}{0.32\linewidth}
\includegraphics[width=\linewidth]{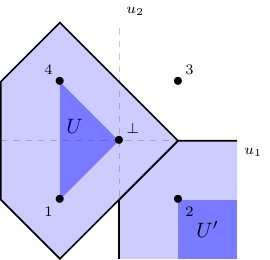}
\end{minipage}\hfill
\begin{minipage}{0.32\linewidth}
\includegraphics[width=\linewidth]{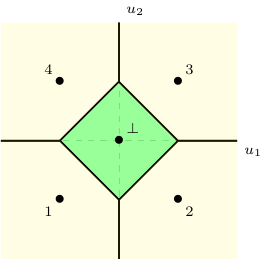}
\end{minipage}\hfill
\caption{Constructing links for the abstain surrogate $L_{1/2}$ with $d=2$. The embedding is shown in bold labeled by the corresponding reports. (L) The link envelope $\Psi$ resulting from Theorem~\ref{thm:eps-thick-calibrated} using $\|\cdot\|_\infty$ and $\epsilon = 1/2$, and a possible link $\psi$ which matches eq.~\eqref{eq:abstain-link} from~\cite{ramaswamy2018consistent}.  (M) An illustration of the thickened sets from Construction~\ref{const:eps-thick-link} for two sets $U \in \U$, using $\|\cdot\|_1$ and $\epsilon = 1$. (R) The $\Psi$ and $\psi$ from Theorem~\ref{thm:eps-thick-calibrated} using $\|\cdot\|_1$ and $\epsilon = 1$.}
\label{fig:abstain-links}
\end{center}
\end{figure}

\section{Additional Structure of Embeddings}
\label{sec:min-rep-sets}

We have shown in \S~\ref{sec:poly-loss-embed} a tight connection between embeddings and polyhedral losses.
Here we go beyond polyhedral losses, showing a more general necessary condition for an embedding: a surrogate embeds a discrete loss if and only if it has a polyhedral Bayes risk, or equivalently, a finite representative sets (Lemma~\ref{lem:X}).n
This result implies that the embedding condition simplifies to matching Bayes risks (Proposition~\ref{prop:embed-bayes-risks}).
It also reveals some deeper structure of embeddings, even down to the geometry of the underlying property, and the equivalence of various notions of non-redundant predictions.
In particular, we study a natural notion of a ``trimed'' loss function (Definition~\ref{def:trim-loss}), and connect this definition to both tight embeddings and non-redundancy from property elicitation (Proposition~\ref{prop:embed-iff-trims-equal}).

\subsection{Structure of polyhedral Bayes risks}

While we have focused on polyhedral losses thus far, many of our results about embeddings extend to losses with polyhedral Bayes risks, a weaker condition.
(We say a concave function is polyhedral if its negation is a polyhedral convex function.)
To see that every polyhedral loss has a polyhedral Bayes risk, recall that Theorem~\ref{thm:poly-embeds-discrete} constructs a finite representative set $\Sc$ for any polyhedral loss $L$, and thus $\risk{L} = \risk{L|_\Sc}$ by Lemma~\ref{lem:loss-restrict}, which is polyhedral.
The condition is strictly weaker: a Bayes risk may be polyhedral even if the loss itself is not.
For example, a modified hinge loss $L(r)_y = \max(r^2-1,1-ry)$
as shown in Figure~\ref{fig:modified-hinge}, which matches hinge loss on the interval $[-1,1]$ but is strictly convex outside the interval $[-2,2]$, still embeds twice 0-1 loss.

\begin{figure}
	\begin{minipage}{0.47\linewidth}
		\centering
		\includegraphics[width=0.95\linewidth]{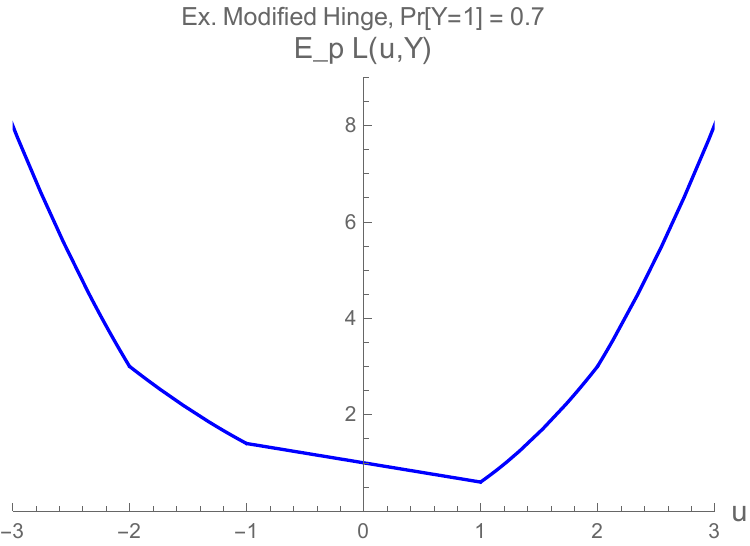}
	\end{minipage}
	\hfill
	\begin{minipage}{0.47\linewidth}
		\centering		\includegraphics[width=0.95\linewidth]{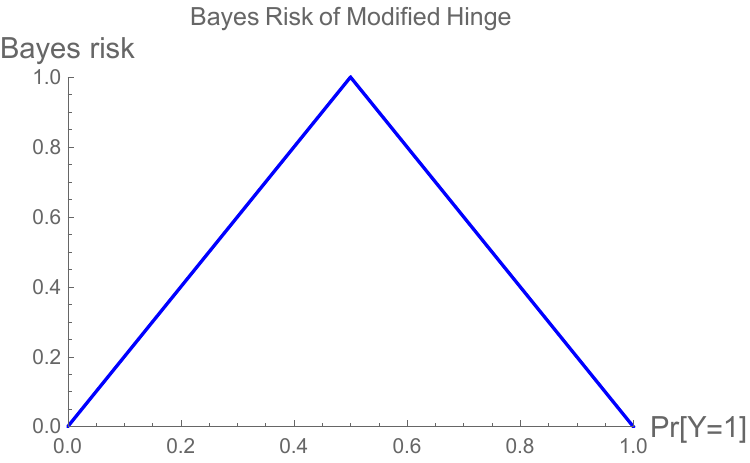}
	\end{minipage}
	\caption{(L) Expected modified hinge loss for fixed distribution; (R) Bayes risk of modified hinge still matches the Bayes risk of hinge.}
	\label{fig:modified-hinge}
\end{figure}

We now present our main structural result in Lemma~\ref{lem:X}, which will lay the foundation for the rest of this section.
The proof is in \S~\ref{subsec:lem-X-proof}.
Lemma~\ref{lem:X} observes that (minimizable) losses $L$ with polyhedral Bayes risk have finite representative sets, and derives equivalent conditions on the level sets of the property elicited by $L$ and tight embeddings.

\begin{restatable}{lemma}{lemmaX}\label{lem:X}
  Let $L: \R \to \reals^\Y_+$ be a minimizable loss with a polyhedral Bayes risk $\risk L$.
  Then $L$ has a finite representative set.
  Furthermore, letting $\Gamma = \prop{L}$, there exist finite sets
  $\V \subseteq \reals^\Y_+$ and
  $\Theta = \{\theta_v \subseteq \simplex \mid v\in\V\}$,
  both uniquely determined by $\risk{L}$ alone,
  such that
  \begin{enumerate}
  \item A set $\R'\subseteq\R$ is representative if and only if $\V \subseteq L(\R')$.\label{item:X-rep-V}
  \item A set $\R'\subseteq\R$ is minimum representative if and only if $L(\R') = \V$.\label{item:X-min-V}
  \item A set $\R'\subseteq\R$ is representative if and only if $\Theta \subseteq \{\Gamma_r \mid r \in \R'\}$.\label{item:X-rep-Theta}
  \item A set $\R'\subseteq\R$ is minimum representative if and only if $\{\Gamma_r \mid r \in \R'\} = \Theta$.\label{item:X-min-Theta}
  \item Every representative set for $L$ contains a minimum representative set for $L$.\label{item:X-rep-contain-min}
  \item The set of full-dimensional level sets of $\Gamma$ is exactly $\Theta$.\label{item:X-full-dim}
  \item For any $r \in \R$, there exists $\theta \in \Theta$ such that $\Gamma_r \subseteq \theta$.\label{item:X-redundant}
  \item $L$ tightly embeds $\ell:\R'\to\reals^\Y_+$ if and only if $\ell$ is injective and $\ell(\R') = \V$.\label{item:X-tight-embed}
  \end{enumerate}
\end{restatable}

As a finite representative set implies a polyhedral Bayes risk by Lemma~\ref{lem:loss-restrict}, Lemma~\ref{lem:X} shows that polyhedral Bayes risks are equivalent to having finite representative sets, which in turn gives an embedding by
Proposition~\ref{prop:representative-embeds-restriction}.
\begin{corollary}\label{cor:poly-risk-fin-rep}
  The following are equivalent for any minimizable loss $L:\R\to\reals^\Y_+$.
  \begin{enumerate}
  \item $\risk{L}$ is polyhedral.
  \item $L$ has a finite representative set.
  \item $L$ embeds a discrete loss.
  \end{enumerate}
\end{corollary}
From Corollary~\ref{cor:poly-risk-fin-rep}, $L$ having a finite representative set is an equivalent condition to $L$ being minimizable and $\risk{L}$ being polyhedral.
(Recall that having a finite representative set already implies minimizability.)
As it is also a more succinct condition, we will use the former in the sequel.
In particular, the implications of Lemma~\ref{lem:X} follow whenever $L$ has a finite representative set.

\subsection{Equivalent condition: matching Bayes risks}\label{subsec:match-BR}

Lemma~\ref{lem:X} leads to another appealing equivalent condition to our embedding condition in Definition~\ref{def:loss-embed}: a surrogate embeds a discrete loss if and only if their Bayes risks match.

\begin{proposition}\label{prop:embed-bayes-risks}
  Let discrete loss $\ell$ and minimizable loss $L$ be given.
  Then $L$ embeds $\ell$ if and only if $\risk{L}=\risk{\ell}$.
\end{proposition}
\begin{proof}
  Define $\Gamma = \prop{L}$ and $\gamma = \prop{\ell}$.
  
  $\implies$
  Suppose $L$ embeds $\ell$, so we have some $\Sc\subseteq \R$ which is representative for $\ell$ and an embedding $\varphi:\Sc\to\reals^d$; take $\U := \varphi(\Sc)$.
  Since $\Sc$ is representative for $\ell$, by embedding condition (ii) we have $\{\gamma_s \mid s\in\Sc\} = \{\Gamma_u \mid u\in\U\}$, so $\U$ is representative for $L$.
  By Lemma~\ref{lem:loss-restrict}, we have $\risk{\ell} = \risk{\ell|_{\Sc}}$ and $\risk{L} = \risk{L|_{\U}}$.
  As $L(\varphi(\cdot)) = \ell(\cdot)$ by embedding condition (i), for all $p\in\simplex$ we have
  \begin{equation*}
    \risk{\ell}(p) = \risk{\ell|_\Sc}(p) = \min_{r \in \Sc}\inprod{p}{\ell(r)} = \min_{r \in \Sc}\inprod{p}{L(\varphi(r))} = \min_{u \in \U}\inprod{p}{L(u)} = \risk{L|_\U}(p) = \risk{L}(p)~.
  \end{equation*}
  
  $\impliedby$
	For the reverse implication, assume $\risk{L} = \risk{\ell}$, which are polyhedral functions as $\ell$ is discrete.
  From Lemma~\ref{lem:X}(\ref{item:X-min-V}), we have some set $\V\subseteq\reals^\Y_+$ and minimum representative sets $\R^* \subseteq \R$ and $\U^* \subseteq \U$, for $\ell$ and $L$ respectively, such that $\ell(\R^*) = \V = L(\U^*)$.
  As $\R^*$ and $\U^*$ are miniumum, they cannot repeat loss vectors, and thus $|\R^*|=|\ell(\R^*)|$ and $|L(\U^*)|=|\U^*|$.
  We conclude that $\R^*$ and $\U^*$ are both in bijection with $\V$.
  The map $\varphi :\R^* \to \reals^d$, given by $\varphi(r) = u \in \U^*$ where $\ell(r) = L(u)$, is therefore well-defined.
  Condition (i) of an embedding is immediate.
  From Proposition~\ref{prop:representative-embeds-restriction}, $\ell$ embeds $\ell|_{\R^*}$ and $L$ embeds $L|_{\U^*}$, both via the identity embedding.
  Using condition (ii) from both embeddings, for all $p\in\simplex$ and $r\in\R^*$, we have
  \begin{equation*}
    r \in \gamma(p) \iff r \in \prop{\ell|_{\R^*}}(p) \iff \varphi(r) \in \prop{L|_{\U^*}}(p)
    \iff \varphi(r) \in \prop{L}(p)~,
  \end{equation*}
  giving condition (ii).
\end{proof}

Previous work from~\citet[Proposition 4]{duchi2018multiclass} realized the significance of matching Bayes risks for calibration with respect to the 0-1 loss.
Proposition~\ref{prop:embed-bayes-risks} broadens this general insight to any discrete loss.
Moreover, their result relies the Bayes risk of the surrogate being strictly concave, whereas polyhedral Bayes risks are never strictly concave.

\subsection{Trimming a loss}

Central to the structural results in Lemma~\ref{lem:X} is the existence of a canonical set of loss vectors $\V$ which match the loss vectors of any minimum representative set.
This fact may seem surprising when one considers that losses may have many mimimum representative sets.
For example, consider hinge loss with a spurious extra dimension, i.e., $L:\reals^2\to\reals^\Y$, $L((r_1, r_2))_y = \max(0,1-r_1y)$ for $\Y = \{-1,+1\}$.
Here the minimum representative sets are exactly the two-element sets of the form $\{(-1,a),(1,b)\}$ for any $a,b\in\reals$. 
Lemma~\ref{lem:X}(\ref{item:X-min-V}) states that, while the minimum representative set is not unique, its loss vectors are.

Motivated by this observation, let us define the ``trim'' of a loss to be this unique set $\V$ of loss vectors induced by any minimum representative set, which again is well-defined by Lemma~\ref{lem:X}(\ref{item:X-min-V}).
\begin{definition}[Trim]\label{def:trim-loss}
  Given a loss $L:\R \to \reals_+^\Y$ with a finite representative set, we define $\trimcover(L) = \{L(r) \mid r \in \R^*\}$ given any minimum representative set $\R^*$ for $L$.
\end{definition}

Using this notion of trimming a loss, we can again recast our embedding condition: a loss embeds another if and only if they have the same $\trimcover$.

\begin{proposition}\label{prop:embed-iff-trims-equal}
  Let $L:\reals^d\to\reals^\Y_+$ have a finite representative set, and let $\ell:\R\to\reals^\Y_+$ be a discrete loss.
  Then $L$ embeds $\ell$ if and only if $\trimcover(L) = \trimcover(\ell)$.
  Furthermore, $L$ tightly embeds $\ell$ if and only if $\ell$ is injective and $\trimcover(L) = \ell(\R)$.
\end{proposition}
\begin{proof}
  As $L$ has a finite representative set, it is minimizable.
  Proposition~\ref{prop:embed-bayes-risks} gives $L$ embeds $\ell$ if and only if $\risk L = \risk \ell$.
  If $\risk L = \risk \ell$, Lemma~\ref{lem:X}(\ref{item:X-min-V}) gives $\trim(L) = \trim(\ell)$.
  For the converse, suppose $\trim(L) = \trim(\ell) =: \V$.
  Define the discrete loss $\ell_\trim : \V \to \V, v\mapsto v$.
  Then $\ell_\trim$ is injective and $\ell_\trim(\V) = \V$, so from Lemma~\ref{lem:X}(\ref{item:X-tight-embed}), both $L$ and $\ell$ tightly embed $\ell_\trim$.
  We conclude $\risk L = \risk{\ell_\trim} = \risk \ell$ from Proposition~\ref{prop:embed-bayes-risks}.
  The second statement also follows directly from Lemma~\ref{lem:X}(\ref{item:X-tight-embed}).
\end{proof}

\subsection{Minimum representative sets and non-redundancy}

The condition that a representative set be minimum implies that one has identified exactly the ``active'' reports of a loss, in some sense.
We now relate this condition to another natural notion from the property elicitation literature: non-redundancy~\cite{frongillo2014general,lambert2018elicitation}.
Intuitively, a loss is non-redundant if no report is weakly dominated by another report.

\begin{definition}[Non-redundancy]\label{def:nonredundant}
  A loss $L : \R \to \reals^\Y_+$ eliciting $\Gamma:\simplex \toto \R$ is \emph{redundant} if there are reports $r, r' \in \R$ with $r \neq r'$ such that $\Gamma_r \subseteq \Gamma_{r'}$, and \emph{non-redundant} otherwise.
\end{definition}

From the structural result of Lemma~\ref{lem:X}, we can see that in fact these two notions are equivalent when $L$ has a polyhedral Bayes risk.
\begin{proposition}\label{prop:tfae-min-rep-nonredundant}
  Let $L:\R\to\reals^\Y_+$ have a finite representative set $\R'$.
  Then $\R'$ is a minimum representative set for $L$ if and only if $L|_{\R'}$ is non-redundant.
\end{proposition}
\begin{proof}
  Let $\Gamma = \prop{L}$.
  Suppose first that $L|_{\R'}$ is redundant.
  Then there exist $r,r' \in \R'$ such that $\Gamma_r \subseteq \Gamma_{r'}$.
  Thus, for all $p \in \Gamma_r$, we have $\{r, r'\} \subseteq \Gamma(p)$.
  Therefore $\R' \setminus \{r\}$ still a representative set, so $\R'$ is not minimum.

  Now suppose $L|_{\R'}$ is non-redundant.
  As $\R'$ is a representative set, Lemma~\ref{lem:X}(\ref{item:X-rep-contain-min}) gives some minimum representative set $\Sc \subseteq \R'$.
  Suppose we had some $r \in \R' \setminus \Sc$.
  Now Lemma~\ref{lem:X}(\ref{item:X-min-Theta},\ref{item:X-redundant}) gives some $s\in\Sc$ such that $\Gamma_r \subseteq \Gamma_s$, which contradicts $L|_{\R'}$ being non-redundant.
  We conclude $L(\Sc)=L(\R')$, meaning $\R'$ is a minimum representative set.
\end{proof}

\begin{corollary}\label{cor:tight-embed-min-rep}
  Let loss $L:\R\to\reals^\Y_+$ with finite representative set $\R'$ be given.
  Then $L$ tightly embeds $L|_{\R'}$ if and only if $L|_{\R'}$ is non-redundant.
\end{corollary}

In fact, we can show something stronger: the reports in minimum representative sets are precisely those which are not strictly redundant.
To formalize this statement, given $\Gamma : \simplex \toto \R$, let $\red(\Gamma) := \{r\in\R \mid \exists r'\in\R,\; \Gamma_r \subsetneq \Gamma_{r'}\}$ be the set of strictly redundant reports.
Similarly, for minimizable $L$, let $\red(L) := \red(\prop L)$.

\begin{proposition}
  Let $L : \R \to \reals^\Y_+$ have a finite representative set.
  Let $\R'$ be the union of all minimum representative sets for $L$.
  Then $\R' = \R \setminus \red(L)$.
\end{proposition}

\begin{proof}
  Let $\Gamma = \prop L$.
  Let $\Sc$ be a minimum representative set for $L$, and let $s\in\Sc$.
  Suppose for a contradiction that $s\in\red(\Gamma)$.
  Then we have some $r\in\R$ with $\Gamma_s \subsetneq \Gamma_r$.
  From Lemma~\ref{lem:X}(\ref{item:X-min-Theta},\ref{item:X-redundant}) we have some $s'\in\Sc$ such that $\Gamma_r \subseteq \Gamma_{s'}$.
  But now $\Gamma_s \subsetneq \Gamma_r \subseteq \Gamma_{s'}$, contradicting $\Sc$ being minimum representative.
  Thus $\Sc \subseteq \R \setminus \red(\Gamma)$.

  For the reverse inclusion, let $r\in\R\setminus\red(\Gamma)$.
  Let $\Sc$ again be a minimum representative set for $L$.
  From Lemma~\ref{lem:X}(\ref{item:X-min-Theta},\ref{item:X-redundant}), we have some $s\in\Sc$ such that $\Gamma_r \subseteq \Gamma_s$.
  By definition of $\red(L)$, we conclude $\Gamma_r = \Gamma_s$.
  Now take $\Sc' = (\Sc \setminus \{s\}) \cup \{r\}$, that is, the same set of reports with $r$ replacing $s$.
  We have $\{\Gamma_s \mid s\in\Sc\} = \{\Gamma_{s'} \mid s'\in\Sc'\}$, and thus $\Sc'$ is a minimum representative for $L$ by Lemma~\ref{lem:X}(\ref{item:X-min-Theta}).
  As $r\in\Sc'$, we have $r \in \R'$ and we are done.
\end{proof}

As a corollary, we can state another characterization of $\trim$ in terms of redundant reports.
The result follows immediately from the definition of $\trim$.

\begin{corollary}\label{cor:trim-loss-red}
  Let $L : \R \to \reals^\Y_+$ have a finite representative set.
  Then $\trimcover(L) = L(\R \setminus \red(L))$.
\end{corollary}

This result motivates the analogous definition for properties, $\trimred(\Gamma) := \{\Gamma_r \mid r \in \R\setminus\red(\Gamma)\}$.
We leverage this definition next, to study embeddings at the property level.

\subsection{A property elicitation perspective on trimmed losses}

We conclude this section with a similar structural result about the properties embedded by another property.
We say a property $\Gamma:\simplex\toto\reals^d$ embeds a finite property $\gamma:\simplex\toto\R$ if condition (ii) of Definition~\ref{def:loss-embed} holds.
In other words, $\Gamma$ embeds $\gamma$ if we have some representative set $\Sc\subseteq\R$ for $\gamma$ and embedding $\varphi:\Sc\to\reals^d$ such that for all $s\in\Sc$ we have $\gamma_s = \Gamma_{\varphi(s)}$.

Roughly, our result is as follows.
First, if $\Gamma$ embeds $\gamma$ and $\gamma$ is non-redundant, the level sets of $\Gamma$ must all be redundant relative to $\gamma$.
In other words, $\Gamma$ is exactly the property $\gamma$ up to relabelling reports, just with other reports filling in the gaps between the embedded reports of $\gamma$.
When working with convex losses, these extra reports often arise in the convex hull of the embedded reports.
In this sense, we can regard embedding as only a slight departure from direct elicitation: if a loss $L$ elicits $\Gamma$ which embeds $\gamma$, we can almost think of $L$ as eliciting $\gamma$ itself.
Finally, we have an important converse: if $\Gamma$ has finitely many full-dimensional level sets, or equivalently, if $\trimred(\Gamma)$ is finite, then $\Gamma$ must embed some finite elicitable property with those same level sets.
The statements about level sets make use of another corollary of Proposition~\ref{prop:embed-iff-trims-equal}, stated for properties.
\begin{corollary}\label{cor:trim-prop-red}
  Let $\Gamma : \simplex \toto \R$ be an elicitable property with a finite representative set.
  Then $\trimred(\Gamma)$ is the set of full-dimensional level sets of $\Gamma$.
\end{corollary}
\begin{proof}
  Let $L$ elicit $\Gamma$.
  From Lemma~\ref{lem:X}(\ref{item:X-min-Theta},\ref{item:X-full-dim}), for any finite minumum representative set $\Sc\subseteq\R$, the set $\{\Gamma_s\mid s\in\Sc\}$ is exactly the set of full-dimensional level sets $\Theta$ of $\Gamma$.
  From Proposition~\ref{prop:tfae-min-rep-nonredundant}, we have $r \in \R\setminus \red(\Gamma)$ if and only if $r$ is an element of some minimum representative set.
  As $\Gamma$ has at least one minimum representative set, we conclude $\trimred(\Gamma) = \{\Gamma_r \mid r\in \R\setminus\red(\Gamma)\} = \Theta$.  
\end{proof}

\begin{proposition}\label{prop:embed-trim}
  Let $\Gamma:\simplex\toto\reals^d$ be an elicitable property.
  The following are equivalent:
  \begin{enumerate}\setlength{\itemsep}{0pt}
  \item $\Gamma$ embeds a elicitable finite property $\gamma:\simplex \toto \R$.
  \item $\trimred(\Gamma)$ is a finite set.
  \item There is a finite minimum representative set $\U$ for $\Gamma$.
  \item There is a finite set of full-dimensional level sets $\hat\Theta$ of $\Gamma$, and $\cup\,\hat\Theta = \simplex$.
  \end{enumerate}
  Moreover, when any of the above hold, $\trimred(\gamma) = \trimred(\Gamma) = \{\Gamma_u \mid u\in\U\} = \hat\Theta$.
\end{proposition}

\begin{proof}
  Let $L$ be a fixed loss eliciting $\Gamma$, so that in particular $\risk L$ is fixed.
  By definition of elicitable properties, $L$ is minimizable.
  In each case, we will show that $\risk L$ is polyhedral (or equivalently, that $L$ has a finite representative set), and thus Lemma~\ref{lem:X} will give us the set $\Theta$ of full-dimensional level sets of $\Gamma$, uniquely determined by $\risk L$.
  We will prove $1 \Rightarrow 2 \Rightarrow 3 \Rightarrow 4 \Rightarrow 1$, and in each case show that the relevant set of level sets is equal to $\Theta$, giving the result.

  $1 \Rightarrow 2$:
  Let $\Sc$ be the representative set for $\gamma$ and $\varphi:\Sc\to\reals^d$ the embedding.
  Since $\Sc$ is finite, $\varphi(\Sc)$ is a finite representative set for $\Gamma$ (and $L$; thus, $\risk L$ is polyhedral).
  Corollary~\ref{cor:trim-prop-red} now gives $\trimred(\Gamma) = \Theta$, which is finite, showing Case 2.

  $2 \Rightarrow 3$:
  If $\trimred(\Gamma)$ is finite, then in particular we have a finite set of reports $\Sc \subseteq \reals^d$ such that $\trimred(\Gamma) = \{\Gamma_s \mid s\in\Sc\}$.
  As $\Gamma$ is elicitable, $\reals^d$ is representative for $\Gamma$.
  By definition of $\trimred$, we have $\simplex = \cup_{r\in\reals^d} \Gamma_r = \cup \trimred(\Gamma) = \cup_{s\in\Sc} \Gamma_s$, and therefore $\Sc$ is representative for $\Gamma$ and for $L$.
  As $\Sc$ is finite, we have $\risk L$ polyhedral.
  From Lemma~\ref{lem:X}(\ref{item:X-rep-contain-min}), we have some minimum representative set $\U\subseteq\Sc$ for $L$ and $\Gamma$, implying statement 3.
  Moreover, Lemma~\ref{lem:X}(\ref{item:X-min-Theta},\ref{item:X-full-dim}) gives $\{\Gamma_u \mid u\in\U\} = \Theta$.

  $3 \Rightarrow 4$:
  Let $\U$ be a finite minimum representative set for $\Gamma$.
  Then $\risk L = \risk{L|_\U}$ is polyhedral.
  Lemma~\ref{lem:X}(\ref{item:X-min-Theta},\ref{item:X-full-dim}) once again gives $\{\Gamma_u \mid u\in\U\} = \Theta$.
  We simply let $\hat\Theta = \Theta$, giving statement 4 as $\U$ is representative.

  $4 \Rightarrow 1$:
  Let $\Sc\subseteq\R$ such that $\{\Gamma_s \mid s\in\Sc\} = \hat \Theta$.
  Then $\Sc$ is representative for $\Gamma$ and $L$, as $\cup\hat\Theta = \simplex$.
  Again, this yields a finite representative set for $L$.
  Lemma~\ref{lem:loss-restrict} now states that $L$ embeds $L|_\Sc$, so $\Gamma$ embeds $\gamma := \Gamma|_\Sc$, giving Case 1.
  Finally, Corollary~\ref{cor:trim-prop-red} gives $\trimred(\gamma) = \Theta$.
\end{proof}

As a final observation, recall that a property $\Gamma$ elicited by a polyhedral loss has a finite range, in the sense that there are only finitely many optimal sets $\Gamma(p)$ for $p\in\simplex$ (Lemma~\ref{lem:polyhedral-range-gamma}).
Proposition~\ref{prop:embed-trim} shows the dual statement: there are only finitely many level sets $\Gamma_u$ for $u\in\reals^d$.
In other words, both $\Gamma$ and $\Gamma^{-1}$ have a finite range as multivalued maps.

\section{Polyhedral Indirect Elicitation Implies Consistency}
\label{sec:poly-ie-consistency}

Our last result concerns indirect elicitation as a necessary condition for consistency when restricting to polyhedral losses.
Intuitively, a loss $L$ indirectly elicits a property $\gamma$ if we can compute $\gamma$ from $\prop L$.
To formalize the condition, we use the notion of a property refining another from~\citet{frongillo2014general}.

\begin{definition}[Refines]
	Let $\Gamma:\simplex \toto \R$ and $\Gamma':\simplex\toto \R'$.
	Then $\Gamma$ \emph{refines} $\Gamma'$ if for all $r \in \R$, there exists $r' \in \R'$ such that $\Gamma_{r} \subseteq \Gamma'_{r'}$.
\end{definition}
Equivalently, $\Gamma$ refines $\Gamma'$ if there is some ``link'' function $\psi:\R\to\R'$ such that $r\in\Gamma(p) \implies \psi(r) \in \Gamma'(p)$ for all $p\in\simplex$.
We will use the fact that refinement is transitive: if $\Gamma$ refines $\Gamma'$ and $\Gamma'$ refines $\Gamma''$, then $\Gamma$ refines $\Gamma''$.

\begin{definition}[Indirectly elicits]
  A loss $L$ \emph{indirectly elicits} a property $\gamma$ if $\prop L$ refines $\gamma$.
\end{definition}

It is straightforward to verify that consistency, and therefore calibration, implies indirect elicitation~\citep{finocchiaro2021unifying,agarwal2015consistent,steinwart2008support}.
Indirect elicitation may appear much weaker than calibration, since in particular it does not depend on the loss except through the property it elicits, and thus only depends on the exact minimizers of the loss.
Surprisingly, for minimizable polyhedral surrogates, we show the converse: indirect elicitation implies calibration, and therefore consistency.

A useful lemma is that for minimizable polyhedral losses, indirect elicitation must always pass through an embedding.
This result holds more generally whenever $L$ has a finite representative set, as in \S~\ref{sec:min-rep-sets}.
\begin{lemma}\label{lem:ie-iff-embeds-refinement}
  Let $L$ be a minimizable polyhedral loss.
  Then $L$ indirectly elicits a property $\gamma$ if and only if $L$ tightly embeds a discrete loss $\ell$ such that $\prop \ell$ refines $\gamma$.
\end{lemma}
\begin{proof}
  Let $L:\reals^d\to\reals^\Y_+$ be polyhedral, and $\Gamma = \prop L$.
  Then $L$ tightly embeds a discrete loss from Lemma~\ref{lem:X}(\ref{item:X-tight-embed}).
  Furthermore, Lemma~\ref{lem:X}(\ref{item:X-min-Theta},\ref{item:X-redundant},\ref{item:X-tight-embed}) implies that $\prop L$ refines $\prop \ell$ for any discrete loss $\ell$ that $L$ tightly embeds.

  We claim that, for any property $\gamma$, and any loss $\ell$ that $L$ tightly embeds, $\prop L$ refines $\gamma$ if and only if $\prop \ell$ refines $\gamma$.
  If $\prop \ell$ refines $\gamma$, then $\prop L$ refines $\gamma$ by transitivity.
  For the other direction, Lemma~\ref{lem:X}(\ref{item:X-min-Theta},\ref{item:X-tight-embed}) shows that the level sets of $\prop \ell$ are contained in the set $\{\Gamma_u \mid u\in\reals^d\}$.
  Thus, if $\prop L$ refines $\gamma$, then in particular $\prop \ell$ refines $\gamma$.
  The result now follows immediately from the claim.
\end{proof}

\begin{theorem}\label{thm:poly-ie-implies-consistent}
	Let $L$ be a minimizable polyhedral loss which indirectly elicits a finite property $\gamma$.
  For any loss $\ell$ eliciting $\gamma$, there exists a link $\psi$ such that $(L, \psi)$ is calibrated (and consistent) with respect to $\ell$.
\end{theorem}
\begin{proof}
	Let $L:\reals^d \to \reals^\Y_+$ be a polyhedral loss indirectly eliciting $\gamma: \simplex \toto \R$, and let $\ell$ be a discrete loss eliciting $\gamma$.
  By Lemma~\ref{lem:ie-iff-embeds-refinement}, $L$ tightly embeds a discrete loss $\ell^\emb:\R^\emb\to\reals^\Y_+$ such that $\gamma^\emb := \prop{\ell^\emb}$ refines $\gamma$.
  From refinement, we can define a function $\psi^\R: \R^\emb \to \R$ such that for all $r\in\R^\emb$ and $p\in\simplex$ we have $r\in\gamma^\emb(p) \implies \psi^\R(r) \in \gamma(p)$. 
  Finally, Theorem~\ref{thm:link-main} gives a link function $\psi^\emb : \reals^d \to \R^\emb$ such that $(L,\psi^\emb)$ is calibrated with respect to $\ell^\emb$.

  Consider $\psi := \psi^\R \circ \psi^\emb$ and fix $p\in\simplex$.
	For any $u\in\reals^d$, if $\psi^\emb(u) \in \gamma^\emb(p)$, then $\psi(u) = \psi^\R(\psi^\emb(u)) \in \gamma(p)$ by definition of $\psi$ and $\psi^\R$.
  Contrapositively,
  $\psi(u) \notin \gamma(p) \implies \psi^\emb(u) \notin \gamma^\emb(p)$.
  Thus, we have
  \begin{equation}
    \label{eq:link-set-inclusion}
    \{u\in\reals^d \mid \psi(u) \not \in \gamma(p) \} \subseteq \{u\in\reals^d \mid \psi^\emb(u) \not \in \gamma^\emb(p) \}~.
  \end{equation}
  Combining eq.~\eqref{eq:link-set-inclusion} with the fact that $(L,\psi^\emb)$ is calibrated with respect to $\ell^\emb$, we have
	\begin{align*}
\inf_{u\in\reals^d : \psi(u) \not \in \gamma(p)} \inprod{p}{L(u)} \geq	\inf_{u\in\reals^d : \psi^\emb(u) \not \in \gamma^\emb(p)} \inprod{p}{L(u)} > \inf_{u\in\reals^d}\inprod{p}{L(u)}~,
	\end{align*}
  showing calibration of $\psi$.
	Consistency follows as calibration and consistency are equivalent in this setting~\citep{ramaswamy2016convex}.
\end{proof}

Theorem~\ref{thm:poly-ie-implies-consistent} gives a somewhat surprising result: despite the fact that indirect elicitation appears to be a somewhat weak necessary condition for consistency in general, the two conditions are equivalent for polyhedral surrogates.

\section{Conclusions} \label{sec:conclusion}

Several directions for future work remain.
We show in Theorem~\ref{thm:poly-ie-implies-consistent} that indirect elicitation is equivalent to consistency when restricting to the class of polyhedral surrogates; we would like to identify other classes of surrogates for which this equivalence holds.
It would also be interesting to explore embeddings through the lens of superprediction sets~\citep{williamson2014geometry}.
Finally, it is important for applications to understand the minimum prediction dimension $d$ of a consistent convex surrogate $L:\reals^d\to \reals^\Y_+$ for a given target problem, also called its elicitation complexity.
One approach to this question is to first understand the minimum $d$ for which an embedding $L$ exists, a study initiated by \citet{finocchiaro2020embedding}, and then relate this dimension to polyhedral, or general convex, elicitation complexity.

\subsection*{Acknowledgements}
We thank Arpit Agarwal and Peter Bartlett for many early discussions and insights,
Stephen Becker for a reference to Hoffman constants,
and Nishant Mehta, Enrique Nueve, and Anish Thilagar for other suggestions.
This material is based upon work supported by the National Science Foundation under Grants No.\ 1657598 and No.\ DGE 1650115.
\newpage
\bibliographystyle{plainnat}
\bibliography{diss,extra}

\begin{thebibliography}{47}
\providecommand{\natexlab}[1]{#1}
\providecommand{\url}[1]{\texttt{#1}}
\expandafter\ifx\csname urlstyle\endcsname\relax
  \providecommand{\doi}[1]{doi: #1}\else
  \providecommand{\doi}{doi: \begingroup \urlstyle{rm}\Url}\fi

\bibitem[Abernethy et~al.(2013)Abernethy, Chen, and
  Vaughan]{abernethy2013efficient}
Jacob Abernethy, Yiling Chen, and Jennifer~Wortman Vaughan.
\newblock Efficient market making via convex optimization, and a connection to
  online learning.
\newblock \emph{ACM Transactions on Economics and Computation}, 1\penalty0
  (2):\penalty0 12, 2013.
\newblock URL \url{http://dl.acm.org/citation.cfm?id=2465777}.

\bibitem[Agarwal and Agarwal(2015)]{agarwal2015consistent}
Arpit Agarwal and Shivani Agarwal.
\newblock On consistent surrogate risk minimization and property elicitation.
\newblock In \emph{{JMLR} {Workshop} and {Conference} {Proceedings}},
  volume~40, pages 1--19, 2015.
\newblock URL \url{http://www.jmlr.org/proceedings/papers/v40/Agarwal15.pdf}.

\bibitem[Aurenhammer(1987)]{aurenhammer1987power}
Franz Aurenhammer.
\newblock Power diagrams: properties, algorithms and applications.
\newblock \emph{SIAM Journal on Computing}, 16\penalty0 (1):\penalty0 78--96,
  1987.
\newblock URL \url{http://epubs.siam.org/doi/pdf/10.1137/0216006}.

\bibitem[Bao et~al.(2020)Bao, Scott, and Sugiyama]{bao2020calibrated}
Han Bao, Clayton Scott, and Masashi Sugiyama.
\newblock Calibrated surrogate losses for adversarially robust classification.
\newblock \emph{The Conference on Learning Theory (COLT)}, 2020.

\bibitem[Bartlett and Wegkamp(2008)]{bartlett2008classification}
Peter~L Bartlett and Marten~H Wegkamp.
\newblock Classification with a reject option using a hinge loss.
\newblock \emph{Journal of Machine Learning Research}, 9\penalty0
  (Aug):\penalty0 1823--1840, 2008.

\bibitem[Bartlett et~al.(2006)Bartlett, Jordan, and
  McAuliffe]{bartlett2006convexity}
Peter~L. Bartlett, Michael~I. Jordan, and Jon~D. McAuliffe.
\newblock Convexity, classification, and risk bounds.
\newblock \emph{Journal of the American Statistical Association}, 101\penalty0
  (473):\penalty0 138--156, 2006.
\newblock URL
  \url{http://amstat.tandfonline.com/doi/abs/10.1198/016214505000000907}.

\bibitem[Boyd and Vandenberghe(2004)]{boyd2004convex}
S.P. Boyd and L.~Vandenberghe.
\newblock \emph{Convex optimization}.
\newblock Cambridge University Press, 2004.

\bibitem[Cortes et~al.(2016)Cortes, DeSalvo, and Mohri]{cortes2016learning}
Corinna Cortes, Giulia DeSalvo, and Mehryar Mohri.
\newblock Learning with rejection.
\newblock In \emph{International Conference on Algorithmic Learning Theory},
  pages 67--82. Springer, 2016.

\bibitem[Crammer and Singer(2001)]{crammer2001algorithmic}
Koby Crammer and Yoram Singer.
\newblock On the algorithmic implementation of multiclass kernel-based vector
  machines.
\newblock \emph{Journal of machine learning research}, 2\penalty0
  (Dec):\penalty0 265--292, 2001.

\bibitem[Duchi et~al.(2018)Duchi, Khosravi, Ruan, et~al.]{duchi2018multiclass}
John Duchi, Khashayar Khosravi, Feng Ruan, et~al.
\newblock Multiclass classification, information, divergence and surrogate
  risk.
\newblock \emph{The Annals of Statistics}, 46\penalty0 (6B):\penalty0
  3246--3275, 2018.

\bibitem[El-Yaniv and Wiener(2010)]{elyaniv2010foundations}
Ran El-Yaniv and Yair Wiener.
\newblock On the foundations of noise-free selective classification.
\newblock \emph{Journal of Machine Learning Research}, 11\penalty0
  (53):\penalty0 1605--1641, 2010.
\newblock URL \url{http://jmlr.org/papers/v11/el-yaniv10a.html}.

\bibitem[Finocchiaro et~al.(2020)Finocchiaro, Frongillo, and
  Waggoner]{finocchiaro2020embedding}
Jessie Finocchiaro, Rafael Frongillo, and Bo~Waggoner.
\newblock Embedding dimension of polyhedral losses.
\newblock \emph{The Conference on Learning Theory}, 2020.

\bibitem[Finocchiaro et~al.(2021)Finocchiaro, Frongillo, and
  Waggoner]{finocchiaro2021unifying}
Jessie Finocchiaro, Rafael Frongillo, and Bo~Waggoner.
\newblock Unifying prediction bounds for consistent convex surrogates.
\newblock \emph{arXiv}, 2021.
\newblock URL \url{https://arxiv.org/pdf/2102.08218.pdf}.

\bibitem[Fissler et~al.(2016)Fissler, Ziegel, and {others}]{fissler2016higher}
Tobias Fissler, Johanna~F Ziegel, and {others}.
\newblock Higher order elicitability and {Osband}’s principle.
\newblock \emph{The Annals of Statistics}, 44\penalty0 (4):\penalty0
  1680--1707, 2016.

\bibitem[Frongillo and Kash(2014)]{frongillo2014general}
Rafael Frongillo and Ian Kash.
\newblock General truthfulness characterizations via convex analysis.
\newblock In \emph{Web and {Internet} {Economics}}, pages 354--370. Springer,
  2014.

\bibitem[Frongillo and Kash(2015{\natexlab{a}})]{frongillo2015vector-valued}
Rafael Frongillo and Ian Kash.
\newblock Vector-{Valued} {Property} {Elicitation}.
\newblock In \emph{Proceedings of the 28th {Conference} on {Learning}
  {Theory}}, pages 1--18, 2015{\natexlab{a}}.

\bibitem[Frongillo and Kash(2015{\natexlab{b}})]{frongillo2015elicitation}
Rafael Frongillo and Ian~A. Kash.
\newblock On {Elicitation} {Complexity}.
\newblock In \emph{Advances in {Neural} {Information} {Processing} {Systems}
  29}, 2015{\natexlab{b}}.

\bibitem[Frongillo and Waggoner(2021)]{frongillo2021surrogate}
Rafael Frongillo and Bo~Waggoner.
\newblock Surrogate regret bounds for polyhedral losses.
\newblock \emph{Advances in Neural Information Processing Systems}, 34, 2021.

\bibitem[Gallier(2008)]{gallier2008notes}
Jean Gallier.
\newblock Notes on convex sets, polytopes, polyhedra, combinatorial topology,
  voronoi diagrams and delaunay triangulations, 2008.

\bibitem[Gneiting(2011)]{gneiting2011making}
T.~Gneiting.
\newblock Making and {Evaluating} {Point} {Forecasts}.
\newblock \emph{Journal of the American Statistical Association}, 106\penalty0
  (494):\penalty0 746--762, 2011.

\bibitem[Hoffman(1952)]{hoffman1952approximate}
Alan~J Hoffman.
\newblock On approximate solutions of systems of linear inequalities.
\newblock \emph{Journal of Research of the National Bureau of Standards},
  49\penalty0 (4), 1952.

\bibitem[Lambert(2018)]{lambert2018elicitation}
Nicolas~S. Lambert.
\newblock Elicitation and evaluation of statistical forecasts.
\newblock 2018.
\newblock URL
  \url{https://web.stanford.edu/\~nlambert/papers/elicitability.pdf}.

\bibitem[Lambert et~al.(2008)Lambert, Pennock, and
  Shoham]{lambert2008eliciting}
Nicolas~S. Lambert, David~M. Pennock, and Yoav Shoham.
\newblock Eliciting properties of probability distributions.
\newblock In \emph{Proceedings of the 9th {ACM} {Conference} on {Electronic}
  {Commerce}}, pages 129--138, 2008.

\bibitem[Lapin et~al.(2015)Lapin, Hein, and Schiele]{lapin2015top}
Maksim Lapin, Matthias Hein, and Bernt Schiele.
\newblock Top-k multiclass svm.
\newblock In \emph{Advances in Neural Information Processing Systems}, pages
  325--333, 2015.

\bibitem[Madras et~al.(2018)Madras, Pitassi, and Zemel]{madras2018predict}
David Madras, Toniann Pitassi, and Richard Zemel.
\newblock Predict responsibly: Improving fairness and accuracy by learning to
  defer, 2018.

\bibitem[Menon et~al.(2019)Menon, Rawat, Reddi, and Kumar]{menon2019multilabel}
Aditya~K Menon, Ankit~Singh Rawat, Sashank Reddi, and Sanjiv Kumar.
\newblock Multilabel reductions: what is my loss optimising?
\newblock In H.~Wallach, H.~Larochelle, A.~Beygelzimer, F.~d'~Alch\'{e}-Buc,
  E.~Fox, and R.~Garnett, editors, \emph{Advances in Neural Information
  Processing Systems 32}, pages 10600--10611. Curran Associates, Inc., 2019.
\newblock URL
  \url{http://papers.nips.cc/paper/9245-multilabel-reductions-what-is-my-loss-optimising.pdf}.

\bibitem[Osband and Reichelstein(1985)]{osband1985information-eliciting}
Kent Osband and Stefan Reichelstein.
\newblock Information-eliciting compensation schemes.
\newblock \emph{Journal of Public Economics}, 27\penalty0 (1):\penalty0
  107--115, June 1985.
\newblock ISSN 0047-2727.
\newblock \doi{10.1016/0047-2727(85)90031-3}.
\newblock URL
  \url{http://www.sciencedirect.com/science/article/pii/0047272785900313}.

\bibitem[Ramaswamy et~al.(2015)Ramaswamy, Tewari, and
  Agarwal]{ramaswamy2015hierarchical}
Harish Ramaswamy, Ambuj Tewari, and Shivani Agarwal.
\newblock Convex calibrated surrogates for hierarchical classification.
\newblock In \emph{International Conference on Machine Learning}, pages
  1852--1860, 2015.

\bibitem[Ramaswamy and Agarwal(2016)]{ramaswamy2016convex}
Harish~G Ramaswamy and Shivani Agarwal.
\newblock Convex calibration dimension for multiclass loss matrices.
\newblock \emph{The Journal of Machine Learning Research}, 17\penalty0
  (1):\penalty0 397--441, 2016.

\bibitem[Ramaswamy et~al.(2018)Ramaswamy, Tewari, Agarwal,
  et~al.]{ramaswamy2018consistent}
Harish~G Ramaswamy, Ambuj Tewari, Shivani Agarwal, et~al.
\newblock Consistent algorithms for multiclass classification with an abstain
  option.
\newblock \emph{Electronic Journal of Statistics}, 12\penalty0 (1):\penalty0
  530--554, 2018.

\bibitem[Reid and Williamson(2010)]{reid2010composite}
M.D. Reid and R.C. Williamson.
\newblock Composite binary losses.
\newblock \emph{The Journal of Machine Learning Research}, 9999:\penalty0
  2387--2422, 2010.

\bibitem[Rockafellar(1997)]{rockafellar1997convex}
R.T. Rockafellar.
\newblock \emph{Convex analysis}, volume~28 of \emph{Princeton {Mathematics}
  {Series}}.
\newblock Princeton University Press, 1997.

\bibitem[Savage(1971)]{savage1971elicitation}
L.J. Savage.
\newblock Elicitation of personal probabilities and expectations.
\newblock \emph{Journal of the American Statistical Association}, pages
  783--801, 1971.

\bibitem[Steinwart and Christmann(2008)]{steinwart2008support}
Ingo Steinwart and Andreas Christmann.
\newblock \emph{Support {Vector} {Machines}}.
\newblock Springer Science \& Business Media, September 2008.
\newblock ISBN 978-0-387-77242-4.
\newblock Google-Books-ID: HUnqnrpYt4IC.

\bibitem[Steinwart et~al.(2014)Steinwart, Pasin, Williamson, and
  Zhang]{steinwart2014elicitation}
Ingo Steinwart, Chloé Pasin, Robert Williamson, and Siyu Zhang.
\newblock Elicitation and {Identification} of {Properties}.
\newblock In \emph{Proceedings of {The} 27th {Conference} on {Learning}
  {Theory}}, pages 482--526, 2014.

\bibitem[Tewari and Bartlett(2007)]{tewari2007consistency}
Ambuj Tewari and Peter~L. Bartlett.
\newblock On the consistency of multiclass classification methods.
\newblock \emph{The Journal of Machine Learning Research}, 8:\penalty0
  1007--1025, 2007.
\newblock URL \url{http://dl.acm.org/citation.cfm?id=1390325}.

\bibitem[Wang and Scott(2020)]{wang2020weston}
Yutong Wang and Clayton Scott.
\newblock Weston-watkins hinge loss and ordered partitions.
\newblock \emph{Advances in neural information processing systems}, 2020.

\bibitem[Williamson(2014)]{williamson2014geometry}
Robert~C. Williamson.
\newblock The geometry of losses.
\newblock In Maria~Florina Balcan, Vitaly Feldman, and Csaba Szepesvári,
  editors, \emph{Proceedings of The 27th Conference on Learning Theory},
  volume~35 of \emph{Proceedings of Machine Learning Research}, pages
  1078--1108, Barcelona, Spain, 13--15 Jun 2014. PMLR.
\newblock URL \url{https://proceedings.mlr.press/v35/williamson14.html}.

\bibitem[Williamson et~al.(2016)Williamson, Vernet, and
  Reid]{williamson2016composite}
Robert~C Williamson, Elodie Vernet, and Mark~D Reid.
\newblock Composite multiclass losses.
\newblock \emph{Journal of Machine Learning Research}, 17\penalty0
  (223):\penalty0 1--52, 2016.

\bibitem[Yang and Koyejo(2019)]{yang2018consistency}
Forest Yang and Sanmi Koyejo.
\newblock On the consistency of top-k surrogate losses.
\newblock \emph{CoRR}, abs/1901.11141, 2019.
\newblock URL \url{http://arxiv.org/abs/1901.11141}.

\bibitem[Yu and Blaschko(2018)]{yu2018lovasz}
Jiaqian Yu and Matthew~B Blaschko.
\newblock The lov{\'a}sz hinge: A novel convex surrogate for submodular losses.
\newblock \emph{IEEE transactions on pattern analysis and machine
  intelligence}, 2018.

\bibitem[Yuan and Wegkamp(2010)]{yuan2010classification}
Ming Yuan and Marten Wegkamp.
\newblock Classification methods with reject option based on convex risk
  minimization.
\newblock \emph{Journal of Machine Learning Research}, 11\penalty0
  (Jan):\penalty0 111--130, 2010.

\bibitem[Zalinescu(2003)]{zalinescu2003sharp}
Constantin Zalinescu.
\newblock Sharp estimates for hoffman's constant for systems of linear
  inequalities and equalities.
\newblock \emph{SIAM Journal on Optimization}, 14\penalty0 (2):\penalty0
  517--533, 2003.

\bibitem[Zhang et~al.(2018)Zhang, Wang, and Qiao]{zhang2018reject}
Chong Zhang, Wenbo Wang, and Xingye Qiao.
\newblock On reject and refine options in multicategory classification.
\newblock \emph{Journal of the American Statistical Association}, 113\penalty0
  (522):\penalty0 730--745, 2018.
\newblock \doi{10.1080/01621459.2017.1282372}.
\newblock URL \url{https://doi.org/10.1080/01621459.2017.1282372}.

\bibitem[Zhang and Agarwal(2020)]{zhang2020bayes}
Mingyuan Zhang and Shivani Agarwal.
\newblock Bayes consistency vs. h-consistency: The interplay between surrogate
  loss functions and the scoring function class.
\newblock \emph{Advances in Neural Information Processing Systems}, 33, 2020.

\bibitem[Zhang et~al.(2020)Zhang, Ramaswamy, and Agarwal]{zhang2020convex}
Mingyuan Zhang, Harish~G Ramaswamy, and Shivani Agarwal.
\newblock Convex calibrated surrogates for the multi-label f-measure.
\newblock 2020.

\bibitem[Zhang(2004)]{zhang2004statistical}
Tong Zhang.
\newblock Statistical analysis of some multi-category large margin
  classification methods.
\newblock \emph{Journal of Machine Learning Research}, 5\penalty0
  (Oct):\penalty0 1225--1251, 2004.

\end{thebibliography}

\appendix

\newpage
\section{Power diagrams}\label{app:power-diagrams}
First, we present several definitions from Aurenhammer~\cite{aurenhammer1987power}.
\begin{definition}\label{def:cell-complex}
  A \emph{cell complex} in $\reals^d$ is a set $C$ of faces (of dimension $0,\ldots,d$) which (i) union to $\reals^d$, (ii) have pairwise disjoint relative interiors, and (iii) any nonempty intersection of faces $F,F'$ in $C$ is a face of $F$ and $F'$ and an element of $C$.
\end{definition}

\begin{definition}\label{def:power-diagram}
  Given sites $s_1,\ldots,s_k\in\reals^d$ and weights $w_1,\ldots,w_k \geq 0$, the corresponding \emph{power diagram} is the cell complex given by
  \begin{equation}
    \label{eq:pd}
    \cell(s_i) = \{ x \in\reals^d : \forall j \in \{1,\ldots,k\} \, \|x - s_i\|^2 - w_i \leq \|x - s_j\|^2 - w_j\}~.
  \end{equation}
\end{definition}

\begin{definition}\label{def:affine-equiv}
  A cell complex $C$ in $\reals^d$ is \emph{affinely equivalent} to a (convex) polyhedron $P \subseteq \reals^{d+1}$ if $C$ is a (linear) projection of the faces of $P$.
\end{definition}

Proposition~\ref{prop:embed-bayes-risks}, focuses on matching the values of Bayes Risks, while the following result from~\citet{aurenhammer1987power} allows us to move towards understanding the projection of the Bayes Risk onto the simplex $\simplex$.
In particular, one can consider the epigraph of a polyhedral convex function on $\reals^d$ and the projection down to $\reals^d$; in this case we call the resulting power diagram \emph{induced} by the convex function.

\begin{theorem}[Aurenhammer~\cite{aurenhammer1987power}]\label{thm:aurenhammer}
	A cell complex is affinely equivalent to a convex polyhedron if and only if it is a power diagram.
\end{theorem}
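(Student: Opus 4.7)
The plan is to prove both directions of Aurenhammer's equivalence via the classical paraboloid lifting, which identifies a power diagram in $\reals^d$ with the projection of a convex polyhedron in $\reals^{d+1}$.

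For the forward direction, given sites $s_1,\ldots,s_k\in\reals^d$ with weights $w_1,\ldots,w_k$, I expand
\[
  \|x - s_i\|^2 - w_i = \|x\|^2 - 2\inprod{s_i}{x} + \|s_i\|^2 - w_i~,
\]
and observe that the common $\|x\|^2$ term is irrelevant to the argmin over $i$, so minimizing the power distance is equivalent to maximizing the affine function $h_i(x) := 2\inprod{s_i}{x} + w_i - \|s_i\|^2$. Hence $\cell(s_i) = \{x : h_i(x) \geq h_j(x) \text{ for all } j\}$. Set $f := \max_i h_i$, a convex polyhedral function, and let $P := \mathrm{epi}(f) \subseteq \reals^{d+1}$; this $P$ is a convex polyhedron. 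Its lower faces (those lying on the graph $y = f(x)$) are pieces of the affine hyperplanes $y = h_i(x)$, and under the projection $\pi:(x,y)\mapsto x$ they map exactly onto the full-dimensional cells of the power diagram, while lower-dimensional faces of the graph project onto the lower-dimensional cells (corresponding to ties among multiple $h_i$). This establishes affine equivalence via the polyhedron $P$.

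For the reverse direction, suppose the cell complex $C$ in $\reals^d$ is the linear projection of the faces of a convex polyhedron $P \subseteq \reals^{d+1}$ under a linear projection $\pi$. By a linear change of coordinates on $\reals^{d+1}$, I may assume $\pi(x,y) = x$. The cells of $C$ cover $\reals^d$ and have pairwise disjoint relative interiors, which forces the faces of $P$ contributing to $C$ to lie on a single ``envelope''; concretely, one can reduce to the case where $P$ is the epigraph of a convex polyhedral function $f:\reals^d\to\reals$. Writing $f = \max_i h_i$ for finitely many affine functions $h_i(x) = \inprod{a_i}{x} + b_i$ (one per full-dimensional cell), I invert the algebra from the forward direction by defining $s_i := a_i/2$ and $w_i := b_i + \|s_i\|^2$, so that $h_i(x) = 2\inprod{s_i}{x} + w_i - \|s_i\|^2$. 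Then $C$ is exactly the power diagram of $(s_i, w_i)_i$. To satisfy $w_i \geq 0$ in Definition~\ref{def:power-diagram}, shift all $h_i$ by a common constant $c$, which shifts each $w_i$ by $c$ without altering cell membership; picking $c$ large enough enforces nonnegativity.

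I expect the main obstacle to be justifying the reduction in the reverse direction to the case where $P$ is the epigraph of a convex polyhedral function. A priori, the faces of $P$ can project to overlapping regions of $\reals^d$ (think of both top and bottom faces of a bounded polytope), so one must use the cell-complex axioms---covering all of $\reals^d$ with disjoint relative interiors---to conclude that all relevant faces lie on the graph of a single convex polyhedral function. This can be formalized by showing that the recession cone of $P$ must contain a vertical direction, which lets us restrict attention to a single ``lower envelope'' whose faces project bijectively (at the level of relative interiors) to the cells of $C$.
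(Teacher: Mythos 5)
This theorem is not proved in the paper at all---it is imported verbatim from Aurenhammer's 1987 paper---so there is no internal proof to compare against; I can only assess your argument on its own terms, and it is the standard (and essentially Aurenhammer's own) lifting proof. The forward direction is clean: completing the square turns the power distance into an affine function $h_i$ up to the common $\|x\|^2$ term, the upper envelope $f=\max_i h_i$ is finite everywhere so every proper face of $\mathrm{epi}(f)$ lies on the graph of $f$, and the vertical projection carries those faces bijectively onto the cells of the diagram. The reverse direction's algebra ($s_i = a_i/2$, $w_i = b_i + \|s_i\|^2$, plus the common shift to make the weights nonnegative) is also correct.

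The one place you would need to do real work is exactly the reduction you flag: passing from ``$C$ is a linear projection of the faces of a polyhedron $P\subseteq\reals^{d+1}$'' to ``$P$ is the epigraph of a convex polyhedral function.'' Your recession-cone idea is the right one, but state it carefully: since the projected relative interiors of distinct faces are disjoint and together cover $\reals^d$, a vertical line through a generic point of a full-dimensional cell can meet $\partial P$ in only one point, which forces each such line to meet $P$ in a single point or a ray; convexity then puts either $+e_{d+1}$ or $-e_{d+1}$ in $\mathrm{rec}(P)$ (ruling out bounded $P$, whose upper and lower shells would project onto overlapping, bounded shadows). In the $-e_{d+1}$ case $P$ is the hypograph of a concave polyhedral function, and you must reflect $y\mapsto -y$ to land in the epigraph case; you should say this explicitly rather than assert the epigraph form directly. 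You should also note that distinct full-dimensional cells yield affine pieces with distinct gradients $a_i$ (parallel pieces would have one dominating the other), so the resulting sites are genuinely distinct. With those points made, the argument is complete and matches the classical proof.
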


We extend Theorem~\ref{thm:aurenhammer} to a weighted sum of convex functions, showing that the induced power diagram is the same for any choice of strictly positive weights.

\begin{lemma}\label{lem:polyhedral-pd-same}
	Let $f_1,\ldots,f_m:\reals^d\to\reals$ be polyhedral convex functions.
	The power diagram induced by $\sum_{i=1}^m p_i f_i$ is the same for all $p \in \inter(\simplex)$.
\end{lemma}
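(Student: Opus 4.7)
The plan is to give an explicit description of the cells of the induced power diagram that is manifestly independent of $p$. Since each $f_i$ is polyhedral convex, I would write $f_i(x) = \max_{j\in J_i}\langle a_{i,j},x\rangle + b_{i,j}$ with $J_i$ finite, and for each $i,j$ let $C_{i,j} = \{x : \langle a_{i,j},x\rangle + b_{i,j} = f_i(x)\}$ denote the closed region on which the $j$-th affine piece is active. The collection $\{C_{i,j}\}_{j\in J_i}$ is a polyhedral subdivision of $\reals^d$ that is determined by $f_i$ alone.

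For any fixed $p \in \inter\simplex$, distributing the sum through the individual maxima gives
\[
g_p(x) := \sum_{i=1}^m p_i f_i(x) = \max_{(j_1,\ldots,j_m) \in \prod_i J_i} \sum_{i=1}^m p_i\bigl(\langle a_{i,j_i},x\rangle + b_{i,j_i}\bigr).
\]
The crucial step, which is where strict positivity $p_i > 0$ enters, is the biconditional: a tuple $(j_1,\ldots,j_m)$ attains this outer max at a point $x$ if and only if each $j_i$ attains the max defining $f_i(x)$, i.e., $x \in \bigcap_i C_{i,j_i}$. The ``if'' direction is immediate by summing; the ``only if'' direction uses that any slack in a single coordinate $i$ is scaled by the strictly positive weight $p_i$ and strictly reduces the total, so a non-maximizing $j_i$ cannot appear in an optimal tuple. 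This is also precisely where the hypothesis $p \in \inter\simplex$ (rather than in $\simplex$ itself) is essential.

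From this biconditional, the full-dimensional cells of the power diagram induced by $g_p$ are exactly the sets $\bigcap_i C_{i,j_i}$ of nonempty interior, indexed over tuples in $\prod_i J_i$, and the lower-dimensional faces are determined from these by the standard cell complex structure (Definition~\ref{def:cell-complex}). Since neither the subdivisions $\{C_{i,j}\}$ nor the set of tuples with full-dimensional intersection depends on $p$, the induced power diagram is the same for every $p \in \inter\simplex$. The main point requiring care is the identification between the induced power diagram (defined via projection of full-dimensional faces of the epigraph of $g_p$, invoking Theorem~\ref{thm:aurenhammer}) and the intersection-of-cells description above; once that correspondence is pinned down, the lemma follows immediately.
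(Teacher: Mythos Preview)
Your approach is sound and leads to a correct proof, but by a more explicit, combinatorial route than the paper's. The paper works directly with subdifferentials: by the sum rule $\partial g_p(x)=\sum_i p_i\,\partial f_i(x)$, and since each $p_i>0$, the Minkowski sum is a singleton if and only if every $\partial f_i(x)$ is; hence the differentiability locus of $g_p$ is independent of $p$, and the facet projections (the closures of its connected components) are recovered from it. Your ``active tuple'' biconditional is the same Minkowski-sum observation phrased in terms of attaining maxima, so the two arguments share the same core idea but differ in presentation: the paper's version is coordinate-free, while yours gives a concrete description of the cells as the common refinement $\bigcap_i C_{i,j_i}$.

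The step you flag as ``requiring care'' is indeed the crux, and as written it is a small but genuine gap. You must rule out the possibility that two distinct full-dimensional intersections $\bigcap_i C_{i,j_i}$ and $\bigcap_i C_{i,j'_i}$ yield the \emph{same} affine piece of $g_p$ for some particular $p$, in which case they would merge into a single facet projection and the diagram would depend on $p$ after all. Your own biconditional closes this: if two such cells gave the same affine piece, convexity forces $g_p$ to be affine (hence differentiable) on their convex hull; but any point of that hull outside both interiors lies on the boundary of some $C_{i_0,j}$, so by the biconditional at least two tuples are $g_p$-active there with distinct $a_{i_0,\cdot}$ (assuming a minimal representation of $f_{i_0}$), contradicting differentiability. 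Equivalently, your biconditional shows the differentiability locus of $g_p$ equals $\bigcap_i\{x:f_i\text{ is differentiable at }x\}$, which is exactly the paper's conclusion reached via subdifferentials. Once you insert this short argument, the identification of facet projections with full-dimensional common-refinement cells is complete, and the lemma follows.
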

\begin{proof}
	For any polyhedral convex function $g$ with epigraph $P$, the proof of~\citet[Theorem 4]{aurenhammer1987power} shows that the power diagram induced by $g$ is determined by the facets of $P$.
	Let $F$ be a facet of $P$, and $F'$ its projection down to $\reals^d$.
	It follows that $g|_{F'}$ is affine, and thus $g$ is differentiable on $\inter(F')$ with constant derivative $d\in\reals^d$.
	Conversely, for any subgradient $d'$ of $g$, the set of points $\{x\in\reals^d : d'\in\partial g(x)\}$ is the projection of a face of $P$; we conclude that $F = \{(x,g(x))\in\reals^{d+1} : d\in\partial g(x)\}$ and $F' = \{x\in\reals^d : d\in\partial g(x)\}$.
	
	Now let $f := \sum_{i=1}^k f_i$ with epigraph $P$, and $f' := \sum_{i=1}^k p_i f_i$ with epigraph $P'$.
	By Rockafellar~\cite{rockafellar1997convex}, $f,f'$ are polyhedral.
	We now show that $f$ is differentiable whenever $f'$ is differentiable:
	\begin{align*}
	\partial f(x) = \{d\}
	&\iff \sum_{i=1}^k \partial f_i(x) = \{d\} \\
	&\iff \forall i\in\{1,\ldots,k\}, \; \partial f_i(x) = \{d_i\} \\
	&\iff \forall i\in\{1,\ldots,k\}, \; \partial p_i f_i(x) = \{p_id_i\} \\
	&\iff \sum_{i=1}^k \partial p_if_i(x) = \left\{\sum_{i=1}^k p_id_i\right\} \\
	&\iff \partial f'(x) = \left\{\sum_{i=1}^k p_id_i\right\}~.
	\end{align*}
	From the above observations, every facet of $P$ is determined by the derivative of $f$ at any point in the interior of its projection, and vice versa.
	Letting $x$ be such a point in the interior, we now see that the facet of $P'$ containing $(x,f'(x))$ has the same projection, namely $\{x'\in\reals^d : \nabla f(x) \in \partial f(x')\} = \{x'\in\reals^d : \nabla f'(x) \in \partial f'(x')\}$.
	Thus, the power diagrams induced by $f$ and $f'$ are the same.
	The conclusion follows from the observation that the above held for any strictly positive weights $p$, and $f$ was fixed.
\end{proof}

We now include the full proof of Lemma~\ref{lem:polyhedral-range-gamma}.

\polyhedralrangegamma*
\begin{proof}
	First, observe that $L: \reals^d \to \reals^\Y_+$ is finite and bounded from below (by $\vec 0$), and thus its infimum is finite. 
	Therefore, we can apply \citet[Corollary 19.3.1]{rockafellar1997convex} to conclude that its infimum is attained for all $p \in \simplex$ and is therefore minimizable; thus, elicits a property.
	
	For all $p$, let $P(p)$ be the epigraph of the convex function $u\mapsto \inprod{p}{L(u)}$.
	From Lemma~\ref{lem:polyhedral-pd-same}, we have that the power diagram $D_\Y$ induced by the projection of $P(p)$ onto $\reals^d$ is the same for any $p\in\inter(\simplex)$.
	Let $\F_\Y$ be the set of faces of $D_\Y$, which by the above are the set of faces of $P(p)$ projected onto $\reals^d$ for any $p\in\inter(\simplex)$.
	
	We claim for all $p\in\inter(\simplex)$, that $\Gamma(p) \in \F_\Y$.
	To see this, let $u \in \Gamma(p)$, and $u' = (u,\inprod{p}{L(u)}) \in P(p)$.
	The optimality of $u$ is equivalent to $u'$ being contained in the face $F$ of $P(p)$ exposed by the normal $(0,\ldots,0,-1)\in\reals^{d+1}$.
	Thus, $\Gamma(p) = \argmin_{u\in\reals^d} \inprod{p}{L(u)}$ is a projection of $F$ onto $\reals^d$, which is an element of $\F_\Y$.
	
	Now for $p \not \in \inter(\simplex)$, consider $\Y'\subsetneq \Y$, $\Y'\neq\emptyset$.
	Applying the above argument, we have a similar guarantee: a finite set $\F_{\Y'}$ such that $\Gamma(p) \in \F_{\Y'}$ for all $p$ with support exactly $\Y'$.
	Taking $\F = \bigcup\{\F_{\Y'} | \Y'\subseteq\Y, \Y'\neq\emptyset\}$, we have for all $p\in\simplex$ that $\Gamma(p) \in \F$, giving $\U \subseteq \F$.
	As $\F$ is finite, so is $\U$, and the elements of $\U$ are closed polyhedra as faces of $D_{\Y'}$ for some $\Y'\subseteq\Y$.
\end{proof}

\section{Equivalence of Separation and Calibration for Polyhedral Surrogates}
\label{sec:equiv-sep-calib}

We recall that Theorem \ref{thm:link-main} states that, if a polyhedral $L$ embeds a discrete $\ell$, then there exists a calibrated link $\psi$.
Theorem \ref{thm:link-main} is directly implied by the combination of Theorem \ref{thm:calibrated-separated}, that calibration is equivalent to separation (Definition \ref{def:sep-link}); and Theorem \ref{thm:thickened-separated}, existence of a separated link.
Theorem \ref{thm:calibrated-separated} is proven in this section and Theorem \ref{thm:thickened-separated} is proven in Appendix \ref{app:sep-link-exists}.

Throughout we will work with the two \emph{regret} functions:
the \emph{surrogate regret} $R_L(u,p) = \inprod{p}{L(u)} - \risk{L}(p)$, and similarly the \emph{target regret} $R_{\ell}(r,p) = \inprod{p}{\ell(r)} - \risk{\ell}(p)$.
In fact, the results in this section can be extended to surrogate regret bounds; see \citet{frongillo2021surrogate}.

We first show one direction: any calibrated link from a polyhedral surrogate to a discrete target must be $\epsilon$-separated.
The proof follows a similar argument to that of~\citet[Lemma 6]{tewari2007consistency}.
\begin{lemma}\label{lemma:calibrated-eps-sep}
  Let polyhedral surrogate $L:\reals^d \to \reals^\Y_+$, discrete loss $\ell:\R\to\reals^\Y_+$, and link $\psi:\reals^d\to\R$ be given such that $(L,\psi)$ is calibrated with respect to $\ell$.
  Then there exists $\epsilon>0$ such that $\psi$ is $\epsilon$-separated with respect to   $\prop{L}$ and $\prop{\ell}$.
\end{lemma}
\begin{proof}
  Let $\Gamma := \prop{L}$ and $\gamma := \prop{\ell}$.
  Suppose that $\psi$ is not $\epsilon$-separated for any $\epsilon>0$.
  Then letting $\epsilon_i := 1/i$ we have sequences $\{p_i\}_i \subset \simplex$ and  $\{u_i\}_i \subset \reals^d$ such that for all $i\in\mathbb N$ we have both $\psi(u_i) \notin \gamma(p_i)$ and $d_\infty(u_i,\Gamma(p_i)) \leq \epsilon_i$.
  First, observe that there are only finitely many values for $\gamma(p_i)$ and $\Gamma(p_i)$, as $\R$ is finite and $L$ is polyhedral (from Lemma~\ref{lem:polyhedral-range-gamma}).
  Thus, there must be some $p\in\simplex$ and some infinite subsequence indexed by $j\in J \subseteq \mathbb N$ where
  for all $j\in J$, we have $\psi(u_j) \notin \gamma(p)$ and $\Gamma(p_j) = \Gamma(p)$.

  Next, observe that, as $L$ is polyhedral, the expected loss $\inprod{p}{L(u)}$ is $\beta$-Lipschitz in $\|\cdot\|_\infty$ for some $\beta>0$.
  Thus, for all $j\in J$, we have
  \begin{align*}
    d_\infty(u_i,\Gamma(p)) \leq \epsilon_j
    &\implies \exists u^*\in\Gamma(p) \|u_j-u^*\|_\infty \leq \epsilon_j
    \\
    &\implies \left| \inprod{p}{L(u_j)} - \inprod{p}{L(u^*)} \right| \leq \beta\epsilon_j
    \\
    &\implies \left| \inprod{p}{L(u_j)} - \risk{L}(p) \right| \leq \beta\epsilon_j~.
  \end{align*}
  Finally, for this $p$, we have
  \begin{align*}
    \inf_{u:\psi(u)\notin\gamma(p)} \inprod{p}{L(u)}
    \leq
    \inf_{j\in J} \inprod{p}{L(u_j)}
    =
    \risk{L}(p)~,
  \end{align*}
  contradicting the calibration of $\psi$.
\end{proof}

For the other direction, we will make use of Hoffman constants for systems of linear inequalities.
See \citet{zalinescu2003sharp} for a modern treatment.
\begin{theorem}[Hoffman constant \cite{hoffman1952approximate}]
  \label{thm:hoffman}
  Given a matrix $A\in\reals^{m\times n}$, there exists some smallest $H(A)\geq 0$, called the \emph{Hoffman constant} (with respect to $\|\cdot\|_\infty$), such that for all $b\in\reals^m$ and all $x\in\reals^n$,
  \begin{equation}
    \label{eq:hoffman}
    d_\infty(x,S(A,b)) \leq H(A) \|(A x - b)_+\|_\infty~,
  \end{equation}
  where $S(A,b) = \{x\in\reals^n \mid A x \leq b\}$ and $(u)_+ := \max(u,0)$ component-wise.
\end{theorem}

\begin{lemma}\label{lemma:hoffman-polyhedral}
  Let $L: \reals^d \to \reals_+^{\Y}$ be a polyhedral loss with $\Gamma = \prop{L}$.
  Then for any fixed $p$, there exists some smallest constant $H_{L,p} \geq 0$ such that $d_{\infty}(u,\Gamma(p)) \leq H_{L,p} R_L(u,p)$ for all $u \in \reals^d$.
\end{lemma}
\begin{proof}
  Since $L$ is polyhedral, there exist $a_1,\ldots,a_m \in \reals^d$ and $c\in\reals^m$ such that we may write $\inprod{p}{L(u)} = \max_{1\leq j\leq m} a_j \cdot u + c_j$.
  Let $A \in \reals^{m\times d}$ be the matrix with rows $a_j$, and let $b = \risk{L}(p)\ones - c$, where $\ones\in\reals^m$ is the all-ones vector.
  Then we have
  \begin{align*}
    S(A,b)
    &:= \{u\in\reals^d \mid A u \leq b\}
    \\
    &= \{u\in\reals^d \mid A u + c \leq \risk{L}(p)\ones\}
    \\
    &= \{u\in\reals^d \mid \forall i\, (A u + c)_i \leq \risk{L}(p)\}
    \\
    &= \{u\in\reals^d \mid \max_i \;(A u + c)_i \leq \risk{L}(p)\}
    \\
    &= \{u\in\reals^d \mid \inprod{p}{L(u)} \leq \risk{L}(p)\}
    \\
    & = \Gamma(p)~.
  \end{align*}
  Similarly, we have $\max_i\; (A u - b)_i = \inprod{p}{L(u)} - \risk{L}(p) = \regret{L}{u}{p} \geq 0$.
  Thus,
  \begin{align*}
    \|(Au - b)_+\|_\infty
    &= \max_i\; ((Au - b)_+)_i
    \\
    &= \max((Au - b)_1,\ldots,(Au - b)_m, 0)
    \\
    &= \max(\max_i\; (Au - b)_i, \, 0)
    \\
    &= \max_i\; (Au - b)_i
    \\
    &= \regret{L}{u}{p}~.
  \end{align*}
  Now applying Theorem~\ref{thm:hoffman}, we have
  \begin{align*}
    d_\infty(u,\Gamma(p))
    &=    d_\infty(u,S(A,b))
    \\
    &\leq H(A) \|(Au-b)_+\|_\infty
    \\
    &= H(A) \regret{L}{u}{p}~.\qedhere
  \end{align*}
\end{proof}

We are now ready to prove Theorem \ref{thm:calibrated-separated} as desired.
\calibratedseparated*
\begin{proof}
  Let $\gamma=\prop{\ell}$ and $\Gamma=\prop{L}$.
  From Lemma~\ref{lemma:calibrated-eps-sep}, calibration implies $\epsilon$-separation.
  For the converse, suppose $\psi$ is $\epsilon$-separated with respect to $L$ and $\ell$.
  Fix $p\in\simplex$.
  To show calibration, it suffices to find a positive lower bound for $R_L(u,p)$ that holds for all $u\in\reals^d$ with $\psi(u) \notin \gamma(p)$.
  
  Applying the definition of $\epsilon$-separated and Lemma~\ref{lemma:hoffman-polyhedral}, $\psi(u) \notin \gamma(p)$ implies
  \begin{align*}
    \epsilon &<    d_{\infty}(u,\Gamma(p)) \leq H_{L,p} R_L(u,p) .
  \end{align*}
  Let $C_{\ell} = \max_{r,p} R_{\ell}(r,p)$.
  Then $R_{\ell}(\psi(u),p) \leq C_{\ell} \leq \frac{C_{\ell} H_{L,p}}{\epsilon} R_L(u,p)$.

  If $H_{L,p} = 0$, then for all $u\in\reals^d$ we have $R_\ell(\psi(u),p) = 0$, so calibration for this $p$ is trivial.
  Similarly, if $C_\ell = 0$, then $R_\ell(r,p) = 0$ for all $r\in\R$, so again $R_\ell(\psi(u),p) = 0$ for all $u\in\reals^d$.

  Now assume $C_\ell > 0$ and $H_{L,p} > 0$.
  Let $C'_{\ell,p} \doteq \min_{r \notin \gamma(p)} R_\ell(r,p) > 0$.
  (As we assume $C_\ell > 0$, we must have $\gamma(p) \neq \R$, so the minimum is attained.)
  Then for all $u$ such that $\psi(u) \notin \gamma(p)$, we have $R_\ell(\psi(u),p) \geq C'_{\ell,p}$.
  Rearranging, we have
  \[ \psi(u) \notin \gamma(p) \implies R_L(u,p) \geq \frac{C'_{\ell,p} \epsilon}{C_\ell H_{L,p}} > 0~.\]
  Thus, $\inf_{u : \psi(u) \notin \gamma(p)} \inprod{L(u)}{p} > \risk{L}(p)$.
  Since the above holds for all $p\in\simplex$, $\psi$ is calibrated.
\end{proof}

\section{Existence of a Separated Link} \label{app:sep-link-exists}
In this section, we prove Theorem \ref{thm:thickened-separated}, as discussed at the beginning of Appendix \ref{sec:equiv-sep-calib}.

We define some notation and assumptions to be used throughout this section.
Let some norm $\|\cdot\|$ on finite-dimensional Euclidean space be given.
Given a set $T$ and a point $u$, let $d(T,u) = \inf_{t \in T} \|t-u\|$.
Given two sets $T,T'$, let $d(T,T') = \inf_{t\in T, t' \in T'} \|t-t'\|$.
Finally, let the ``thickening'' $B(T,\epsilon)$ be defined as
  \[ B(T,\epsilon) = \{u \in \R' : d(T,u) < \epsilon \} . \]

\begin{assumption} \label{assume:cal}
  $\ell: \R \times \Y \to \reals^{\Y}_+$ is a loss on a finite report set $\R$, eliciting the property $\gamma: \simplex \toto \R$.
  It is embedded by $L: \reals^d \times \Y \to \reals^{\Y}_+$, which elicits the property $\Gamma: \simplex \toto \reals^d$.
  The embedding points are $\{\varphi(r) : r \in \R\}$.
\end{assumption}

Given Assumption \ref{assume:cal}, let $\mathcal{S} \subseteq 2^{\R}$ be defined as $\mathcal{S} = \{\gamma(p) : p \in \Delta_{\Y}\}$.
In other words, for each $p$, we take the set of optimal reports $R = \gamma(p) \subseteq \R$, and we add $R$ to $\mathcal{S}$.
Let $\U \subseteq 2^{\reals^d}$ be defined as $\U = \{\Gamma(p) : p \in \Delta_{\Y}\}$.
For each $U \in \U$, let $R_U = \{r: \varphi(r) \in U\}$.

The next lemma shows that if a subset of $\U$ intersect, then their corresponding report sets intersect as well.
\begin{lemma} \label{lemma:calibrated-pos}
  Let $\U' \subseteq \U$.
  If $\cap_{U\in\U'} U \neq \emptyset$ then $\cap_{U\in\U'} R_U \neq \emptyset$.
\end{lemma}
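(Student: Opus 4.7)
The approach is to reduce this set-theoretic statement to an averaging argument that exploits the equality of Bayes risks $\risk{L}=\risk{\ell}$ established in Proposition~\ref{prop:embed-bayes-risks}. The plan is to pick a common point $u^*$ from $\cap_{U\in\U'} U$, form a uniform mixture $\bar{p}$ of representative distributions defining each $U$, extract any optimal report $r$ for $\bar{p}$, and then argue via nonnegative individual suboptimality gaps that $r$ must be optimal for every representative, and hence lie in every $R_U$.

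First I would unpack what $R_U$ is. Given $U\in\U$, pick any $p_U\in\simplex$ with $\Gamma(p_U)=U$. By the embedding condition~\eqref{eq:embed-loss}, $R_U=\{r:\varphi(r)\in\Gamma(p_U)\}=\gamma(p_U)$. Since $\R$ is finite there are only finitely many distinct $R_U$'s across $\U'$, so by choosing one $U\in\U'$ for each distinct $R_U$ I may replace $\U'$ by a finite subfamily $\U''$ with $\cap_{U\in\U''}R_U=\cap_{U\in\U'}R_U$. The hypothesis $\cap_{U\in\U'}U\neq\emptyset$ still applies to the smaller $\U''$, so it suffices to produce an element of $\cap_{U\in\U''}R_U$.

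Next I would exploit linearity. Let $u^*\in\cap_{U\in\U'}U$ and set $\bar{p}=\tfrac{1}{|\U''|}\sum_{U\in\U''}p_U\in\simplex$. Since $u^*\in\Gamma(p_U)$ minimizes $\inprod{p_U}{L(\cdot)}$ for every $U\in\U''$, linearity of expected loss in $p$ gives $u^*\in\Gamma(\bar{p})$, hence $\inprod{\bar{p}}{L(u^*)}=\risk{L}(\bar{p})=\risk{\ell}(\bar{p})$ by Proposition~\ref{prop:embed-bayes-risks}. Now pick any $r\in\gamma(\bar{p})$. For each $U\in\U''$ the inequality $\inprod{p_U}{\ell(r)}\geq\risk{\ell}(p_U)=\risk{L}(p_U)=\inprod{p_U}{L(u^*)}$ holds, where the last equality uses $u^*\in\Gamma(p_U)$. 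Averaging these inequalities over $U\in\U''$ recovers $\inprod{\bar{p}}{\ell(r)}=\inprod{\bar{p}}{L(u^*)}$, so every individual nonnegative gap must vanish, forcing $r\in\gamma(p_U)=R_U$ for all $U\in\U''$.

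The main obstacle to anticipate is conceptual: $u^*$ need not be an embedded point and may simultaneously be optimal for very different representative distributions, so no pointwise inversion of $\varphi$ can yield the desired common report directly. The averaging trick bypasses this by replacing pointwise matching with the global equality of Bayes risks, after which the ``sum of nonnegative numbers equal to zero forces each term to be zero'' principle delivers a single report that is optimal for all the $p_U$ at once.
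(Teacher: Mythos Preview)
Your proof is correct and takes a genuinely different route from the paper's. The paper fixes $u\in\cap_{U\in\U'}U$ and invokes the structural Proposition~\ref{prop:embed-trim}, which says $\trim(\Gamma)=\{\gamma_r:r\in\R\}$; hence there is some $r$ with $\Gamma_u\subseteq\gamma_r$. For any $p$ with $\Gamma(p)=U\in\U'$ one then has $p\in\Gamma_u\subseteq\gamma_r$, so $r\in\gamma(p)$ and therefore $\varphi(r)\in U$, giving $r\in R_U$. Your argument instead bypasses the power-diagram machinery of Proposition~\ref{prop:embed-trim} entirely, resting only on the more primitive equality $\risk{L}=\risk{\ell}$ from Proposition~\ref{prop:embed-bayes-risks}: you form a finite mixture $\bar p$ of representative distributions, pick $r\in\gamma(\bar p)$, and use the ``nonnegative gaps summing to zero'' trick to force $r\in\gamma(p_U)$ for every $U$. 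The paper's approach is cleaner once Proposition~\ref{prop:embed-trim} is in hand and needs no finiteness reduction on $\U'$; your approach is more elementary and self-contained, and the averaging device is a nice alternative. (Incidentally, your finiteness reduction via $|2^\R|<\infty$ is valid but unnecessary here, since Lemma~\ref{lem:polyhedral-range-gamma} already gives $|\U|<\infty$.)
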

\begin{proof}
  Let $u \in \cap_{U\in\U'} U$.
  Our first claim is that there exists $r$ such that $\Gamma_u \subseteq \gamma_r$.
  This follows from Proposition \ref{prop:embed-trim}, which shows that $\trim(\Gamma) = \{ \gamma_r : r \in \R\}$.
  Each $\Gamma_u$ is either in $\trim(\Gamma)$ or is contained in some set in $\trim(\Gamma)$, by definition, proving the first claim.
  Our second claim is that $r \in \cap_{U\in\U'} R_U$, which proves the lemma.
  To prove the second claim, take any $U \in \U'$.
  There is some $p$ such that $U = \Gamma(p)$, and we have in particular $p \in \Gamma_u$.
  By the first claim, $p \in \gamma_r$.
  By definition of embedding, $p \in \gamma_r \implies \varphi(r) \in \Gamma(p) = U$, so $r \in R_U$.
\end{proof}
Lemma \ref{lemma:calibrated-pos} implies that there exists a $\psi$ such that $(L,\psi)$ indirectly elicits $\ell$: for each $u$, let $\U' = \{U\in\U : u \in U\}$ be the optimal sets that contain it; choose $r$ from the nonempty set $\cap_{U \in\U'} R_U$; and set $\psi(u) = r$.

The main problem now is to prove a ``thickened'' analogue of Lemma \ref{lemma:calibrated-pos} that extends this link to points $u$ that are up to $\epsilon$ far from an optimal set $U$.
Namely, Lemma \ref{lemma:thick-empty} will show that if $\epsilon$ is small enough, then the $\epsilon$-thickenings of all $U \in \U'$ intersect if and only if the $U$ sets themselves intersect.
Thus, if $u \in \cap_{U \in \U'} B(U,\epsilon)$, then $u \in \cap_{U \in \U'} U$, and Lemma \ref{lemma:calibrated-pos} gives some legal target report $\psi(u) = r \in \cap_{U \in \U'} R_U$.

The next few geometric results build to Lemma \ref{lemma:thick-empty}.
Then, the main proof will be completed as we have just sketched.

\begin{lemma} \label{lemma:enclose-halfspaces}
  Let $D$ be a closed, convex polyhedron in $\reals^d$.
  For any $\epsilon > 0$, there exists an \emph{open}, convex set $D'$, the intersection of a finite number of open halfspaces, such that
    \[ D \subseteq D' \subseteq B(D,\epsilon) . \]
\end{lemma}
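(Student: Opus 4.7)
The plan is to construct $D'$ by uniformly relaxing each of the finitely many closed halfspaces that define $D$. Since $D$ is a closed convex polyhedron, write $D = \bigcap_{i=1}^m \{x \in \reals^d : \inprod{a_i}{x} \leq b_i\}$ for some finite collection $\{(a_i, b_i)\}_{i=1}^m$, and for $\delta > 0$ set
\[
  D'_\delta \;:=\; \bigcap_{i=1}^m \{x \in \reals^d : \inprod{a_i}{x} < b_i + \delta\}.
\]
By construction $D'_\delta$ is a finite intersection of open halfspaces, hence open and convex, and the inclusion $D \subseteq D'_\delta$ is immediate since $\inprod{a_i}{x} \leq b_i$ implies $\inprod{a_i}{x} < b_i + \delta$.

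The substantive step is to choose $\delta$ so that $D'_\delta \subseteq B(D,\epsilon)$. The natural tool is Hoffman's bound on linear inequality systems: letting $A$ denote the matrix with rows $a_i$ and $b$ the vector $(b_i)$, there exists a constant $C$ (depending only on $A$ and the ambient norm $\|\cdot\|$) such that $d(x, D) \leq C \cdot \|[Ax - b]_+\|$ for every $x \in \reals^d$, where $[\,\cdot\,]_+$ denotes the componentwise positive part. For any $x \in D'_\delta$ each coordinate of $Ax - b$ is strictly less than $\delta$, so $\|[Ax - b]_+\|$ is bounded by a constant multiple of $\delta$. Hence $d(x,D) = O(\delta)$ uniformly over $x \in D'_\delta$, and choosing $\delta$ sufficiently small yields $D'_\delta \subseteq B(D, \epsilon)$ as required.

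The main obstacle is really just invoking Hoffman's lemma, which is standard but not entirely elementary, and one must be careful that $D'_\delta$ may be unbounded even when $D$ is — but this is harmless because Hoffman's bound is uniform over $\reals^d$. If a self-contained argument is preferred, one can induct on the number of defining halfspaces (or on the dimension of the affine hull of $D$), relaxing one facet at a time and controlling the orthogonal drift directly via the angles between the bounding hyperplanes. The degenerate cases $D = \emptyset$ (take any two disjoint open halfspaces) and $D = \reals^d$ (take the empty intersection, under the convention that this equals $\reals^d$) can be handled separately.
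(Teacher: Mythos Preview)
Your argument is correct, but it takes a genuinely different route from the paper. You relax each of the defining halfspaces of $D$ by $\delta$ and then invoke Hoffman's error bound to control the distance from the relaxed set back to $D$; this is clean and explicit in terms of the halfspace description, at the cost of importing a nontrivial (though standard) result. The paper instead works geometrically: it forms the Minkowski sum $D'' = D + S'$ with a small closed $\ell_1$-ball $S'$ (chosen, via equivalence of norms, so that $S' \subseteq B(\{0\},\epsilon)$), observes that a Minkowski sum of closed polyhedra is again a closed polyhedron, and then takes $D'$ to be the interior of $D''$. The containment $D'' \subseteq B(D,\epsilon)$ is immediate from $S' \subseteq B(\{0\},\epsilon)$, and $D$ lies in the interior of $D''$ because $S'$ contains a neighborhood of the origin. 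So the paper trades Hoffman's lemma for the (also standard) fact that Minkowski sums of polyhedra are polyhedra; your version gives the halfspaces of $D'$ explicitly, while the paper's version makes the inclusion $D' \subseteq B(D,\epsilon)$ trivial and pushes the work into the polyhedrality of the sum.
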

\begin{proof}
  Let $S$ be the standard open $\epsilon$-ball $B(\{\vec{0}\},\epsilon)$.
  Note that $B(D,\epsilon) = D + S$ where $+$ is the Minkowski sum.
  Now let $S' = \{u : \|u\|_1 \leq \delta\}$ be the closed $\delta$ ball in $L_1$ norm.
  By equivalence of norms in Euclidean space~\cite[Appendix A.1.4]{boyd2004convex}, we can take $\delta$ small enough yet positive such that $S' \subseteq S$.
  By standard results, the Minkowski sum of two closed, convex polyhedra, $D'' = D + S'$ is a closed polyhedron, i.e. the intersection of a finite number of closed halfspaces. (A proof: we can form the higher-dimensional polyhedron $\{(x,y,z) : x \in D, y \in S', z = x+y\}$, then project onto the $z$ coordinates.)

  Now, if $T' \subseteq T$, then the Minkowksi sum satisfies $D + T' \subseteq D + T$.
  In particular, because $\emptyset \subseteq S' \subseteq S$, we have
    \[ D \subseteq D'' \subseteq B(D,\epsilon) . \]
  Now let $D'$ be the interior of $D''$, i.e. if $D'' = \{x : Ax \leq b\}$, then we let $D' = \{x: Ax < b\}$.
  We retain $D' \subseteq B(D,\epsilon)$.
  Further, we retain $D \subseteq D'$, because $D$ is contained in the interior of $D'' = D + S'$.
  (Proof: if $x \in D$, then for some $\gamma$, $x + B(\{\vec{0}\},\gamma) = B(x,\gamma)$ is contained in $D + S'$.)
  This proves the lemma.
\end{proof}

\begin{lemma} \label{lemma:thick-nonempty}
  Let $\{U_j : j \in \mathcal{J}\}$ be a finite collection of closed, convex sets with $\cap_{j\in\mathcal{J}} U_j \neq \emptyset$.
  Let $\delta > 0$ be given.
  Then there exists  $\epsilon_0 > 0$ such that, for all $0 < \epsilon \leq \epsilon_0$, $\cap_j B(U_j,\epsilon) \subseteq B(\cap_j U_j, \delta)$.
\end{lemma}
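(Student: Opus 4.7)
Interpretation and overall strategy. The statement as written leaves $\delta$ unquantified; reading it in context of Theorem~\ref{thm:eps-thick-calibrated}, the intended form is: for every $\delta > 0$, there exists $\epsilon > 0$ such that $\bigcap_{j\in\mathcal{J}} B(U_j,\epsilon) \subseteq B\bigl(\bigcap_{j\in\mathcal{J}} U_j,\,\delta\bigr)$. I would fix $\delta>0$ and argue by contradiction: if no such $\epsilon$ exists, there is a sequence $(u_n)$ with $u_n \in \bigcap_j B(U_j, 1/n)$ and $d(u_n, C) \geq \delta$, where $C := \bigcap_j U_j$.

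Projection reduction and bounded case. The first step is to normalize to $d(u_n, C) = \delta$. Let $q_n$ be the metric projection of $u_n$ onto $C$ (unique by closed convexity), and set $u_n' := q_n + (\delta/\|u_n - q_n\|)(u_n - q_n)$. Standard projection facts yield $\pi_C(u_n') = q_n$, so $d(u_n', C) = \delta$; and convexity of each $u \mapsto d(u, U_j)$, together with $d(q_n, U_j) = 0$ (since $q_n \in C \subseteq U_j$), gives $d(u_n', U_j) \leq (\delta/\|u_n - q_n\|)\, d(u_n, U_j) < 1/n$. So WLOG $d(u_n, C) = \delta$ throughout. If $(u_n)$ is bounded, any convergent subsequence $u_n \to u^*$ satisfies $d(u^*, U_j) = 0$ for each $j$ by continuity and closedness, hence $u^* \in C$; but $d(u^*, C) = \delta > 0$ by continuity of $d(\cdot, C)$, a contradiction.

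Unbounded case. If $\|u_n\| \to \infty$, then $\|q_n\| \to \infty$ as well. Fix $c^* \in C$ and pass to subsequences so that $v_n := (q_n - c^*)/\|q_n - c^*\| \to v$ with $\|v\|=1$ and $w_n := u_n - q_n \to w$ with $\|w\|=\delta$. A convexity-and-closedness argument shows $v \in \mathrm{rec}(U_j)$ for each $j$, hence $v \in \mathrm{rec}(C)$: for fixed $t\geq 0$, the point $c^* + tv_n$ eventually lies on $[c^*,q_n] \subseteq C$, and the limit $c^* + tv$ is in $C$ by closedness. The projection optimality $\langle x - q_n, u_n - q_n\rangle \leq 0$ for $x \in C$, applied to $x=c^*$ and to $x=c^*+tv$ with $t$ arbitrary, yields $\langle v,w\rangle = 0$ in the limit. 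Taking nearest points $p_j^n \in U_j$ to $u_n$ with $\|p_j^n - u_n\| < 1/n$, the triangle inequality gives $\|p_j^n - (q_n+w)\| \leq \|p_j^n - u_n\| + \|w_n - w\| \to 0$, so $d(q_n+w, U_j) \to 0$ for each $j$. The crux is to find a point of $C$ close to $q_n+w$: using $v \in \mathrm{rec}(C)$, I would argue that for $s$ large the ``asymptotic displacement'' $c^* + sv + w$ lies in each $U_j$, hence in $C$, and then choose $s = s_n$ so that this point is within $o(1)$ of $u_n$, giving $d(u_n,C) \to 0$ and contradicting $d(u_n,C) = \delta$.

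Main obstacle. The hardest step is the last: controlling the asymptotic-direction discrepancy $\|(q_n - c^*) - \|q_n - c^*\|\,v\|$, i.e.\ the error between the actual translation $q_n - c^*$ and its first-order approximation $\|q_n - c^*\|\,v$; this quantity need not remain bounded as $n \to \infty$, so justifying that the constructed point in $C$ is close to $u_n$ requires genuine care. Rigorous handling likely goes via a Painlev\'e--Kuratowski limit of the translated sets $U_j - q_n$ combined with a diagonal subsequence extraction selecting $v_n \to v$ rapidly enough relative to the growth of $\|q_n - c^*\|$. Fortunately, in the actual use of this lemma (Theorem~\ref{thm:eps-thick-calibrated}), each $U \in \U$ is a polyhedron by Lemma~\ref{lem:polyhedral-range-gamma}, so Hoffman's error bound for polyhedra provides a constant $c>0$ with $d(x,C) \leq c \max_j d(x, U_j)$, immediately giving the lemma with $\epsilon = \delta/c$ and bypassing the unbounded case entirely.
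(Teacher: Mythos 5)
The ``main obstacle'' you flag in the unbounded case is not just hard---it is insurmountable, because the lemma as stated (for every $\delta>0$ there is $\epsilon>0$, for arbitrary closed convex sets) is \emph{false}. Take $U_1=\{(x,y,z)\in\reals^3: z\ge\sqrt{x^2+y^2}\}$ and $U_2=\{(x,y,z): z\le x\}$; both are closed and convex, and $U_1\cap U_2=\{(t,0,t):t\ge 0\}$ is nonempty. For fixed $\delta>0$ the point $p_t=(t,\delta,\sqrt{t^2+\delta^2})$ lies in $U_1$, has $d(p_t,U_2)=(\sqrt{t^2+\delta^2}-t)/\sqrt{2}\to 0$ as $t\to\infty$, yet $d(p_t,U_1\cap U_2)\ge\delta$ for all $t$. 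So no $\epsilon$ works, and no amount of Painlev\'e--Kuratowski machinery will rescue the sequential argument; your instinct that the asymptotic-direction discrepancy cannot be controlled is exactly right. (The paper's own proof quietly assumes polyhedrality at the step ``$G$ and $G'$ are separated by a nonzero distance''---disjoint closed convex sets need not be at positive distance---and Lemma~\ref{lemma:enclose-halfspaces} is likewise stated for polyhedra; the lemma should really be read as a statement about polyhedra, which is all that Lemma~\ref{lem:polyhedral-range-gamma} supplies to Theorem~\ref{thm:eps-thick-calibrated}.)

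Your fallback is therefore the actual proof, and it is a correct and genuinely different one from the paper's. The paper inducts on $|\mathcal{J}|$: it peels off one set $U_j$, encloses $D=U_j\cap U'$ in an open polyhedron $D'\subseteq B(D,\delta)$ via Lemma~\ref{lemma:enclose-halfspaces}, and shows, one defining halfspace at a time, that the thickened sets cannot meet outside $D'$ for small $\epsilon$. Your route instead writes each $U_j=\{x:A_jx\le b_j\}$, applies Hoffman's error bound to the stacked system defining $C=\cap_j U_j$ to get $d(x,C)\le c\max_j\|(A_jx-b_j)_+\|\le c'\max_j d(x,U_j)$, and sets $\epsilon=\delta/c'$. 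This is shorter, avoids the induction and the separation step entirely, and yields the stronger conclusion of a \emph{linear} modulus (bounded linear regularity), whereas the bounded-case compactness argument you open with gives no quantitative rate. My only substantive criticism is structural: the sequential contradiction argument occupying most of your write-up should be discarded rather than presented as the main line with Hoffman as a footnote, since its unbounded case is unfixable and the Hoffman argument stands alone.
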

\begin{proof}
  We induct on $|\mathcal{J}|$.
  If $|\mathcal{J}|=1$, set $\epsilon = \delta$.
  If $|\mathcal{J}|>1$, let $j\in\mathcal{J}$ be arbitrary, let $U' = \cap_{j'\neq j} U_{j'}$, and let $C(\epsilon) = \cap_{j' \neq j} B(U_{j'},\epsilon)$.
  Let $D = U_j \cap U'$.
  We must show that $B(U_j,\epsilon) \cap C(\epsilon) \subseteq B(D,\delta)$.
  By Lemma \ref{lemma:enclose-halfspaces}, we can enclose $D$ strictly within a polyhedron $D'$, the intersection of a finite number of open halfspaces, which is itself strictly enclosed in $B(D,\delta)$.
  (For example, if $D$ is a point, then enclose it in a hypercube, which is enclosed in the ball $B(D,\delta)$.)
  We will prove that, for all small enough $\epsilon$, $B(U_j,\epsilon) \cap C(\epsilon)$ is contained in $D'$.
  This implies that it is contained in $B(D,\delta)$.

  For each halfspace defining $D'$, consider its complement $F$, a closed halfspace.
  We prove that $F \cap B(U_j,\epsilon) \cap C(\epsilon) = \emptyset$.
  Consider the intersections of $F$ with $U$ and $U'$, call them $G$ and $G'$.
  These are closed, convex sets that do not intersect (because $D$ in contained in the complement of $F$).
  So $G$ and $G'$ are separated by a nonzero distance, so $B(G,\gamma) \cap B(G',\gamma) = \emptyset$ for all small enough $\gamma$.
  And $B(G,\gamma) = F \cap B(U_j,\gamma)$ while $B(G',\gamma) = F \cap B(U',\gamma)$.
  This proves that $F \cap B(U_j,\gamma) \cap B(U',\gamma) = \emptyset$.
  By inductive assumption, $C(\epsilon) \subseteq B(U',\gamma)$ for small enough $\epsilon = \epsilon_F$.
  So $F \cap B(U_j,\gamma) \cap C(\epsilon) = \emptyset$.
  We now let $\epsilon_0$ be the minimum over these finitely many $\epsilon_F$ (one per halfspace).
\end{proof}

\begin{figure}
\caption{Illustration of a special case of the proof of Lemma \ref{lemma:thick-nonempty} where there are two sets $U_1,U_2$ and their intersection $D$ is a point. We build the polyhedron $D'$ inside $B(D,\delta)$. By considering each halfspace that defines $D'$, we then show that for small enough $\epsilon$, $B(U_1,\epsilon)$ and $B(U_2,\epsilon)$ do not intersect outside $D'$. So the intersection is contained in $D'$, so it is contained in $B(D,\delta)$.}
\includegraphics[width=0.24\textwidth]{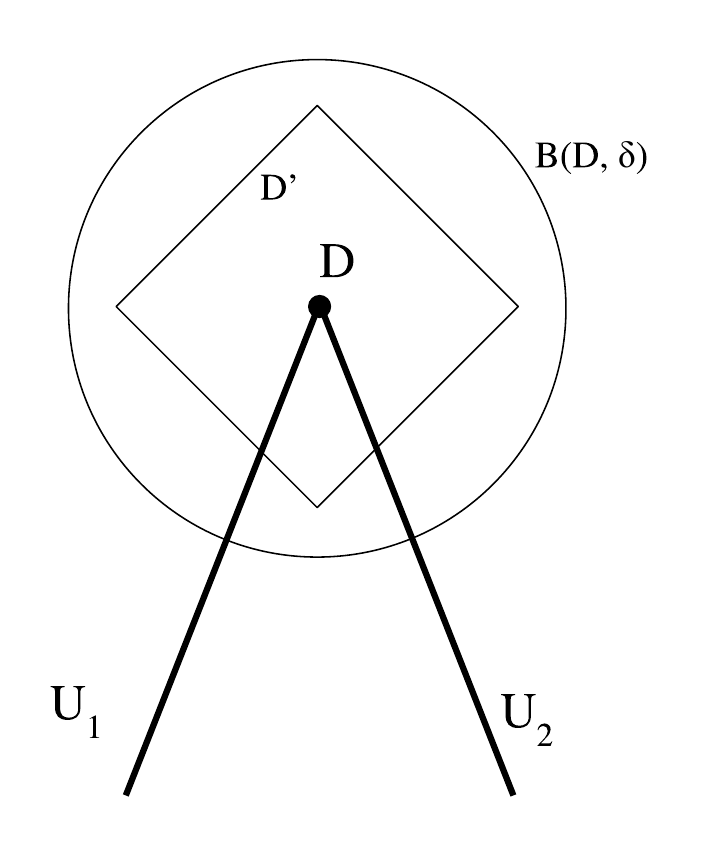} \hfill
\includegraphics[width=0.24\textwidth]{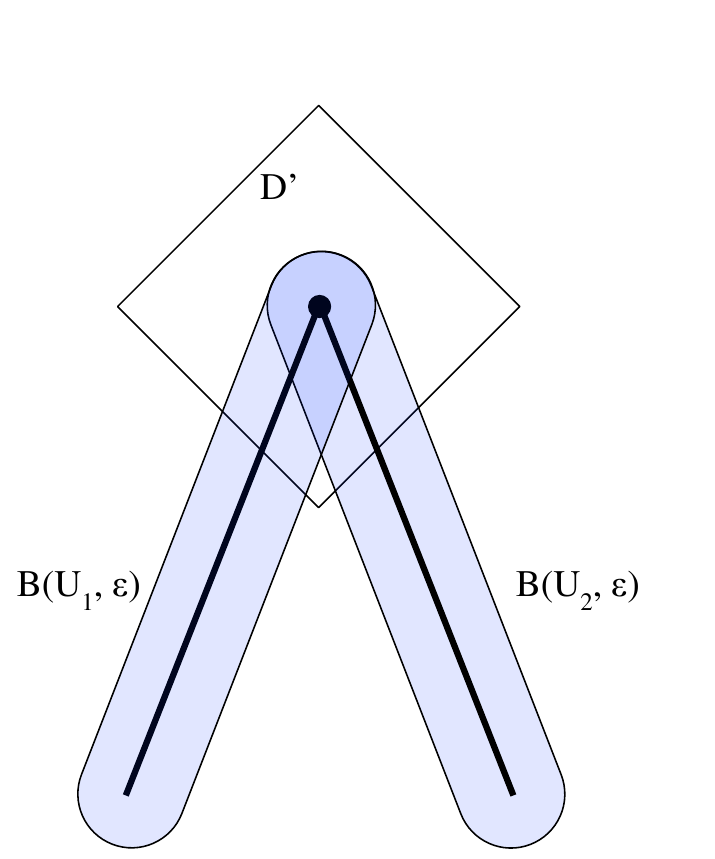} \hfill
\includegraphics[width=0.24\textwidth]{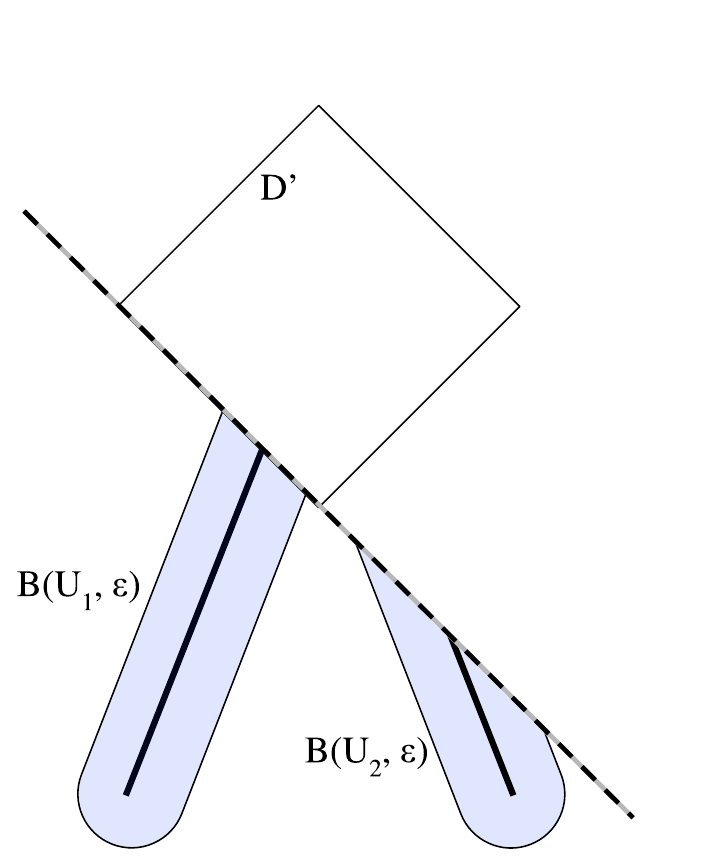} \hfill
\includegraphics[width=0.24\textwidth]{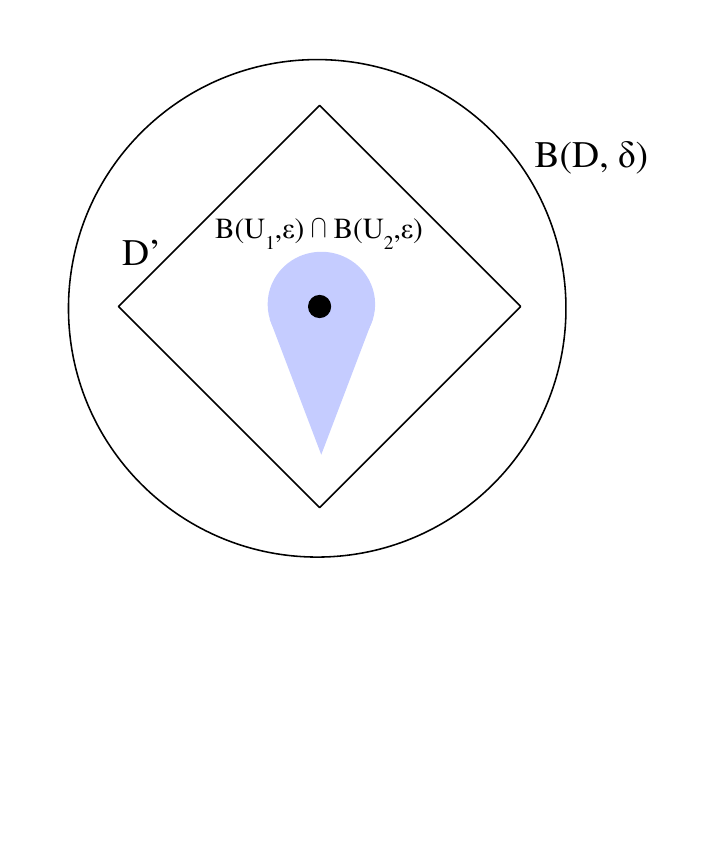}
\end{figure}

\begin{lemma} \label{lemma:thick-empty}
  Let $\{U_j : j \in \mathcal{J}\}$ be a finite collection of nonempty closed, convex sets with $\cap_{j\in\mathcal{J}} U_j = \emptyset$.
  Then there exists  $\epsilon_0 > 0$ such that, for all $0 < \epsilon \leq \epsilon_0$, $\cap_{j\in\mathcal{J}} B(U_j,\epsilon) = \emptyset$.
\end{lemma}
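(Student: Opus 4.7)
The plan is to reduce to a minimal offending subcollection whose proper subfamilies all have nonempty intersection, separate a pair of disjoint closed convex polyhedra via their (strictly positive) distance, and close the gap using Lemma~\ref{lemma:thick-nonempty} via the triangle inequality. Throughout I will use that the $U_j$ are polyhedra, which is the case in the paper's application by Lemma~\ref{lem:polyhedral-range-gamma}; without such a hypothesis the lemma as stated is false, as witnessed by $\{(x,y): y\ge e^{-x}\}$ and $\{(x,y): y\le 0\}$ in $\reals^2$, which are disjoint closed convex sets but at distance zero.

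Since $\mathcal{J}$ is finite, I first pick a minimal $\mathcal{J}_0 \subseteq \mathcal{J}$ with $\cap_{j\in\mathcal{J}_0} U_j = \emptyset$, and fix any $j_0 \in \mathcal{J}_0$. By minimality, $W := \cap_{j \in \mathcal{J}_0 \setminus \{j_0\}} U_j$ is nonempty, and is a closed convex polyhedron as an intersection of polyhedra; moreover $W \cap U_{j_0} = \emptyset$. I then use the fact that two disjoint closed convex polyhedra have strictly positive distance: their Minkowski difference $W + (-U_{j_0})$ is again a closed polyhedron, and it fails to contain the origin precisely because $W \cap U_{j_0} = \emptyset$, so $\eta := d(W, U_{j_0}) > 0$.

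Next, I apply Lemma~\ref{lemma:thick-nonempty} (read in its natural sense: for every target $\delta > 0$ some $\epsilon > 0$ works) to the subcollection $\{U_j : j \in \mathcal{J}_0 \setminus \{j_0\}\}$, whose intersection $W$ is nonempty, with target $\delta := \eta/3$, to obtain $\epsilon_0 > 0$ with $\cap_{j\in\mathcal{J}_0\setminus\{j_0\}} B(U_j,\epsilon_0) \subseteq B(W, \eta/3)$. I then set $\epsilon := \min(\epsilon_0, \eta/3)$. Any candidate $u \in \cap_{j\in\mathcal{J}_0} B(U_j,\epsilon)$ would satisfy both $d(u,U_{j_0}) < \eta/3$ and $d(u,W) < \eta/3$, so the triangle inequality forces $\eta = d(W,U_{j_0}) \le d(u,W) + d(u,U_{j_0}) < 2\eta/3$, a contradiction. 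Hence $\cap_{j\in\mathcal{J}_0} B(U_j,\epsilon) = \emptyset$, and since enlarging the collection to all of $\mathcal{J}$ only shrinks the intersection, this yields the lemma.

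The main obstacle is the positive-distance step: it genuinely requires polyhedrality (or at least some compactness or recession-cone control), and this is the only place where general closed convex sets fail. If one is willing to assume the implicit polyhedral hypothesis on the $U_j$ (which holds in the paper's application), the rest of the argument is a clean triangle-inequality combination of that base separation with the approximate containment guaranteed by Lemma~\ref{lemma:thick-nonempty}.
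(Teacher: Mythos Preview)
Your proof is correct and follows essentially the same line as the paper's: isolate one set $U_{j_0}$ from the (nonempty) intersection $W$ of the others, use positive distance between them, and invoke Lemma~\ref{lemma:thick-nonempty} plus the triangle inequality. Your minimal-subcollection device is just a direct way of landing in the paper's ``Case 1''; the paper instead inducts on $|\mathcal{J}|$, recursing whenever the intersection of all but one set is empty, which amounts to the same reduction.

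One point worth noting: your observation that polyhedrality (or some compactness substitute) is needed for the positive-distance step is entirely correct, and your counterexample shows the lemma is false for general closed convex sets. The paper's own proof makes exactly the same leap---asserting that disjoint closed convex sets are separated by a positive distance---without justification, so the implicit polyhedral hypothesis you identify is shared by both arguments and is indeed supplied in the application by Lemma~\ref{lem:polyhedral-range-gamma}.
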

\begin{proof}
  By induction on the size of the family.
  Note that the family must have size at least two.
  Let $U_j$ be any set in the family and let $U' = \cap_{j' \neq j} U_{j'}$.
  There are two possibilities.

  The first possibility, which includes the base case where the size of the family is two, is the case $U'$ is nonempty.
  Because $U_j$ and $U'$ are non-intersecting closed convex sets, they are separated by some distance $\delta$.
  So $B(U_j, \delta/3) \cap B(U', \delta/3) = \emptyset$.
  By Lemma \ref{lemma:thick-nonempty}, there exists $\epsilon'_0 > 0$ such that $\cap_{j'\neq j} B(U_{j'},\epsilon) \subseteq B(U', \delta/3)$ for all $0 < \epsilon \leq \epsilon'_0$.
  Pick $\epsilon_0 = \min\{\epsilon'_0,\delta/3\}$.
  Then for all $0 < \epsilon \leq \epsilon_0$, the intersection of $\epsilon$-thickenings is contained in the $(\delta/3)$-thickening of the intersection, which is disjoint from the $(\delta/3)$-thickening of $U_j$, which contains the $\epsilon$-thickening of $U_j$.

  The second possibility is that $U'$ is empty.
  This implies we are not in the base case, as the family must have three or more sets.
  By inductive assumption, for all small enough $\epsilon$ we have $\cap_{j' \neq j} B(U_{j'},\epsilon) = \emptyset$, which proves this case.
\end{proof}

\begin{corollary} \label{cor:thick-intersect}
  There exists $\epsilon_0 > 0$ such that, for any $0 < \epsilon \leq \epsilon_0$, for any subset $\{U_j : j \in \mathcal{J}\}$ of $\U$, if $\cap_j U_j = \emptyset$, then $\cap_j B(U_j,\epsilon) = \emptyset$.
\end{corollary}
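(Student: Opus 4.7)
My plan is to reduce the corollary to a finite minimum argument on top of Lemma~\ref{lemma:thick-empty}. The key observation is that by Lemma~\ref{lem:polyhedral-range-gamma}, the collection $\U$ is finite, so there are only finitely many subsets $\mathcal{J} \subseteq \U$ to consider, and in particular only finitely many subsets with empty intersection.

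Concretely, I would proceed as follows. First, enumerate the (finitely many) subfamilies $\{U_j : j \in \mathcal{J}_i\} \subseteq \U$ for which $\bigcap_{j \in \mathcal{J}_i} U_j = \emptyset$. Each $U \in \U$ is a closed polyhedron (again by Lemma~\ref{lem:polyhedral-range-gamma}), hence nonempty, closed, and convex, so each such subfamily satisfies the hypotheses of Lemma~\ref{lemma:thick-empty}. Applying Lemma~\ref{lemma:thick-empty} to each subfamily with any fixed choice of $\delta$ (the parameter $\delta$ plays no role in the conclusion we need here), we obtain an $\epsilon_i > 0$ such that $\bigcap_{j \in \mathcal{J}_i} B(U_j, \epsilon_i) = \emptyset$. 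Then set
\[
\epsilon \;=\; \min_i \epsilon_i,
\]
which is strictly positive because the index set is finite. Since thickening is monotone in the radius, i.e., $B(U_j, \epsilon) \subseteq B(U_j, \epsilon_i)$ whenever $\epsilon \leq \epsilon_i$, this single $\epsilon$ simultaneously witnesses the conclusion for every subfamily with empty intersection.

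I do not anticipate any real obstacle: the finiteness of $\U$ does all the work, and Lemma~\ref{lemma:thick-empty} supplies the per-subfamily radius. The only thing worth double-checking is that $\U$ consists of nonempty sets (so that Lemma~\ref{lemma:thick-empty} applies), which is immediate since each $U \in \U$ is of the form $\Gamma(p)$ for some $p \in \simplex$ and hence nonempty by definition of a property.
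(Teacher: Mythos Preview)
Your proposal is correct and is essentially identical to the paper's own proof, which simply says to apply Lemma~\ref{lemma:thick-empty} to each (of the finitely many) subsets and take the minimum $\epsilon$. Your version is more detailed (explicitly invoking finiteness of $\U$ via Lemma~\ref{lem:polyhedral-range-gamma}, checking nonemptiness, and noting monotonicity of thickening), but the argument is the same.
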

\begin{proof}
  For each subset, Lemma \ref{lemma:thick-empty} gives an $\epsilon_0 > 0$.
  We take the minimum over these finitely many subsets of $\U$.
\end{proof}

\begin{theorem} \label{thm:small-eps-thick}
  For all small enough $\epsilon$, the epsilon-thickened link $\psi$ (Construction \ref{const:eps-thick-link}) is a well-defined link function from $\R'$ to $\R$, i.e. $\psi(u) \neq \bot$ for all $u$.
\end{theorem}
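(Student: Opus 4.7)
The plan is to chain together the three technical lemmas already developed in this appendix: Lemma~\ref{lem:polyhedral-range-gamma} (finiteness and closedness of $\U$), Corollary~\ref{cor:thick-intersect} (thickenings preserve emptiness of intersections), and Lemma~\ref{lemma:calibrated-pos} (if elements of $\U$ intersect, so do their report sets). Fix an arbitrary point $u\in\reals^d$ and, for the construction in Definition~\ref{def:eps-thick-link}, let
\[ \U'(u) \;=\; \{U \in \U : u \in B(U,\epsilon)\}~. \]
Unwinding the definition, the resulting envelope satisfies $\Psi(u) = \bigcap_{U \in \U'(u)} R_U$ (intersected with $\R$, which is vacuous). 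Thus showing well-definedness amounts to proving $\bigcap_{U \in \U'(u)} R_U \neq \emptyset$ for every $u$.

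First I would note that by Lemma~\ref{lem:polyhedral-range-gamma}, $\U$ is a finite collection of closed (convex) polyhedra, so Corollary~\ref{cor:thick-intersect} applies and yields a single $\epsilon > 0$ small enough that, for \emph{every} subfamily $\{U_j\}_{j\in\mathcal{J}} \subseteq \U$, emptiness of $\bigcap_j U_j$ implies emptiness of $\bigcap_j B(U_j,\epsilon)$. This is the $\epsilon$ I will choose in the theorem.

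Next, observe that by construction $u \in B(U,\epsilon)$ for every $U \in \U'(u)$, so $u \in \bigcap_{U \in \U'(u)} B(U,\epsilon)$ and in particular this intersection is nonempty. Applying the contrapositive of Corollary~\ref{cor:thick-intersect} to the subfamily $\U'(u)$ gives $\bigcap_{U \in \U'(u)} U \neq \emptyset$. Lemma~\ref{lemma:calibrated-pos} then upgrades this to $\bigcap_{U \in \U'(u)} R_U \neq \emptyset$, which is precisely $\Psi(u) \neq \emptyset$. Hence $\psi(u)$ can be chosen in $\Psi(u)$, completing the argument.

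The main obstacle in this argument has already been absorbed into the preceding lemmas: showing that small thickenings of a finite family of closed convex polyhedra do not create spurious joint intersections (Lemma~\ref{lemma:thick-empty} and its corollary), and that genuine overlaps in $\U$ lift to overlaps of the corresponding report sets (Lemma~\ref{lemma:calibrated-pos}). With those in hand, the theorem is essentially a one-line application, and no new geometric or property-theoretic input is needed beyond selecting the single $\epsilon$ guaranteed by Corollary~\ref{cor:thick-intersect}.
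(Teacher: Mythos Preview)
Your proposal is correct and follows essentially the same approach as the paper's own proof: fix the $\epsilon$ from Corollary~\ref{cor:thick-intersect}, use it (in contrapositive form) to pass from nonempty intersection of thickenings at $u$ to nonempty intersection of the $U$'s themselves, and then invoke Lemma~\ref{lemma:calibrated-pos} to obtain a common report. The only minor differences are cosmetic: the paper explicitly separates the case $\U'(u)=\emptyset$ (where $\Psi(u)=\R$), which you handle implicitly via the empty-intersection convention, and you make the appeal to Lemma~\ref{lem:polyhedral-range-gamma} for finiteness and closedness of $\U$ explicit, whereas the paper leaves this implicit.
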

\begin{proof}
  Fix a small enough $\epsilon$ as promised by Corollary \ref{cor:thick-intersect}.
  Consider any $u \in \R'$.
  If $u$ is not in $B(U,\epsilon)$ for any $U \in \U$, then we have $\Psi(u) = \R$, so it is nonempty.
  Otherwise, let $\{U_j : j \in \mathcal{J}\}$ be the family whose thickenings intersect at $u$.
  By Corollary \ref{cor:thick-intersect}, because of our choice of $\epsilon$, the family themselves has nonempty intersection.
  By Lemma \ref{lemma:calibrated-pos}, their corresponding report sets $\{R_j : j \in \mathcal{J}\}$ also intersect at some $r$, so $\Psi(u)$ is nonempty.
\end{proof}

Theorem \ref{thm:thickened-separated}, which we restate here, is now almost immediate.
\thickenedseparated*
\begin{proof}
  We create $\psi$ using Construction \ref{const:eps-thick-link} with the $L_{\infty}$ norm.
  By Theorem \ref{thm:small-eps-thick}, for all small enough $\epsilon$, $\psi$ is well-defined everywhere.

  To prove separation, suppose $u$ and $p$ are given such that $d_{\infty}(u,U) \leq \epsilon$, where $U = \Gamma(p)$.
  Then in Construction \ref{const:eps-thick-link}, $\psi(u) \in \Psi(u) \subseteq R_U = \{r : \varphi(r) \in U\}$.
  By definition of embedding, $\varphi(r) \in U = \Gamma(p) \implies r \in \gamma(p)$.
  So we obtain $\psi(u) \in \gamma(p)$ whenever $d_{\infty}(u,\Gamma(p)) \leq \epsilon$, which proves $\epsilon$-separation of the link $\psi$.
\end{proof}

\section{General characteristics of polyhedra}
\subsection{Definitions and preliminaries}
\begin{definition}[Closed halfspace]
	For any $(w,b) \in \reals^d \times \reals$, let $H_{(w,b)}^+ := \{ x \in \reals^d \mid \inprod{x}{w} \geq b\}$ be the closed halfspace defined by $(w,b)$. 
\end{definition}
\begin{definition}[Hyperplane]
	For any $(w,b)\in\reals^d \times \reals$, let $H_{(w,b)} := \{ x \in \reals^d \mid \inprod{x}{w} = b\}$ be the hyperplane generated by $(w,b)$.
\end{definition} 
Observe that the hyperplane $H_{(w,b)}$ is the boundary of $H^+_{(w,b)}$.

\begin{definition}[Polyhedron - halfspace representation]
	A \emph{polyhedron} $P$ is an intersection of a finite set of closed halfspaces $\H = \{H^+_{(w_i,b_i)}\}_{i=1}^k$ presented in the form $P = \cap \H$.
\end{definition}
Observe that by the halfspace representation, a polyhedron need not be bounded.

\begin{definition}[Supports]
	A hyperplane $H$ \emph{supports} the polyhedron $P$ if 
	(i) $P \subseteq H^+$ or $P \subseteq -H^+$, and
	(ii)$H \cap \mathrm{bd}(P) \neq \emptyset$.
	Moreover, $H$ supports $P$ at $x$ if $x \in H \cap \mathrm{bd}(P)$.
\end{definition}

\begin{definition}[Face, facet]
  Let $P \subseteq \reals^d$ be a convex polyhedron.
  A halfspace $H^+_{(w,b)}$ is \emph{valid} for $P$ if $P \subseteq H^+_{(w,b)}$.
  A \emph{face} $F_{(w,b)}$ of the polytope $P$ is any set of the form
  \begin{equation*}
    F_{(w,b)} = P \cap H_{(w,b)}~,~
  \end{equation*}
  for any valid halfspace $H^+_{(w,b)}$.
  The dimension of a face $F$ is the dimension of its affine hull $\dim(F) := \dim(\affhull(F))$.
  A face $F$ with $\dim(F) = \dim(\affhull(P)) - 1$ is called a facet.
\end{definition}

\begin{claim}
  A face $F_{(w,b)}$ of the polyhedron $P$ 
  is nonempty if and only if $H_{(w,b)}$ is a supporting hyperplane of $P$.
\end{claim}

\begin{theorem}\label{thm:polyhedron-uniquely-gen-facets}
  Given a $d$-dimensional polyhedron $P \subseteq \reals^d$, 
  (i) there is a unique finite set of closed halfspaces $\H^*$ such that $P = \cap \H^*$, 
  (ii) for all finite sets of closed halfspaces $\H$ such that $P = \cap \H$, we have $\H^* \subseteq \H$, and
  (iii) $\{H_{(w,b)}\cap P \mid H^+_{(w,b)} \in \H^*\}$ is the set of facets of $P$.
\end{theorem}
\begin{proof}
	Since $P$ is $d$-dimensional in $\reals^d$, it therefore has nonempty interior.	
	\citet[Proposition 4.5(i)]{gallier2008notes} can then be applied to yield the existence of a unique finite set of closed halfspaces $\H^*$ uniquely determining $P$, allowing us to conclude (i).
	Additionally, (iii) is shown by~\citep[Proposition 4.5(ii)]{gallier2008notes}.
	
	It is just left to show (ii).
	By (i), we know that each $H_{(w,b)} \in \H^*$ uniquely determines a facet of $P$.
	Thus, $F_{(w,b)}$ is a facet of $P$, and is of dimension $d-1$.
	This implies the facet $F_{(w,b)}$ can be defined by $d$ affinely independent points (contained in $P$), whose affine hull is $H_{(w,b)}$.
	As halfspaces are uniquely determined, so is the facet $F_{(w,b)} = P \cap H_{(w,b)}$.
	As polyhedron are uniquely determined by their facets (by Minkowski's uniqueness theorem), we must have $H_{(w,b)} \in \H^*$.

\end{proof}

\subsection{Notation}
Within this appendix, we use some self-contained notation.
We will later consider losses over a finite set of outcomes $\Y$; to make notation consistent, we use $\reals^\Y_+$ throughout, and let $d := |\Y|+1$.

Fix a set $\V \subseteq \reals^\Y_+$, and consider the concave function $g_\V : x \mapsto \inf_{v \in \V}\inprod{v}{x} - \delta(x \mid \reals^\Y_+)$.
We denote the hypograph of $g_\V$ by $\hyp(g_\V) = \{(x,c) \mid c \leq g(x)\} \subseteq \reals^\Y_+ \times \reals$.

Given any $v$ in a given set $\V$, define $H^+_v := H^+_{(v, -1)}$, where $(v, -1)\in\reals^\Y_+\times\reals$ defines a halfspace.
Similarly, we denote $H_y^+ := H_{(e_y, 0)}^+$ for any $y \in \Y$; the latter will help us restrict our $\hyp(g_\V)$ to the nonnegative orthant. 
Similarly, we let $H_v := H_{(v,-1)}$ for $v \in \reals^\Y_+$ and define $H_y := H_{(e_y, 0)}$. 

Finally, given a set $\V \subseteq \reals^d$, we let $\H_{\V} = \{H_v^+ \mid v\in\V\}$ denote the set of halfspaces generated by $\V$, $\H_\Y = \{H_y^+ \mid y\in\Y\}$.
If $\V$ and $\Y$ are understood from context, we may denote $\H := \H_\V \cup\H_\Y$.

\subsection{Constructing a unique minimum set of facets from a finitely generated polyhedron}\label{appsubsec:phase1}
Suppose $\V$ is a finite set $\V \subset \reals_+^\Y$.
Throughout, we will work with a function $g := g_\V$ generated by $\V$ of the following form.

\begin{definition}\label{def:g-finite}
  Define the function $g_\V : \reals_+^\Y \to \reals_+$ by
  \begin{align*}
    g_\V(x) = \min_{v\in\V} \inprod{x}{v} - \delta(x \mid \reals_+^\Y)~,~
  \end{align*}
\end{definition}

First, we observe that the region generated by the intersection of the $\H_y$ halfspaces restricts the hypograph to the nonnegative orthant.
\begin{claim}\label{claim:x-nonneg-orthant-iff-intersection-HY}
  $\cap \H_\Y = \reals^\Y_+ \times \reals$.
\end{claim}
\begin{proof}
  The result follows if we show $x \in \reals^\Y_+ \iff (x,c) \in \cap \H_\Y$ for all $c \in \reals$.

  $\implies$
  Fix any $c \in \reals$.
  $x \in \reals^\Y_+ \iff x_y \geq 0$ for all $y \in \Y$.
  This means that for any $y \in \Y$, $(x,c) \in \{(x,c) \mid x_y \geq 0\} = H_y$.
  As $y$ and $c$ were arbitrary, this shows the forward direction.
  
  $\impliedby$
  $(x,c) \in \cap \H_\Y$ implies $x_y \geq 0$ for all $y \in \Y$, and therefore $x \in \reals^d_+$.	
\end{proof}

Throughout this section let $G:= \hyp(g_\V)$ for a fixed finite set $\V$.
Now, we can define $G$ as the intersection of halfspaces generated by $\V$ on the nonnegative orthant.
\begin{claim}\label{claim:G-intersection-H}
  Given a finite set $\V \subset \reals^\Y_+$, define $\H = \H_\V \cup \H_\Y$ and $G := \hyp(g_\V)$.
  Then $G = \cap \H$.
\end{claim}
\begin{proof}
  $(x,c) \in G \iff g(x) - c \geq 0 \iff \min_{v \in \V}\inprod{v}{x} - c \geq 0$ and $x \in \reals_+^\Y$, which is true if and only if $\inprod{v}{x} - c \geq 0 \,\forall v \in \V$ and $x_y \geq 0$ for all $y$.
  In turn, this statement holds if and only if $(x,c) \in H^+_v$ for all $v \in \V$ and in $H^+_y$ for all $y \in \Y$ respectively, so $(x,c) \in \cap \H$.
\end{proof}

We proceed with some observations about facets and dimension of $G$.

\begin{claim}\label{claim:G-full-dimensional}
  Given a finite, nonempty set $\V \subset \reals^d$ and $G := \hyp(g_\V)$, $G$ is $d$-dimensional.  
\end{claim}
\begin{proof}
  Since $g_\V$ is nonnegative on $\reals_+^\Y$, $G$ therefore contains $\{(x,c) \mid x\in\reals^\Y_+, c \leq 0\}$, which is $(|\Y| + 1)$-dimensional.
\end{proof}

This result, in conjunction with Theorem~\ref{thm:polyhedron-uniquely-gen-facets} yields a unique set of halfspaces $\H^*$ generating $\hyp(g_\V)$.

\begin{lemma}\label{lem:G-unique-facets-Hstar}
  Given a finite set $\V \subset \reals^\Y_+$, define $\H = \H_\V \cup \H_\Y$ and $G := \hyp(g_\V)$.
  There is some unique $\H^* \subseteq \H$ such that $G = \cap \H^*$. 
  Moreover, for each $H_{(w_i, b_i)} \in\H^*$, the face $F_{(w_i, b_i)} = H_{(w_i, b_i)} \cap G$ is a facet.
\end{lemma}
\begin{proof}
  Since $G$ is full-dimensional by Claim~\ref{claim:G-full-dimensional}, this follows immediately from Theorem~\ref{thm:polyhedron-uniquely-gen-facets}.
\end{proof}
We can also show that the set $\H_\Y$ is contained in $\H^*$ so that we can separate the facets generated by $\H^*$ into a partition of vertical and non-vertical facets of $G$.

\begin{lemma}\label{lem:HY-subset-Hstar}
  Given a finite set $\V \subset \reals^\Y_+$, consider the unique set of facet-defining halfspaces $\H^* \subseteq (\H_\V \cup \H_\Y)$ such that $\hyp(g_\V) = \H^*$.
  Then $\H_\Y \subseteq \H^*$.
\end{lemma}
\begin{proof}
  If there was a $y \in \Y$ such that $H_y = \{(x,c) \mid x_y \geq 0\}$ was not in $\H^*$, then we would either have some $c_1 > 0$ such that $\{(x,c) \mid x_y \geq c_1\} \in \H^*$, or we have a point $x$ such that $x_y < 0$ but $g(x) > -\infty$.
  The first cannot happen as we take $g$ to be defined (and finite) on $x = e_y \in \reals^\Y_+$ and is concave.
  Moreover, the second cannot be true by construction of $g$ including the $0-\infty$ indicator on $\reals^\Y_+$.	
\end{proof}

\begin{corollary}\label{cor:unique-set-loss-vectors-defining-facets}
  Suppose we are given a finite set $\V \subset \reals^\Y_+$, and consider the smallest (in cardinality) unique set $\H^*$ such that $\hyp(g_\V) =: G = \cap \H^*$.
  There is a unique finite set $\V^* \subseteq \reals^\Y_+$ such that $\H^* = \H_\Y \cup \H_{\V^*}$.
  Moreover, $F_v$ is a facet of $G$ for each $v\in\V^*$.
\end{corollary}
\begin{proof}
  Since $G$ is full-dimensional, all facets of $G$ are uniquely determined by the hyperplanes $H$ whose halfspaces $H^+$ compose $\H^*$ by Lemma~\ref{lem:G-unique-facets-Hstar}.
  Any facet $F$ must then be some intersection of an $H_y \cap G$ or $H_v \cap G$.
  Take $\H_{\V^*} := \H^* \setminus \H_{\Y}$, and $\V^*$ to be the unique set generating $\H_{\V^*}$.
  (Uniqueness of $\V^*$ follows from uniqueness of $\H^*$.)
  The moreover follows since every $H_v \in \H_{\V^*} \subseteq \H^*$ generates a facet.

\end{proof}

\begin{corollary}\label{cor:anything-gen-G-subset-Hstar}
  Let $\V \subset \reals^\Y_+$ be a finite set such that $\H := \H_\Y \cup \H_{\V}$ satisfies $\hyp(g_{\V}) = \cap \H$, and take $\V^*$ the unique set such that $\H^* = \H_\Y \cup \H_{\V^*}$.  
  Then $\V^* \subseteq \V$.
\end{corollary}
\begin{proof}
  $\H_\Y \cup \H_{\V^*} = \H^*$ by Corollary~\ref{cor:unique-set-loss-vectors-defining-facets}, and $\H^* \subseteq \H = \H_\Y \cup \H_{\V'}$ by Lemma~\ref{lem:HY-subset-Hstar}, ergo $\V^* \subseteq \V'$.
\end{proof}

We now show that we can equivalently construct $g_\V$ through the unique finite set $\V^*$ instead of the given set of vectors $\V$.
\begin{lemma}\label{claim:g-gen-by-Vstar}
	Given a finite set $\V$, consider the smallest (in cardinality) set $\V^* \subseteq \V$ such that $\hyp(g_\V) = \hyp(g_{\V^*})$.
  $g_\V(x) = \min_{v \in \V^*}\inprod{v}{x} - \delta(x \mid \reals_+^\Y) = g_{\V^*}(x)$
\end{lemma}
\begin{proof}
  $\hyp(g_\V) = \cap (\H_\Y \cup \H_\V)= \cap \H^* = \cap (\H_\Y \cup \H_{\V^*}) = \{(x,c) \in \reals_+^\Y \times \reals \mid \inprod{v^*}{x} \geq c$ for all $v^* \in \V^* \}$ where the first equality follows as $\H^* \subseteq \H$.
  Since $G$ is the hypograph of $g_\V$, this means $g_\V$ can be written as 
  $g_\V(x) = \min_{v \in \V^*}\inprod{v}{x} - \delta(x \mid \reals^\Y_+) = g_{\V^*}(x)$. 
\end{proof}

This series of results will let us reason about the Bayes risk (and $1$-homogeneous extension) of losses with finite representative sets in \S~\ref{subsec:lem-X-proof}.

\subsection{Infinitely generated polyhedron}\label{appsubsec:phase2}

Now suppose $L$ is a minimizable loss function.
For $x\in\reals^\Y_+$, consider the $1$-homogeneous extension of Bayes risk  $\risk{L}_+(x) := \inf_{r\in\R} \inprod{x}{L(r)}$, which we assume is polyhedral throughout.
Consider the function $g(x) = \risk{L}_+(x)$.
We now define $\V = L(\R) \subseteq \reals^\Y_+$ and $\H_{\V} = \{H_v^+ \mid v\in\V\}$.  
Unlike above, these could be infinite sets.  
Now let $\H = \H_\Y \cup \H_{\V}$; again, this may be infinitely generated.
\begin{claim}
	$\hyp(g_\V) = \cap\H$.
\end{claim}
\begin{proof}
	Observe that $x \in \reals^\Y_+ \iff (x, c) \in \cap \H_\Y$.
	Let $x \in \reals^\Y_+$.
	\begin{align*}
	(x,c) \in \hyp(g_\V)
	&\iff g_\V(x) \geq c & \text{ Definition of hypograph}\\
	&\iff \risk L_+(x) \geq c & \text{ As $g_\V=\risk L_+$}\\
	&\iff \inprod{v}{x} \geq c \,\, \forall v \in \V & \text{By def of $\risk L_+$ as the infimum over $v \in \V$ of}\\
	& & \text{the inner product with $x$ and minimizable.} \\
	&\iff (x,c) \in H^+_{v} \,\, \forall v \in L(\R) & \text{By definition of each halfspace}\\
	&\iff (x,c) \in \cap \H_\V & \text{Since true for all $v \in \V$} \\ 
	\end{align*}
	Combining the two equalities (e.g., $\cap \H = (\cap \H_\Y) \cap (\cap \H_\V)$), we have $\hyp(g_\V) = \cap \H$.
	
\end{proof}

If follows that $\hyp(g_\V)$ is a polyhedron, and $d+1$-dimensional as $\{(x,c) \mid x \in \reals^\Y_+, c \leq 0\} \subseteq \hyp(g_\V)$.
Thus, we can apply Theorem~\ref{thm:polyhedron-uniquely-gen-facets} to conclude that $\hyp(g_\V)$ has a unique minimum halfspace representation $\cap \H^*$, where $\H^* = \H_\Y \cup \H_{\V^*}$ for the unique finite set $\V^*$.
Moreover, we have $g_\V = g_{\V^*}$, as $\H^*$ is the unique minimum halfspace representation for $\hyp(g_\V)$.

\begin{corollary}\label{cor:finite-rep-set}
	Given a minimizable loss $L : \R \times \Y \to \reals$ with polyhedral extended Bayes risk $\risk L_+$, take $L(\R) = \V$ and $\V^*$ to be the unique smallest (in cardinality) set such that $\hyp(g_{\V}) = \cap (\H_\Y \cup \H_{\V^*})$.
	There is a finite set $\R^* \subseteq \R$ such that $L(\R^*) = \V^*$ (without duplicates).
\end{corollary}

\subsection{Projecting from $\reals^d_+$ to $\reals^\Y_+$}\label{subsec:project-pi}
Throughout this section, suppose that we are given a finite set $\V$ such that $\hyp(g_\V)$ is a polyhedron. 
Moreover, we denote the unique smallest (in cardinality; finite) set $\V^* \subseteq \V$ such that $g_\V = g_{\V^*}$.

We now consider projections from $\reals^\Y_+ \times \reals$ onto the positive orthant $\reals^\Y_+$.

\begin{claim}\label{claim:vstar-supporting-G}
	For all $x\in\reals^\Y_+$, there exists $v^*\in\V^*$ such that $H_{v^*}$ supports $G := \hyp(g_\V)$ at $(x,g_\V(x))$.
\end{claim}
\begin{proof}
	By Claim~\ref{claim:g-gen-by-Vstar} and Corollary~\ref{cor:finite-rep-set}, we know that $g_\V(x) = g_{\V^*}(x) = \min_{v \in \V^*}\inprod{v}{x}$.
	In particular, consider the normal $v^* \in \argmin_{v \in \V^*}\inprod{v}{x}$; we claim that $H_{v^*}$ supports $G$ at $(x,g(x))$.
	First, $G \subseteq H^+_{v^*}$ by definition of $G$ as the intersection of halfspaces including $H^+_{v^*}$.
	Thus, it is just left to show that $(x, \inprod{v^*}{x}) \in H_{v^*} \cap G$.
	$H_{v^*} = \{(x,c) \mid \inprod{v^*}{x} = c\}$.
	By definition of $g$, we have $(x,g(x)) = \inprod{v^*}{x}$, so $(x,g(x)) \in H_{v^*}$.
	Moreover, $(x,g(x)) \in G = \{(x,c) \mid g(x) \geq c\}$ trivially since $g(x) \geq g(x)$.
\end{proof}

Define the projection $\pi:\reals^\Y\times \reals \to \reals^\Y, (x,c) \mapsto x$.
This projected faces generated by $\V^*$ covers the nonnegative orthant.
\begin{corollary}\label{cor:projected-Vstar-faces-cover-pos-orthant}
	$\cup_{v\in\V^*} \pi(F_v) = \reals^\Y_+$.
\end{corollary}

Moreover, the projection $\pi$ preserves dimension of faces.

\begin{claim}\label{claim:pi-preserves-dim}
	For all $v\in\reals^\Y_+$, $\dim(F_v) = \dim(\pi(F_v))$.
\end{claim}
\begin{proof}
	As a reminder, we define the dimension of a polytope to be the dimension of its affine hull.
	Suppose we are given $|\Y|+1$ affinely independent vectors $z_i$ in $F_v$. 
	We claim their projections $\{\pi(z_i)\}$ are affinely independent.
	Let $a_1 + \ldots + a_{\Y+1} = 0$, such that $\sum_i a_i \pi(z_i) = 0$.
	We want to conclude that we must have $a_i = 0$ for all $i$, meaning they are affinely independent.
	
	Observe $z_i = (x_i, \inprod{v}{x_i})$ for all $i$; therefore, if $z_i \in F_v$ (e.g., $F_v$ supports $\hyp(g_\V)$ at $(x, \inprod{v}{x})$), then we also have $z_i \in H_v$.
	So $0 = \sum_i a_i \pi(z_i) = \sum_i a_i x_i$.
	Moreover, the sum $\sum_i a_i z_i = \sum_i a_i (x_i, \inprod{v}{x_i}) = (\sum_i a_i x_i, \inprod{v}{\sum_i a_i x_i}) = (\vec 0,0) = \vec 0$.
	Thus, since $a_i = 0$ for all $i$, the set $\{z_i\}$ is affinely independent and the dimensions of the affine hulls are therefore equal.
\end{proof}

Since we preserve the dimension of these projected spaces, we can now study equivalence of projected faces of the hypograph and regions of support of $g$ for any $v \in \V$.

\begin{lemma}\label{lem:projected-faces-iff-support-iff-argmin}
	Fix any $x \in \reals^\Y_+$.
	For any $v \in \V$, that the following are equivalent:

	(1) $(x,g_\V(x)) \in F_v$
	
	(2) $\inprod{v}{x}= g_\V(x)$
	
	(3) $v \in \argmin_{v' \in \V} \inprod{v'}{x}$
	
	(4) $x \in \pi(F_v)$
\end{lemma}
\begin{proof}
	\begin{align*}
	(1) \quad \quad (x,g(x)) \in F_v
	&\iff (x,g_\V(x)) \in \{(x',c) \in \hyp(g_\V) \mid \inprod{v}{x'} = c\} & \\
	&\iff  \inprod{v}{x} = g_\V(x) & (2)\\
	&\iff \inprod{v}{x} = \min_{v' \in \V}\inprod{v'}{x} & \\
	&\iff v \in \argmin_{v' \in \V}\inprod{v'}{x} & (3)
	\end{align*}
	This covers $1 \iff 2 \iff 3$.
	
	For $1 \iff 4$, the forward implication follows trivially by applying the definition of the projection $\pi$.
	For the reverse implication, consider some $x \in \pi(F_v)$.
	There must be a $c \in \reals$ so that $(x,c) \in F_v$.
	Expanding, this is actually saying $(x,c) \in \{(x',c') \in \hyp(g_\V) \mid \inprod{v}{x'} = c\}$.
	In particular, this is true when $c = \inprod{v}{x}$, which defines a face of $\hyp(g_\V)$ at $x$ if any only if $\inprod{v}{x} = g_\V(x)$.
	Therefore, we have $(x, g_\V(x)) \in F_v$.
\end{proof}

Now we can observe a set of normals $\V'$ generating faces whose projections cover $\reals^\Y_+$ if and only if the set contains $\V^*$.
This will translate to a set being representative for a loss if and only if it contains a finite minimum representative set (in settings where one exists.)
\begin{claim}\label{claim:Vprime-projected-faces-cover-iff-Vstar-subset-Vprime}
	For $\V' \subseteq \V$, we have
	$\cup_{v\in\V'} \pi(F_v) = \reals^\Y_+ \iff \V^* \subseteq \V'$.
\end{claim}
\begin{proof}
	We would like to show
	$\cup_{v\in\V'} \pi(F_v) = \reals^\Y_+ \iff \cap(\H_\Y \cup \H_{\V'}) = \hyp(g_\V) := G$ then apply Corollary~\ref{cor:anything-gen-G-subset-Hstar}.

	$\implies$ 
	Suppose $\cup_{v\in\V'} \pi(F_v) = \reals^\Y_+$.
	Fix $x \in \reals^\Y_+$.
	Since $\V' \subseteq \V$, and $G = \cap (\H_\Y \cup \H_\V)$, we immediately have $G \subseteq (\H_\Y \cup \H_{\V'})$.
	Therefore, we just need to show the other direction of inclusion.
	$(x,c) \in \cap \H_\Y$ for all $c \in \reals$ by Claim~\ref{claim:x-nonneg-orthant-iff-intersection-HY}, so it is left to show that $(x,c) \in \cap \H_{\V'}$, which yields the desired result.
	First, $(x,c) \in \cap \H_{\V'} \implies (x,c) \in H^+_{v'}$ for all $v' \in \V'$.
	This implies $c \leq \inprod{v'}{x}$ for all $v' \in \V'$.
	By Lemma~\ref{lem:projected-faces-iff-support-iff-argmin}, there exists a $v' \in \V'$ such that $(x,g(x)) \in F_{v'}$, and $\inprod{v'}{x} = g(x)$.
	Therefore, we have $c \leq \inprod{v'}{x} = g(x)$, and $(x,c) \in G$ follows since $G$ is the hypograph of $G$.

	$\impliedby$
	$G = \cap (\H_\Y \cup \H_{\V'})$ if and only if $G = \cap \H_{\V'}$ when restricting to $x \in \reals_+^\Y$.
	This implies that for all $x \in \reals_+^\Y$, there exists a $v' \in \V'$ such that $(x,g(x)) \in H_{v'} \cap G = F_{v'} \implies x \in \pi(F_{v'})$.
	As this is true for all $x \in \reals_+^\Y$, we have $\cup_{v \in \V'} \pi(F_{v}) = \reals_+^\Y$.
	
\end{proof}

For the set of normals $\V^*$ generating facets of $\hyp(g_\V)$, we have each of these projected facets being full-dimensional in the projected space as well.
\begin{claim}\label{claim:projected-Vstar-faces-full-dim}
	For all $v \in \V^*$, $\pi(F_v)$ is full dimensional in $\reals_+^\Y$.
\end{claim}
\begin{proof}
	By Corollary~\ref{cor:unique-set-loss-vectors-defining-facets}, $F_v$ is a facet of $\hyp(g_\V)$ in $\reals^d_+$, meaning it is $(d - 1)$-dimensional.
	Moreover, Claim~\ref{claim:pi-preserves-dim} states that the dimension of $F_v$ is preserved for each $v \in \V^*$.
	Thus, $\dim(F_v) = \dim(\pi(F_v)) = |\Y|$.
\end{proof}

Denote $\Lambda := \{\pi(F_v) \mid v \in \V^*\}$ as the set of projected facets generated by $\V^*$.

\begin{claim}\label{claim:Vprime-projected-faces-cover-iffprojected-faces-subsets}
	Suppose we are given a minimizable loss $L$ will polyhedral extended Bayes risk $\risk{L}_+(x) := \inf_{r \in \R} \inprod{x}{L(r)}$.
	Take $\R' \subseteq \R$ with $\V' := L(\R')$. 
	For $\V' \subseteq \V := L(\R)$, we have $\cup_{v \in \V'} \pi(F_v) = \reals_+^\Y \iff \Lambda \subseteq \{ \pi(F_v) \mid v \in \V'\}$.
\end{claim}
\begin{proof}
	$\implies$
	The result follows if $\V^* \subseteq \V'$, which is exactly the forward implication of Claim~\ref{claim:Vprime-projected-faces-cover-iff-Vstar-subset-Vprime}.
	Explicitly, for all $v \in \V^*$ we also have $v \in \V'$, so, $\pi(F_v) \in \Lambda \cap \{\pi(F_v) \mid v \in \V'\} = \Lambda$.
	
	$\impliedby$
	If $\Lambda \subseteq \{ \pi(F_v) \mid v \in L(\R') \}$, then $\cup_{\lambda \in \Lambda} \lambda \subseteq \cup_{v \in \V'} \pi(F_v)$.
	By Corollary~\ref{cor:projected-Vstar-faces-cover-pos-orthant}, we have $\cup_{\lambda \in \Lambda} \lambda = \reals_+^\Y$, so $\reals_+^\Y \subseteq \cup_{v \in \V'} \pi(F_v)$.
	The other direction of subset inequality following from $\hyp(g_\V)$ being finite only on $\reals_+^\Y$.	
\end{proof}

Given a loss $L$ with $\risk L_+$ polyhedral and $\V = L(\R)$, take $v = L(r)$ for any $r \in \R$.
As $H_{v}^+ \in \H$, it supports $\hyp(g_\V)$ or it is redundant in $\H$ (or both).

\begin{claim}\label{claim:faces-contained-in-facets}
	Given $\V \subseteq \reals^\Y$ satisfying the above requirements, if a $H_v \in \H$ supports $\hyp(g_\V) =: G$, then $F_{v} = H_{v} \cap G$ is a nonempty face of $G$, and thus a subset of a facet. 
\end{claim}
\begin{proof}
	Since $H_v$ supports $G$, we know that $G \subseteq H_v^+$ and $F_v$ is not empty.
	Moreover, $H^+_v$ is valid, and we have $F_v = F_{(v,-1)}$ is a face of $P$ by definition.
	
	Moreover, the faces of $G$ are all convex polyhedra.
	Any face of $G$ is must then be a lower-dimensional face of a facet, and therefore a subset.
\end{proof}

\begin{claim}\label{claim:vface-subset-some-vstar-face}
	For any $v \in \V$, the face $F_{v} \subseteq F_{v^*}$ for some $v^* \in \V^*$. 
\end{claim}
\begin{proof}
	As $\hyp(g_{\V^*})$ is polyhedral, each of its faces are convex polyhedra, and is also a face of some facet of $\hyp(g_{\V^*})$; these facets are defined by $\V^*$ and $\Y$ (Corollary 6).
	
	It then suffices to show that if $F_v$ is a face of the facet $F_y$ for some $y\in \Y$, then it must also be a face of $F_{v^*}$ for a $v \in \V^*$; equivalently, $F_v$ is not a facet, and thus $F_v \neq F_y$.
	Recall from the definition of a face that $F_v = G \cap H_{(v,-1)}$ and $F_y = G \cap H_{(e_y, 0)}$.
	As facets are (uniquely) determined by halfspaces, and $F_y$ is a facet, we either have $F_v \neq F_y$ (in particular, if $F_v$ is a facet) or $F_v \subsetneq F_y$ (if $F_v$ is not a facet).
	In both cases, we have $F_v \neq F_y$, and the result follows.
	
\end{proof}

\begin{corollary}\label{cor:projected-faces-subset-projected-facet}
	For $v,v^*$ such that $F_v \subseteq F_{v^*}$ and $v^*\in \V^*$, $\pi(F_{v}) \subseteq \pi(F_{v^*})$. 
\end{corollary}

Now, we can conclude that projected facets generated by $\V$ contain all other projected faces of $G$.
\begin{corollary}\label{cor:exists-vstar-projected-face-subset}
	For any $v \in \V$, there is a $v^* \in \V^*$ such that $\pi(F_v) \subseteq \pi(F_{v^*})$.
\end{corollary}
\begin{proof}
	This is exactly Claim~\ref{claim:vface-subset-some-vstar-face} and Corollary~\ref{cor:projected-faces-subset-projected-facet} chained together.
\end{proof}

\subsection{Translating to properties: projecting from $\reals^\Y_+$ to $\simplex$}\label{subsec:project-f}

Let $f_\V:\reals^\Y\to\reals_+\cup\{-\infty\}$ be a polyhedral concave function with $\dom(f_\V) = \simplex$.
\begin{claim}\label{claim:f-min-affine}
	We may write $f_\V(p) = \min_{v\in\V} \inprod{p}{v} + \delta(p \mid \simplex)$ for some finite set $\V \subset \reals^\Y_+$. 
\end{claim}
\begin{proof}
  We will think of $f_\V$ as defined $f_\V:\reals^\Y\to\reals_+\cup\{-\infty\}$  with $\dom(f_\V) = \simplex$.
  For $p \in \simplex$, we know $\sum_i p_i = 1$, and can write any inner product $\inprod{p}{b} - \beta = \inprod{p}{b} - \inprod{p}{\beta \ones} = \inprod{p}{b - \beta \ones}$.
  If $p \not \in \simplex$, then $f_\V(p) = -\infty$ and inner products are not used to compute $F$.
  Moreover, since $f_\V$ is polyhedral, it is finitely generated \citep[Proposition 19.1.2]{rockafellar1997convex} and can be written 
	\begin{align*}
	f_\V(p) &= h(p) - \delta(p \mid C)\\
		 &= \min(\inprod{p}{b_1} - \beta_1, \ldots, \inprod{p}{b_k} - \beta_k) - \delta(p \mid \simplex)\\
		 &= \min(\inprod{p}{b_1 - \beta_1 \ones}, \ldots, \inprod{p}{b_k - \beta_k \ones}) - \delta(p \mid \simplex)~.~
	\end{align*}
\end{proof}

This allows us to project $g_\V$ from $\reals^\Y_+$ to the simplex $\simplex$.
\begin{lemma}\label{cor:f-matches-g-on-simplex}
  For all polyhedral concave $f_\V : \reals^\Y_+ \to \reals_+ \cup \{-\infty\}$ with $\dom(f_\V) = \simplex$, there is a polyhedral concave function $g_\V : \reals^\Y_+ \to \reals_+$ with $\dom(g_\V) = \reals^\Y_+$ so that $f_\V(p) = g_\V(p)$ for all $p \in \simplex$.
\end{lemma}
Given the function $f_\V$, we consider $g_\V$ to be its extension and $L$ such that $\risk L_+ = g_\V$ so that we may use the tools in \S~\ref{appsubsec:phase1} and~\ref{appsubsec:phase2} to draw conclusions about the property $\Gamma := \prop{L}$ defined on the simplex.

Define the function $\theta(v) = \{p \in \simplex \mid \inprod{v}{p} = f_\V(p)\}$ as the level sets of the loss vector $v \in \V$, and the set $\Theta := \{\theta(v) \mid v \in \V^*\}$ to be the set of (minimal) level sets uniquely defined by the loss vectors.

\begin{claim}\label{claim:theta-is-pi-cap-simplex}
  For all $v \in \V$, $\theta(v) = \pi(F_v) \cap \simplex$.
\end{claim}
\begin{proof}

  Fix $p \in \simplex$.
  \begin{align*}
    p \in \theta(v)
    &\iff \inprod{v}{p} = f_\V(p) & \text{Definition of $\theta$}\\
    &\iff \inprod{v}{p} = \min_{v' \in \V}\inprod{v'}{p} & \text{$f_\V=g_\V$ on $\simplex$ (Cor.~\ref{cor:f-matches-g-on-simplex})}\\
    &\iff v \in \argmin_{v' \in \V}\inprod{v'}{p} &\\
    &\iff p \in \pi(F_v) & \text{Lemma~\ref{lem:projected-faces-iff-support-iff-argmin}}
  \end{align*}
\end{proof}

Moreover, we can interchangably write the level set in terms of the loss's minimizing report or the generating loss vector.

\begin{claim}\label{claim:level-set-is-projected-face}
  For all $r \in \R$ with $v = L(r)$, $\Gamma_r = \theta(v) = \pi(F_{v}) \cap \simplex$.
\end{claim}
\begin{proof}
  Let us rewrite
  \begin{align*}
    \Gamma_r
    &= \{p \in \simplex \mid r \in \argmin_{r' \in \R} \inprod{L(r')}{p}\}\\
    &= \{p \in \simplex \mid v \in \argmin_{v' \in \V} \inprod{v'}{p}\}\\
    &= \{p \in \simplex \mid \inprod{v}{p} = \min_{v' \in \V}\inprod{v'}{p}\}\\
    &= \{p \in \simplex \mid \inprod{v}{p} = f(p) \}\\
    &= \theta(v)
  \end{align*}
  The rest of the result follows from Claim~\ref{claim:theta-is-pi-cap-simplex}.
   
\end{proof}

\subsection{Proving Lemma~\ref{lem:X}}\label{subsec:lem-X-proof}
Now that we have translated from $\reals^d_+$ to $\reals^\Y_+$ in \S~\ref{subsec:project-pi} and from $\reals^\Y_+$ to $\simplex$ in \S~\ref{subsec:project-f}, we can take the final steps to prove Lemma~\ref{lem:X}.
\begin{lemma}\label{lem:g-1-homog}
	Suppose we have the sets $\V$ and the finite set $\V^* \subseteq \V$ such that $g_\V = g_{\V^*}$.
	Then $g_{\V^*}(x) = \min_{v \in \V^*}\inprod{v}{x}$ is $1$-homogeneous.
\end{lemma}
\begin{proof}
	If $x \not \in \reals^\Y_+$, then $c g(x) = -\infty = g(cx)$ for any $c > 0$.
	If $x \in \reals^\Y_+$, then we have $g(cx) = \min_{v \in \V^*}\inprod{v}{cx} = c \min_{v \in \V^*}\inprod{v}{x} = c g(x)$ for any $c > 0$ by linearity of the inner product.
\end{proof}

Again we assume $L$ is minimizable, and $\risk L_+$ is polyhedral with $\V := L(\R)$, and $\Gamma := \prop{L}$.
We now define the \emph{extended level set} $\bar \Gamma_r := \{x \in \reals^\Y_+ \mid \inprod{L(r)}{x} = \risk L_+(x)\}$.

\begin{lemma}\label{lem:levelset-to-extended-levelset}
	For any $r \in \R$ and $c > 0$, if $p \in \Gamma_r$, then $cp \in \bar \Gamma_r$. 
\end{lemma}
\begin{proof}
	Fix $r \in \R$ and $c > 0$.
	We have
	\begin{align*}
	p \in \Gamma_r
	&= \{p' \in \simplex \mid r \in \argmin_{r' \in \R} \inprod{L(r')}{p'} \} & \text {Definition of level set} \\
	&= \{p' \in \simplex \mid v \in \argmin_{v' \in \V} \inprod{v'}{p'} \} & \text {$\V = L(\R)$} \\
	&= \{p' \in \simplex \mid \inprod{v}{p'} = \min_{v' \in \V} \inprod{v'}{p'} \} & \text {$L$ minnable} \\
	&= \{p' \in \simplex \mid \inprod{v}{p'} = g_\V(p') \} & \text {Definition of $g_\V$} \
	&= \{p' \in \simplex \mid c \inprod{v}{p'} = c g_\V(p') \} &  \\
	&= \{p' \in \simplex \mid  \inprod{v}{cp'} = g_\V(cp') \} & \text {Lemma~\ref{lem:g-1-homog}} \\
	\implies cp
	&\in \{x \in \reals^\Y_+ \mid \inprod{v}{x} = g_\V(x)\}\\
	&= \bar \Gamma_r
	\end{align*}
\end{proof}

\begin{lemma}\label{lem:extended-levelset-equals-projected-face}
	For any $r\in \R$ with $v = L(r)$, $\bar \Gamma_r = \pi(F_v)$.  
\end{lemma}
\begin{proof}
	\begin{align*}
	\bar \Gamma_r
	&= \{x \in \reals^\Y_+ \mid \inprod{L(r)}{x} = \risk L_+(x)\} & \text{Definition of $\bar \Gamma_r$}\\
	&= \{x \in \reals^\Y_+ \mid \inprod{L(r)}{x} = g_\V(x)\} & \text{Assumption that $\risk L_+(x) = g_\V(x)$}\\
	&= \{x \in \reals^\Y_+ \mid \inprod{v}{x} = g_\V(x)\} & \text{$v = L(r)$}\\
	&= \pi(F_v) & \text{Since $F_v = \{(x,g_\V(x)) \mid \inprod{v}{x} = g_\V(x)\}$}\\
	\end{align*}
\end{proof}

\begin{claim}\label{claim:projected-faces-cover-RY-iff-representative}
	A set $\R' \subseteq \R$ with $\V' := L(\R')$ is representative for $L$ if and only if $\cup_{v \in \V'} \pi(F_v) = \reals^\Y_+$.  
\end{claim}
\begin{proof}
	$\implies$
	This proof follows from three lemmas: first, we observe that $g$ is $1$-homogeneous (Lemma~\ref{lem:g-1-homog}).
	Then we extend the notion of a level set $\Gamma_r$ to the nonnegative orthant $\bar \Gamma_r$, and show that any scalar transformation of a distribution in the level set is contained in the same (extended) level set via Lemma~\ref{lem:levelset-to-extended-levelset}.
	Finally, we show the extended level set is exactly the projection $\pi(F_v)$ (Lemma~\ref{lem:extended-levelset-equals-projected-face}).
	As a corollary, we chain the results to observe $\cup_{r \in \R'} \Gamma_r = \simplex \implies \cup_{r \in \R'} \bar \Gamma_r = \reals_+^\Y = \cup_{v \in L(\R')}\pi(F_v) = \reals^\Y_+$, yielding the forward implication.

	$\impliedby$
	Fix $p \in \simplex \subseteq \reals^\Y_+$.
	By the assumption, there is a $v \in \V'$ such that $p \in \pi(F_v)$.
	By Claim~\ref{claim:level-set-is-projected-face}, we have $p \in \pi(F_v) \cap \simplex = \Gamma_r$ for the $r \in \R'$ such that $v = L(r)$.
	As this is true for all $p \in \simplex$, we have $\R'$ representative.

\end{proof}

\begin{lemma}\label{lem:lemX1-rep-iff-subset-vectors}
	A finite set $\R' \subseteq \R$ with $\V' = L(\R')$ is representative if and only if $\V^* \subseteq \V'$.
\end{lemma}
\begin{proof}
	Chain Claim~\ref{claim:projected-faces-cover-RY-iff-representative} and Claim~\ref{claim:Vprime-projected-faces-cover-iff-Vstar-subset-Vprime} to yield the result.
\end{proof}

\begin{lemma}\label{lem:lemX-3-rep-iff-FDLS-subsets}
	A finite set $\R' \subseteq \R$ with $\V' = L(\R')$ is representative if and only if $\Theta \subseteq \{\theta(v) \mid v \in \V'\}$.
\end{lemma}
\begin{proof}
	Chain Claim~\ref{claim:projected-faces-cover-RY-iff-representative} and Claim~\ref{claim:Vprime-projected-faces-cover-iffprojected-faces-subsets} to yield the result.
\end{proof}

Recall $\Theta := \{\theta(v) \mid v \in \V^*\}$; it follows that this set is exactly the set of level sets of the property elicited by $L$.
Moreover, let $\R^*$ be the finite set of reports given by Corollary~\ref{cor:finite-rep-set}.

\begin{corollary}
	$\Theta = \{\Gamma_r \mid r \in \R^*\}$ 
\end{corollary}

\begin{lemma}\label{lem:fdls-exactly-theta}
	$\Theta = \{\Gamma_r \mid r\in\R, \dim(\Gamma_r) = |\Y|-1\}$.
\end{lemma}
\begin{proof}
	From Claim~\ref{claim:projected-Vstar-faces-full-dim}, we know $\Lambda$ is exactly the set of full-dimensional level sets in $\reals^\Y_+$.
	Each element of $\Lambda$ is $\pi(F_v)$ for some $v \in \V^*$.
	Take $r \in \R^*$ so that $v = L(r)$.
	By Claim~\ref{claim:level-set-is-projected-face}, we have $\theta(v) = \Gamma_r = \pi(F_v) \cap \simplex$ is full-dimensional relative to the simplex.
	The result follows.
\end{proof}

\begin{lemma}\label{lem:any-levelset-contained-in-minlevelset}
  For any $r \in \R$, there exists a $v^* \in \V^* =: L(\R^*)$ such that $\Gamma_r \subseteq \theta(v^*)$.
\end{lemma}
\begin{proof}
  Take $v = L(r)$.
  By Corollary~\ref{cor:exists-vstar-projected-face-subset}, there is a $v^* \in \V^* \subseteq \V$ such that $\pi(F_v) \subseteq \pi(F_{v^*})$.
  Therefore, $\pi(F_v) \cap \simplex \subseteq \pi(F_{v^*})\cap \simplex$.  
  We know $\theta(v) = \pi(F_v) \cap \simplex$ and similarly for $\theta(v^*)$ by Claim~\ref{claim:level-set-is-projected-face}.
  The result follows.
\end{proof}

Now this brings us to Lemma~\ref{lem:X}.
The framework in this appendix cues up this proof: any loss $L$ satisfying the assumptions of Lemma~\ref{lem:X} has some $g_{L(\R)} = \risk{L}_+$ as in this section that we can work with.

\lemmaX*
\begin{proof}
Consider $\risk L = f_\V$ for a finite set $\V$ by Claim~\ref{claim:f-min-affine}.
There is a polyhedral concave function $g_\V$ on $\reals^\Y_+$ matching $f_\V$ on $\simplex$ by Corolary~\ref{cor:f-matches-g-on-simplex}.
Moreover, consider $g_\V = \risk L_+$, and observe that $\risk L_+$ matches $\risk L$ on $\simplex$ as well.
By Corollary~\ref{cor:unique-set-loss-vectors-defining-facets}, we then have a finite set $\V^*$ of smallest cardinality such that $f_\V = f_{\V^*}$ and $g_\V = g_{\V^*}$.
Consider $\R^* \subseteq \R$ such that $\V^* = L(\R^*)$.
First, observe that $\R^*$ is representative for $L$ as a corollary of Claim~\ref{claim:projected-faces-cover-RY-iff-representative}.
Moreover, consider the follow set of level sets of $f_{\V^*}$, $\Theta = \{\theta_v \mid v \in \V^*\}$.

Now that we have the preliminaries, consider the itemized statements.
For $f_{\V^*}$, Lemma~\ref{lem:lemX1-rep-iff-subset-vectors} is exactly statement \eqref{item:X-rep-V}.
This immediately implies statement~\eqref{item:X-min-V}.
Moreover, Lemma~\ref{lem:lemX-3-rep-iff-FDLS-subsets} is exactly statement~\eqref{item:X-rep-Theta}, and again statement~\eqref{item:X-min-Theta} immediately follows.
Statement~\eqref{item:X-rep-contain-min} is a corollary of the existence of a finite representative set, as shown in Corollary~\ref{cor:finite-rep-set}.
Again, Statement~\eqref{item:X-full-dim} is exactly Lemma~\ref{lem:fdls-exactly-theta}.
Statement~\eqref{item:X-redundant} is exactly Lemma~\ref{lem:any-levelset-contained-in-minlevelset}.
Finally, Statement~\eqref{item:X-tight-embed} follows as a corollary of statement~\eqref{item:X-min-V} and Corollary~\ref{cor:tight-embed-min-rep}.
\end{proof}

\end{document}